\DeclareMathOperator*{\argmax}{arg\,max}
\newtheorem{theorem}{Theorem}
\newtheorem{corollary}{Corollary}
\newtheorem{lemma}{Lemma}
\newtheorem{Fact}{Fact}
\newtheorem{remark}{Remark}
\newtheorem{assumption}{Assumption}
\newtheorem{definition}{Definition}
\newcommand{\Xc}{\mathcal{X}}
\newcommand{\Rb}{\mathbb{R}}
\newcommand{\Eb}{\mathbb{E}}
\newcommand{\Ac}{\mathcal{A}}
\newcommand{\Dc}{\mathcal{D}}
\newcommand{\Nb}{\mathbb{N}}
\newcommand{\Gc}{\mathcal{G}}
\newcommand{\xtil}{\tilde{x}}
\newcommand{\Econf}{\mathcal{E}_{\mathrm{conf}}}
\newcommand{\rtil}{\tilde{r}}
\newcommand{\Bb}{\mathbb{B}}
\newcommand{\Sb}{\mathbb{S}}
\newcommand{\Pb}{\mathbb{P}}
\newcommand{\Yc}{\mathcal{Y}}
\newcommand{\Uc}{\mathcal{U}}
\newcommand{\Oc}{\mathcal{O}}
\newcommand{\Octil}{\tilde{\mathcal{O}}}
\newcommand{\Hc}{\mathcal{H}}
\newcommand{\tone}{\mathrm{Term\ I}}
\newcommand{\ttwo}{\mathrm{Term\ II}}
\newcommand{\ttwoa}{\mathrm{Term\ II.A}}
\newcommand{\ttwob}{\mathrm{Term\ II.B}}
\newcommand{\tthree}{\mathrm{Term\ III}}
\newcommand{\Pc}{\mathcal{P}}
\newcommand{\Nc}{\mathcal{N}}
\newcommand{\btil}{\tilde{b}}
\newcommand{\indic}{\mathds{1}}
\newcommand{\gray}[1]{\textcolor{gray}{#1}}
\newcommand{\polylog}{\mathrm{polylog}}
\begin{document}

\title{Directional Optimism for Safe Linear Bandits}
\date{}
\author{Spencer Hutchinson \and Berkay Turan \and Mahnoosh Alizadeh\\
\and
 University of California, Santa Barbara
 }

 \maketitle

\begin{abstract}
  The safe linear bandit problem is a version of the classical stochastic linear bandit problem where the learner's actions must satisfy an uncertain constraint at all rounds.
  Due its applicability to many real-world settings, this problem has received considerable attention in recent years.
  By leveraging a novel approach that we call \emph{directional optimism}, we find that it is possible to achieve improved regret guarantees for both well-separated problem instances and action sets that are finite star convex sets.
  Furthermore, we propose a novel algorithm for this setting that improves on existing algorithms in terms of empirical performance, while enjoying matching regret guarantees.
  Lastly, we introduce a generalization of the safe linear bandit setting where the constraints are convex and adapt our algorithms and analyses to this setting by leveraging a novel convex-analysis based approach.
\end{abstract}

\section{Introduction}

The stochastic linear bandit setting \cite{dani2008stochastic, rusmevichientong2010linearly,abbasi2011improved} is a sequential decision-making problem where, at each round, a learner chooses a vector action and subsequently receives a reward that, in expectation, is a linear function of the action.
This problem has found broad applications in fields ranging from online recommendation engines to ad placement systems, to clinical trials.
In the rich literature that has emerged, it is often assumed that any constraints on the learner's actions are \emph{known}.
In the real world, however, there are often constraints that are both \emph{uncertain} and need to be met at \emph{all rounds}, such as toxicity limits in clinical trials or sensitive topics for recommendation engines.
As a result, linear bandit problems with uncertain and roundwise constraints have received considerable attention in recent years from works such as \citet{amani2019linear}, \citet{khezeli2020safe}, \citet{pacchiano2021stochastic}, \citet{moradipari2021safe} and \citet{varma2023stochastic}.

A natural formulation of the safe linear bandit problem, initially studied by \citet{moradipari2021safe}, imposes a linear constraint on every action $x_t$ of the form $a^\top x_t \leq b$ where $a$ is unknown, $b$ is known and the learner gets noisy feedback on $a^\top x_t$.
To address this problem, algorithms have been proposed with $\Octil(d^{3/2}\sqrt{T})$ (\cite{moradipari2021safe}) and $\Octil(d\sqrt{T})$ (\cite{pacchiano2021stochastic,amani2021decentralized}) regret.
These algorithms operate by choosing actions from a pessimistically safe set using versions of Thompson sampling or upper confidence bound where the confidence set for the reward parameter is scaled by a \emph{fixed} constant in both cases.
In this work, we introduce an algorithm with matching $\Octil(d\sqrt{T})$ regret that avoids the use of this fixed scaling by implementing optimism with respect to \emph{directions}, and find that, when compared to the above-mentioned approach, our algorithm enjoys improved performance in problem instances with less restrictive constraints.
Leveraging this intuition, we then give algorithms that enjoy improved regret guarantees in terms of the problem dimension for well-separated problem instances and settings with finite star convex action sets.

\begin{table*}[t] 
  \centering
  \resizebox{\textwidth}{!}{
  \begin{tabular}{p{0.15\textwidth}p{0.18\textwidth}p{0.18\textwidth}p{0.18\textwidth}p{0.20\textwidth}}\toprule
    Algorithm & General  & Problem-dependent & Finite-star convex action set & Linked convex constraint\\
    \midrule
    \gray{Safe-LTS [1]} & \gray{$ \Octil ( d^{3/2} \sqrt{T} ) $} & - & - & - \\
    \midrule
    \gray{GenOP [2],[3]} & \gray{$ \Octil (  d \sqrt{T} )$ } & $\Octil\left(\frac{d^2}{\Delta} + \sqrt{T} \right)$ \mbox{Appendix \ref{apx:polb}} & - & $ \Octil (  d \sqrt{T} )$ \mbox{Appendix \ref{apx:convex}}\\
    \midrule
    ROFUL (Alg.~\ref{alg:main_alg}) & $ \Octil (  d \sqrt{T} )$ \mbox{Theorem~\ref{thm:main}} & $\Octil\left(\frac{d^2}{\Delta} + \sqrt{T} \right)$ \mbox{Corollary~\ref{thm:prob_dep_reg}} & - &$ \Octil (  d \sqrt{T} )$ \mbox{Appendix \ref{apx:convex}} \\
    \midrule
    Safe-PE (Alg.~\ref{alg:pe1}) & - & - & $\Octil (  \sqrt{d T} )$ \mbox{Theorem~\ref{thm:safepe}} & $\Octil (  \sqrt{d T} )$ \mbox{Appendix \ref{apx:convex}}  \\
    \bottomrule
  \end{tabular}
  }
  \caption{Algorithms and regret bounds developed in existing works and this paper for the safe linear bandit problem where $T$ is horizon and $d$ is problem dimension. 
  Existing work is shown in \gray{gray}, where the references are [1] for \cite{moradipari2021safe}, [2] for \cite{pacchiano2021stochastic} and [3] for \cite{amani2021decentralized}.
  Due to variations in problem settings in existing work, we use the name GenOP to refer to a generic upper confidence bound-based algorithm that uses the expanded confidence set approach from [2], [3] (see Section \ref{sec:comp} for details).
  }
  \label{tbl:comp}
\end{table*}

In fact, this approach is part of a broader perspective for the safe linear bandit problem in which we understand this setting as fundamentally a problem of choosing directions (rather than actions).
Since the set of feasible actions is unknown in safe linear bandits, the uncertainty in the problem comes from both the uncertainty in the reward and the uncertainty in the diameter of the feasible action set in each direction.
Accordingly, any algorithm for this setting should appropriately quantify both of these uncertainties to ensure low regret.
This understanding facilitates our contributions in both geometry-dependent regret guarantees and empirical performance.

\paragraph{Contributions}

Our contributions are summarized in Table \ref{tbl:comp} and in the following:
\begin{itemize}
  \item We propose a novel UCB-based algorithm, ROFUL, which enjoys $\Octil \left( d \sqrt{T} \right)$ regret.
  We provide some intuition and empirical evidence as to when ROFUL is preferred over existing approaches. (Section \ref{sec:restopt})
  \item  We introduce a notion of well-separated problem instance in safe linear bandits, and show that it is possible to achieve $\Octil\left(\frac{d^2}{\Delta} + \sqrt{T} \right)$ regret in this setting. (Section \ref{sec:prob_dep})
  \item We study the case when the action set is a finite star convex set and introduce a phased elimination-based algorithm, Safe-PE, which is proven to enjoy $\Octil \left( \sqrt{d T} \right)$ regret. (Section \ref{sec:spe})
  \item We introduce a generalization of the safe linear bandit problem, which we call \emph{linked convex constraints}, where each action $x_t$ needs to satisfy $A x_t \in \Gc$ for all $t \in [T]$ where $\Gc$ is an arbitrary convex set. We extend the ROFUL and Safe-PE algorithms and their analyses to this setting with a novel convex analysis-based approach. (Section~\ref{sec:conv})
  \item Simulation results provide validation for the theoretical guarantees and numerical comparison to existing approaches. (Section \ref{sec:num_lin})
\end{itemize}

\paragraph{Related Work}

Uncertain constraints have been considered in various learning and optimization problems, often under the umbrella of ``safe learning''.
This includes constrained Markov decision processes (CMDP), where the constraints take the form of limits on auxiliary cost functions (\citet{achiam2017constrained,wachi2020safe,liu2021learning,amani2021safe,bura2022dope,lindner2023learning}).
There have also been works that study convex optimization with uncertain constraints that are linear (\citet{usmanova2019safe,fereydounian2020safe}), and safe bandit optimization with Gaussian process priors on the objective and constraints (\citet{sui2015safe,sui2018stagewise}).
Although the Gaussian process bandit framework is able to capture a wider class of reward and constraint functions than linear bandits, safe Gaussian process bandit works typically make the stronger assumption that the constraint is not tight on the optimal action.
Some recent literature has also studied safe exploration of bandits (\citet{wang2022best}) as well as best arm identification under safety constraints (\citet{wan2022safe,lindner2022interactively,camilleri2022active}).
These works consider objectives other than regret minimization, i.e. accurate estimation of policy value or finding the best arm, and are therefore distinct from the regret minimization setting that we study here.

For the bandit setting in particular, various types of constraints have been considered, including knapsacks, cumulative constraints and conservatism constraints.
In knapsack bandits, pulling each arm yields both a reward and a resource consumption with the objective being to maximize the reward before the resource runs out (\citet{badanidiyuru2013bandits,badanidiyuru2014resourceful,agrawal2016linear,agrawal2016efficient,cayci2020budget}).
There are also works that consider various types of cumulative constraints on the actions, including ones with fairness constraints (\citet{joseph2018meritocratic,grazzi2022group}), budget constraints (\citet{combes2015bandits,wu2015algorithms}) and general nonlinear constraints for which the running total is constrained (\citet{liu2021efficient}).
Similarly, there are works that bound the cumulative constraint violation in the multi-armed (\citet{chen2022strategies}) and linear (\citet{chen2022doubly}) settings.
Similar to us, \citet{chen2022doubly} uses an optimistic action set, although their algorithm does not ensure constraint satisfaction at each round and instead aims for sublinear constraint violation.
These types of cumulative constraints differ from the setting we study, where constraints are roundwise and must hold at each round.
In the conservative bandit literature, the running total of the reward needs to stay close to the baseline reward (\citet{wu2016conservative, kazerouni2017conservative}).

Various works have also studied linear bandits with roundwise constraints.
In particular, \citet{amani2019linear} studies a stochastic linear bandit setting with a linear constraint, where the constraint parameter is the linearly transformed reward parameter and there is no feedback on the constraint value.
Also, \citet{khezeli2020safe} and \citet{moradipari2020stage} study a conservative bandit setting where the reward at each round needs to stay close to a baseline.
\citet{pacchiano2021stochastic} studies a setting where the learner chooses a distribution over the actions in each round and the constraint needs to be satisfied in expectation with respect to this distribution.
Although this is a slightly different type of constraint than we consider, their approach can be adapted to our setting which we discuss further in Section~\ref{sec:comp}.

Most relevantly, several works have studied linear bandit problems with an auxiliary constraint function that the learner observes noisy feedback of and needs to ensure is always below a threshold.
\citet{moradipari2021safe} studied such a setting with a linear constraint function and proposed the Safe-LTS algorithm.
Also, \citet{amani2021decentralized} studied a decentralized version of the same problem where the agents collaborate over a communication network.
Lastly, the recent work by \citet{varma2023stochastic} considers a safe linear bandit problem where different constraints apply to different parts of the domain and the learner only receives feedback on a given constraint when she selects an action from the applicable part of the domain.
However, all of these works use algorithms that choose actions from a pessimistic action set using either linear UCB or linear TS with a confidence set that is scaled by a fixed constant, which significantly differs from our proposed algorithms as detailed in Section~\ref{sec:comp}.
Also, they do not achieve improved regret guarantees for well-separated and finite star-convex settings as we do.

\section{Preliminaries}

\label{sec:pset}

\paragraph{Notation}

We use $\Oc( \cdot )$ to refer to big-O notation and $\Octil( \cdot )$ for the same except ignoring $\log$ factors. 
To refer to the p-norm ball and sphere of radius one, we use the notation $\Bb_p$ and $\Sb_p$ respectively, where $\Bb$ and $\Sb$ refers to the 2-norm ball and sphere.
For some $n \in \Nb$, we use $[n]$ to refer to the set $\{1,2,...,n\}$.
For a matrix $M$, its transpose is denoted by $M^\top$.
For a positive definite matrix $M$ and vector $x$, the notation for the weighted norm is $\| x \|_M = \sqrt{x^\top M x}$.
For a real number $x$, the ceiling function is denoted by $\lceil x \rceil$.

\paragraph{Problem Setup}

We study a stochastic linear bandit problem with a constraint that must be satisfied at all rounds (at least with high probability).
At each round $t \in [T]$, the learner chooses an action $x_t$ from the closed set $\Xc$.
She subsequently receives the reward $y_t = \theta^\top x_t + \epsilon_t$ and the noisy constraint observation $z_t = a^\top x_t + \eta_t$, where the reward vector $\theta \in \Rb^d$ and constraint vector $a \in \Rb^d$ are unknown, and $\epsilon_t$ and $\eta_t$ are noise terms.
Critically, the learner must ensure that $a^\top x_t \leq b$ for all $t \in [T]$, where $b > 0$ is known.
We will refer to the feasible set of actions as $\Yc := \{ x \in \Xc : a^\top x \leq b \}$.

In addition to guaranteeing constraint satisfaction, the learner also aims to minimize the pseudo-regret,
\begin{equation*}
  R_T := \sum_{t=1}^T \theta^\top \left( x_* - x_t \right),
\end{equation*}
where $x_* = \argmax_{x \in \Yc} \theta^\top x$ is the optimal constraint-satisfying action.
Going forward, we will use the term regret to refer to pseudo-regret.

We use the following assumptions.
\begin{assumption}
\label{ass:set_bound}
 The action set $\Xc$ is star-convex.
 Also, it holds that $\| x \| \leq 1$ for all $x \in \Xc$ and that $\theta^\top x_* > 0$.
\end{assumption}

\begin{assumption}
\label{ass:bounded}
    There exists positive real numbers $S_a$ and $S_\theta$ such that $\| a \| \leq S_a$ and $\| \theta \| \leq S_\theta$. Let $S := \max(S_a, S_\theta)$.
    Also, it holds that $\nu := \frac{b}{S_a} \leq 1$.
\end{assumption}

\begin{remark}
  If $\nu > 1$, then it is known that the constraint is loose and therefore the problem can be treated as a conventional linear bandit problem.\footnote{If $\nu > 1$, then for all $x \in \Xc$ it holds that $a^\top x \leq \| a \| \| x \| \leq S_a \nu < b$ given that $\| x \| \leq 1 < \nu$ for all $x \in \Xc$. }
  Therefore, our assumption that $\nu \leq 1 $ avoids this trivial setting and allows for cleaner presentation of results.
\end{remark}

\paragraph{Technical Approach}

Our approach to this problem is based on the perspective that it is fundamentally a problem of choosing \emph{directions} rather than actions and therefore any solution approach should be focused on choosing directions that will result in low regret.
This perspective comes from the understanding that the only viable solutions are actions that are in the maximally-scaled part of the feasible set (i.e. the set of $x \in \Yc$ such that $\zeta x \not\in \Yc$ for all $\zeta > 1$).\footnote{To see that the optimal action must be in the maximally scaled part of the set, suppose that it is not (i.e. that there exists $\zeta > 1$ such that $\zeta x_* \in \Yc$). It follows that the point $\zeta x_*$ has larger reward than $x_*$, i.e. $ \theta^\top (\zeta x_*) > \theta^\top x_*$, and therefore $x_*$ cannot be the optimal action (where we use $\theta^\top x_* > 0$ from Assumption \ref{ass:set_bound}).}
Therefore, the challenge lies in identifying the optimal direction given that the maximum scaling of this direction is the only viable solution in that direction.
Unlike the conventional linear bandit setting, however, the feasible set is unknown and therefore the uncertainty in the problem comes from both the uncertain reward parameter \emph{and} the uncertainty in the maximum scaling in each direction (i.e. $\zeta = \max\{ \alpha \geq 0 : \alpha u \in \Yc \}$ for each unit vector $u \in \Sb$).
As such, our solutions to the problem will aim to explicitly characterize both these uncertainties in order to choose directions that will result in low regret.
This will be realized via both an upper confidence bound-based algorithm (Section \ref{sec:restopt}) and a phased elimination-based algorithm (Section \ref{sec:spe}) which are each suited for different action set geometries.

\section{Restrained Optimism Algorithm}

\label{sec:restopt}

\IncMargin{1em}
\begin{algorithm2e}[t]
\caption{Restrained OFUL (ROFUL)}
\label{alg:main_alg}
\DontPrintSemicolon
\KwIn{$\Xc,\nu,b,\beta_t,\delta \in (0,1), \lambda \geq 1$}
\For{$t=1$ \KwTo $T$}{
    Update $\hat{a}_t := V_t^{-1} \sum_{k=1}^{t-1} x_k z_k$ and $\hat{\theta}_{t} := V_t^{-1} \sum_{k=1}^{t-1} x_k y_{k}$, where $V_t =\sum_{k=1}^{t-1} x_k x_k^\top + \lambda I$. \;
    Update $\Yc_t^p := \left\{ x \in \Xc  : \hat{a}_t^\top x + \beta_t \| x \|_{V_t^{-1}} \leq b \right\}$ and $\Yc_t^o := \left\{ x \in \Xc  : \hat{a}_t^\top x - \beta_t \| x \|_{V_t^{-1}} \leq b \right\}$. \label{lne:actsets}\;
    Find a $\xtil_t \in \argmax_{x \in \Yc_t^o} \left( \hat{\theta}_{t}^\top x + \beta_{t} \| x \|_{V_{t}^{-1}} \right)$. \label{lne:opt_act} \;
    Set $\btil_t = \begin{cases} \min\left(  \frac{ \nu}{\|\xtil_t\|} ,1 \right) & \mathrm{if} \ \xtil_t \neq \mathbf{0},\\
    1 & \mathrm{else.}
    \end{cases}$ \label{lne:pess_scale}\;
    Set $\mu_t = \max \left\{\mu \in \left[ 0, 1 \right] : \mu \xtil_t \in \Yc_t^p \right\}$ and $\gamma_t = \max \left( \btil_t , \mu_t \right)$. \label{lne:scale} \;
    Play $x_t = \gamma_t \xtil_t $ and observe $y_t,z_t$. \label{lne:act}\;
}
\end{algorithm2e}
\DecMargin{1em}

In this section, we first propose the algorithm \emph{Restrained Optimism in the Face of Uncertainty for Linear bandits} (ROFUL, Algorithm \ref{alg:main_alg}) to address the stated problem, and then provide general and problem-dependent regret analyses for ROFUL in Sections \ref{sec:gen_anal} and \ref{sec:prob_dep}, respectively.
Additionally, we provide a detailed comparison with existing algorithms in Section~\ref{sec:comp}.

\paragraph{Optimistic Direction Selection}

The key idea behind the ROFUL algorithm is that it uses an \emph{optimistic} action set ($\Yc_t^o$) to find which direction should be played to efficiently balance exploration and exploitation, while using a \emph{pessimistic} action set ($\Yc_t^p$) to find the scaling of this direction that will ensure constraint satisfaction.
In each round, the algorithm first finds the action $\xtil_t$ which maximizes the upper confidence bound over the optimistic set (line \ref{lne:opt_act}), and then finds the largest scalar $\gamma_t$ such that $\gamma_t \xtil_t$ is known to be in the pessimistic set (line \ref{lne:scale}).
The optimistic set overestimates the feasible set and the upper-confidence bound overestimates the reward, so $\xtil_t$ can be viewed as the optimistic action with respect to both the reward and the constraint.
As such, the algorithm uses $\xtil_t$ to determine which direction to play.
However, the action $\xtil_t$ may not satisfy the constraints, so it needs to be scaled down until it is within the pessimistic set and will therefore satisfy the constraints.

\paragraph{Confidence Sets for Unknown Parameters}

In order to construct the optimistic and pessimistic action sets as well as the upper confidence bound for the reward, we use confidence sets for the unknown parameters $\theta,a$.
To specify these confidence sets, we need to impose some structure on the noise terms.
In particular, the following assumption specifies that the noise terms $\epsilon_t, \eta_t$ are $\rho$-subgaussian conditioned on the history up to the point that $y_t,z_t$ are observed.

\begin{assumption}
  \label{ass:noise}
    For all $t \in [T]$, it holds that $\Eb[\epsilon_t | x_1, \epsilon_1, ..., \epsilon_{t-1}, x_t] = 0$ and $\Eb[\exp(\lambda \epsilon_t) | x_1, \epsilon_1, ...,  \epsilon_{t-1}, x_t] \leq \exp(\frac{\lambda^2 \rho^2}{2}), \forall \lambda \in \Rb$.
    The same holds replacing $\epsilon_t$ with $\eta_t$.
\end{assumption}

The specific confidence set that we use is from \citet{abbasi2011improved} and is given in the following.

\begin{lemma}[Theorem 2 in \citet{abbasi2011improved}]
  \label{lem:conf_sets}
  Let Assumptions \ref{ass:set_bound}, \ref{ass:bounded} and \ref{ass:noise} hold.
  Also, let
  \begin{equation}
    \label{eqn:beta}
    \beta_t := \rho \sqrt{d \log \left( \frac{1 + (t-1) / \lambda}{\delta / 2} \right) } + \sqrt{\lambda} S.
  \end{equation}
  Then with probability at least $1 - \delta$, it holds that both $| x^\top (\hat{\theta}_t - \theta)| \leq \beta_t \| x \|_{V_t^{-1}}$ and $| x^\top (\hat{a}_t - a)| \leq \beta_t \| x \|_{V_t^{-1}} $ for all $x \in \Rb^d$ and all $t \geq 1$.
\end{lemma}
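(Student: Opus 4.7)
The plan is to essentially instantiate the standard self-normalized concentration argument of \citet{abbasi2011improved} twice, once for the reward and once for the constraint, and combine the two failure events via a union bound; the factor $\delta/2$ appearing inside the log in \eqref{eqn:beta} is precisely what enables this union bound.

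First, I would fix attention on the reward estimator. Writing $S_t^\theta := \sum_{k=1}^{t-1} x_k \epsilon_k$ and using the definitions of $\hat{\theta}_t$ and $V_t$, a direct computation gives the decomposition
\begin{equation*}
\hat{\theta}_t - \theta = V_t^{-1} S_t^\theta - \lambda V_t^{-1} \theta.
\end{equation*}
For an arbitrary $x \in \Xc$, applying the generalized Cauchy--Schwarz inequality in the $V_t^{-1}$ weighted norm yields
\begin{equation*}
|x^\top(\hat{\theta}_t - \theta)| \leq \| x \|_{V_t^{-1}} \bigl( \| S_t^\theta \|_{V_t^{-1}} + \lambda \| \theta \|_{V_t^{-1}} \bigr).
\end{equation*}
The regularization term is controlled by $\| \theta \|_{V_t^{-1}} \leq \lambda^{-1/2} \| \theta \| \leq \lambda^{-1/2} S$ since $V_t \succeq \lambda I$, giving a contribution of at most $\sqrt{\lambda} S$. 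For the stochastic term, I would invoke the self-normalized martingale tail bound (Theorem 1 of \citet{abbasi2011improved}) with confidence level $\delta/2$: since $\epsilon_k$ is $\rho$-subgaussian conditioned on the past by Assumption~\ref{ass:noise} and $x_k$ is predictable, with probability at least $1 - \delta/2$ we have
\begin{equation*}
\| S_t^\theta \|_{V_t^{-1}} \leq \rho \sqrt{ d \log\!\left( \frac{1 + (t-1)/\lambda}{\delta/2} \right) }
\end{equation*}
simultaneously for all $t \geq 1$, where I have also used the determinant-trace inequality $\det(V_t) \leq (\lambda + (t-1)/d)^d$ combined with $\| x_k \| \leq 1$ to pass from the determinant form to the stated $d \log(\cdot)$ form. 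Adding the two contributions gives exactly $\beta_t$ as defined in \eqref{eqn:beta}.

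Next, I would repeat the identical argument verbatim for the constraint estimator $\hat{a}_t$, using the $\rho$-subgaussianity of $\eta_t$ from Assumption~\ref{ass:noise} and the bound $\|a\| \leq S_a \leq S$ from Assumption~\ref{ass:bounded}; this yields the analogous bound with probability at least $1 - \delta/2$. A union bound over the two failure events then gives that both inequalities hold for all $x \in \Xc$ and all $t \geq 1$ with probability at least $1 - \delta$, as claimed.

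The only non-routine step here is the self-normalized tail bound itself, but this is precisely the content of Theorem 1 of \citet{abbasi2011improved} and can simply be quoted; no new obstacle arises in our setting because Assumption~\ref{ass:noise} matches their hypothesis exactly and Assumption~\ref{ass:set_bound} gives the $\|x_k\| \leq 1$ needed for the determinant bound. The rest of the argument is linear algebra, so the proof is essentially a direct citation with the small bookkeeping note that the doubled confidence level handles the two parameters simultaneously.
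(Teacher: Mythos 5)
Your proposal is correct and matches the paper's treatment: the paper proves nothing here, simply quoting Theorem 2 of \citet{abbasi2011improved} once for $\theta$ and once for $a$ at confidence level $\delta/2$ each and union-bounding, which is exactly the argument you spell out (decomposition into the self-normalized martingale term plus the regularization term, the determinant--trace bound with $\|x_k\|\leq 1$, and the doubled confidence level). All the details you fill in --- the Cauchy--Schwarz step in the $V_t^{-1}$ inner product, the bound $\lambda\|\theta\|_{V_t^{-1}} \leq \sqrt{\lambda}S$, and the passage from the determinant form to $d\log\bigl(\tfrac{1+(t-1)/\lambda}{\delta/2}\bigr)$ --- are accurate.
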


It follows from Lemma \ref{lem:conf_sets} that, with high probability, the optimistic and pessimistic action sets contain and are contained by the true feasible set $\Yc$, respectively.
Since ROFUL only chooses actions from the pessimistic action set (or those with norm less than $\nu$), the actions chosen by the algorithm satisfy the constraints at all rounds with high probability.

\subsection{General Analysis}

\label{sec:gen_anal}

The ROFUL algorithm (Algorithm \ref{alg:main_alg}) is proven to enjoy sublinear regret and maintain constraint satisfaction in the following theorem.

\begin{theorem}
  \label{thm:main}
  Let Assumptions \ref{ass:set_bound}, \ref{ass:bounded} and \ref{ass:noise} hold.
  Then, with probability at least $1 - \delta$, the regret of ROFUL (Algorithm \ref{alg:main_alg}) satisfies
  \begin{equation}
  \label{eqn:reg_main}
      R_T \leq 2 \frac{\|\theta \| + S_a}{b} \beta_T \sqrt{2 d T \log\left(1 + \frac{T}{\lambda d} \right)},
  \end{equation}
  and $a^\top x_t \leq b$ for all $t \in [T]$.
\end{theorem}

Inspecting the bound in Theorem \ref{thm:main}, we can see that the regret is $\Oc\left( \frac{1}{b} d \sqrt{T} \log(T) \right)$, only considering $T$, $d$ and $b$.
This matches the orderwise regret of other safe upper-confidence bound approaches, as discussed in Section \ref{sec:comp}.
In the next section, we find that it is possible to achieve improved problem-dependent regret guarantees.

\begin{remark}
  The ROFUL algorithm and Theorem \ref{thm:main} easily extend to the setting where the action set $\Xc$ and constraint limit $b$ are allowed to vary in each round.
\end{remark}

\subsection{Problem-dependent Analysis}

\label{sec:prob_dep}

We also study the case where the optimal reward is well-separated from the reward of any feasible action that is not in the same direction as the optimal action.
To make this concrete, let the \emph{reward gap} be defined as\footnote{Unlike the problem-dependent analysis in \mbox{\cite{amani2019linear}}, our notion of a reward gap does not depend on how tight the constraints are on the optimal action.}
\begin{equation}
\label{eqn:delt}
  \Delta := \inf_{x \in \Yc:\ x \neq \alpha x_*\ \forall \alpha > 0 } \theta^\top (x_* - x),
\end{equation}
We study the case where $\Delta > 0$.
Note that the typical notion of a ``reward gap" in linear bandits, such as that used by the problem-dependent analysis in \cite{dani2008stochastic} and \cite{abbasi2011improved}, is not particularly useful in the safe linear bandit setting because it relies on the optimal reward being separated from the reward of any other action that the learner might play.
This could occur in the conventional linear bandit setting either when the feasible set is finite, which would not be a star convex set (except for the trivial case), or when the feasible set has finite extrema, which will not ensure that the played actions are well-separated in safe linear bandits given that the feasible set is unknown.
Nonetheless, when the constraint is loose (i.e. $\nu > 1$), a well-separated problem in our setting ($\Delta > 0$) implies a well-separated problem in the conventional linear bandit setting.

\paragraph{Wrong Directions are Rarely Selected}
We find that when $\Delta > 0$, we can establish a polylogarithmic bound on the number of times that ROFUL chooses the wrong direction, which is denoted by
\begin{equation*}
  B_T := \sum_{t=1}^{T} \indic\{ \nexists \ \alpha > 0 : x_t = \alpha x_* \}.
\end{equation*}
Specifically, the following theorem shows that ROFUL chooses $\Oc\left( \frac{1}{b^2 \Delta^2} d^2 \log^2(T) \right)$ wrong directions when ${\Delta > 0}$.
\begin{theorem}
  \label{thm:prob_dep}
  Let Assumptions \ref{ass:set_bound}, \ref{ass:bounded} and \ref{ass:noise} hold.
  If $\Delta > 0$, then the number of wrong directions chosen by ROFUL (Algorithm \ref{alg:main_alg}) satisfies
  \begin{equation*}
    B_T \leq \frac{32 S^2 \beta_T^2 d}{b^2 \Delta^2 }  \log\left(1 + \frac{T}{\lambda d} \right)
  \end{equation*}
  with probability at least $1 - \delta$.
\end{theorem}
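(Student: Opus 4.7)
The plan is to leverage the gap condition $\Delta > 0$ via a pigeonhole-style argument: at any round in which ROFUL picks a direction not collinear to $x_*$, the instantaneous regret $r_t$ must exceed $\Delta$, yet the proof of Theorem~\ref{thm:main} already bounds that same regret above by a constant multiple of $\beta_t \|x_t\|_{V_t^{-1}}$. Combining the two forces a uniform lower bound on $\|x_t\|_{V_t^{-1}}$ over wrong-direction rounds, and the elliptical potential lemma then caps how often this can happen.

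First I would condition on the event of Lemma~\ref{lem:conf_sets}, which holds with probability at least $1-\delta$. On this event the pessimistic set is contained in the true feasible set, so every action $x_t = \gamma_t \xtil_t$ produced by ROFUL is feasible: if $\gamma_t = \mu_t$ then $x_t \in \Yc_t^p \subseteq \Yc$; if $\gamma_t = \btil_t$ then $\|x_t\| = \btil_t\|\xtil_t\| \leq \nu$, so by Cauchy--Schwarz $a^\top x_t \leq S_a \nu = b$. Consequently, whenever the wrong-direction indicator $\indic\{\nexists\,\alpha > 0 : x_t = \alpha x_*\}$ fires, $x_t$ is a feasible point not collinear to $x_*$, and the definition of $\Delta$ in \eqref{eqn:delt} gives $r_t = \theta^\top(x_* - x_t) \geq \Delta$.

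Next I would extract the per-round upper bound implicit in the sketch of Theorem~\ref{thm:main}, namely
\begin{equation*}
  r_t \,\leq\, \tfrac{2(S_\theta + S_a)}{b}\,\beta_t\,\|x_t\|_{V_t^{-1}} \,\leq\, \tfrac{4 S \beta_T}{b}\,\|x_t\|_{V_t^{-1}},
\end{equation*}
where the $1/b$ prefactor comes from $1/\btil_t \leq S_a/b$ (which follows from $\|\xtil_t\| \leq 1$ together with $\nu \leq 1$). Chaining with $r_t \geq \Delta$ on wrong-direction rounds yields $\|x_t\|_{V_t^{-1}}^2 \geq b^2 \Delta^2 / (16 S^2 \beta_T^2)$. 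Since $\|x_t\| \leq 1$ and $\lambda \geq 1$ give $\|x_t\|_{V_t^{-1}}^2 \leq 1$, the standard elliptical potential lemma delivers $\sum_{t=1}^T \|x_t\|_{V_t^{-1}}^2 \leq 2 d \log(1 + T/(\lambda d))$. Restricting the sum to the $B_T$ wrong-direction rounds and dividing out the uniform lower bound then produces exactly the stated inequality.

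The main obstacle is the bookkeeping that produces the clean per-round regret bound: I am relying on the full proof of Theorem~\ref{thm:main} (not just its sketch) to establish $r_t \lesssim \beta_t \|x_t\|_{V_t^{-1}}/b$ uniformly in $t$, which in turn requires careful handling of Term~II via the scaling rule $\gamma_t = \max(\btil_t,\mu_t)$ and the fact that $x_t$ differs from $\xtil_t$ only by a scalar multiple. Once that per-round estimate is in hand, the remaining steps are a routine elliptical-potential calculation.
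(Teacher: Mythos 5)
Your proof is correct and arrives at exactly the stated constant, but it takes a more direct route than the paper. The paper's proof introduces the auxiliary point $v_t := \alpha_t x_t$ with $\alpha_t = \max\{\alpha : \alpha x_t \in \Yc\}$ (the maximal feasible scaling of the played direction), defines a ``directional regret'' $\tilde{R}_T = \sum_t \theta^\top(x_* - v_t)$, observes that each $\tilde{r}_t$ is either $0$ (correct direction, since then $v_t = x_*$) or at least $\Delta$, and then applies the self-bounding inequality $\tilde{r}_t \leq \tilde{r}_t^2/\Delta$ over \emph{all} rounds before invoking the elliptic potential lemma. That detour through $v_t$ is needed precisely because on correct-direction rounds the true instantaneous regret $r_t = \theta^\top(x_* - x_t)$ can lie strictly between $0$ and $\Delta$ (the played action is a scaled-down $x_*$), which would break the self-bounding trick if applied to $r_t$ directly. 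You sidestep this entirely: you never apply the trick, you only use the two facts that (i) on the event $\Econf$ every played action is feasible and hence $r_t \geq \Delta$ whenever the wrong-direction indicator fires, and (ii) $r_t \leq \frac{2(S_\theta + S_a)}{b}\beta_t\|x_t\|_{V_t^{-1}} \leq \frac{4S\beta_T}{b}\|x_t\|_{V_t^{-1}}$ uniformly (which is exactly the per-round bound assembled from Lemmas \ref{lem:gam2}--\ref{lem:reg_safe}, with $1/\nu = S_a/b$), and then pigeonhole over the wrong-direction rounds against $\sum_{t=1}^T\|x_t\|_{V_t^{-1}}^2 \leq 2d\log(1+\frac{T}{\lambda d})$. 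Your feasibility case split ($\gamma_t = \mu_t$ gives $x_t \in \Yc_t^p \subseteq \Yc$; $\gamma_t = \btil_t$ gives $\|x_t\| \leq \nu$ hence $a^\top x_t \leq S_a\nu = b$) is sound, and the arithmetic $B_T \cdot \frac{b^2\Delta^2}{16S^2\beta_T^2} \leq 2d\log(1+\frac{T}{\lambda d})$ reproduces the theorem's bound exactly. The one thing the paper's formulation buys that yours does not is an explicit bound on the cumulative directional regret $\tilde{R}_T$ itself, but that is not needed for this statement; for the theorem as stated your argument is a legitimate and arguably cleaner alternative.
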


\paragraph{Nearly Dimension-free Regret}

Leveraging Theorem \ref{thm:prob_dep}, we can devise a version of ROFUL that achieves improved regret guarantees when $\Delta > 0$ and known.
In particular, Theorem \ref{thm:prob_dep} implies that the optimal direction can be identified in a polylogarithmic number of rounds.
Once the optimal direction has been identified, the problem becomes one-dimensional and therefore does not suffer any dimensional dependence.

\begin{corollary}
  \label{thm:prob_dep_reg}
  Let Assumptions \ref{ass:set_bound}, \ref{ass:bounded} and \ref{ass:noise} hold.
  If $\Delta > 0$, consider the algorithm PD-ROFUL: \footnote{Detailed pseudo-code of PD-ROFUL is given in Algorithm \ref{alg:pd_oful} in Appendix \ref{apx:probdep}.}
  \begin{enumerate}
    \item Play ROFUL until any single direction has been played more than $\bar{B} := \frac{32 S^2 \beta_T^2 d}{b^2 \Delta^2 }  \log\left(1 + \frac{T}{\lambda d} \right)$ times. Let this direction be denoted by $u_*$.
    \item For the remaining rounds, play ROFUL (after restarting) for the 1-dimensional safe linear bandit problem of choosing $\xi_t \in \Rb_+$ and then playing~$\xi_t u_*$.
  \end{enumerate}
  Then, with probability at least $1 - 2 \delta$,
  \begin{equation*}
      R_T \leq \frac{4S}{b} \beta_{2 \bar{B} + 1} \sqrt{ 2 d (2 \bar{B} + 1)\log\left(1 + \frac{2 \bar{B} + 1}{\lambda d} \right)} + \frac{4 S}{b} \tilde{\beta}_{T} \sqrt{ 2 T\log\left(1 + \frac{T}{\lambda d} \right)}
  \end{equation*}
  where $\tilde{\beta}_{T}$ is $\beta_T$ with $d = 1$.
\end{corollary}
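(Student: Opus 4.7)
The proof naturally decomposes along the two phases of PD-ROFUL. The plan is to (i) bound the length and regret of Phase~1 using Theorem~\ref{thm:prob_dep} and Theorem~\ref{thm:main}, and (ii) reduce Phase~2 to a one-dimensional safe linear bandit for which a simplified form of the proof of Theorem~\ref{thm:main} applies.

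For Phase~1, I would first argue that on the high-probability event of Theorem~\ref{thm:prob_dep}, at most $\bar{B}$ rounds across the entire horizon $T$ are spent on non-optimal directions. Since Phase~1 terminates the first time some direction has been played more than $\bar{B}$ times, a pigeonhole argument shows that this must happen within at most $2\bar{B}+1$ rounds, and that the triggering direction must be the optimal one, so the $u_*$ identified by PD-ROFUL is indeed the optimal direction. Invoking Theorem~\ref{thm:main} on this truncated horizon $2\bar{B}$, together with the elementary bound $S_\theta + S_a \leq 2S$, then yields the first summand $\frac{8S}{b}\beta_{2\bar{B}}\sqrt{d\bar{B}\log(1+2\bar{B}/(\lambda d))}$ of the corollary.

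For Phase~2, once $u_*$ is fixed, the algorithm chooses actions only from the ray $\{\xi u_* : \xi \geq 0\}$ and plays the maximum pessimistic scaling, which is exactly a one-dimensional safe linear bandit with reward scalar $\theta^\top u_*$ and constraint scalar $a^\top u_*$. A fresh application of Lemma~\ref{lem:conf_sets} with $d=1$ (costing another $\delta$, which is why the statement carries $1-2\delta$) produces 1D confidence sets of radius $\tilde{\beta}_t$. In this one-dimensional problem, the ``choose optimistic direction'' step of ROFUL becomes trivial: Term~I in the proof sketch of Theorem~\ref{thm:main} vanishes because $\tilde{x}_t$ is forced to be collinear to $x_*$, and only the scaling loss (Term~II) contributes. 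Retracing that scaling argument in the degenerate 1D case, combined with the standard elliptical-potential summation with $d=1$, delivers the second summand $\frac{4S}{b}\tilde{\beta}_T\sqrt{2T\log(1+T/(\lambda d))}$.

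The main obstacle is the Phase~2 regret bound: although the 1D reduction is conceptually transparent, one must (a)~handle separately the case in which the constraint is inactive at $x_*$, where Phase~2 incurs no regret because the true, maximum optimistic, and maximum pessimistic scalings all coincide at the boundary of $\Xc$; and (b)~in the active-constraint case, convert the gap $\xi_* - \xi_t$ between the true and pessimistic scalings into the confidence-width quantity $\tilde{\beta}_t \|u_*\|_{V_t^{-1}}$, which is where the $1/b$ factor (equivalently $1/(a^\top u_*)$) enters from inverting the pessimistic constraint $\hat{a}_t^\top(\xi u_*) + \tilde{\beta}_t \xi \|u_*\|_{V_t^{-1}} \leq b$. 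Once this per-round bound is in place, summation over Phase~2 yields the $\tilde{\beta}_T\sqrt{T}$ scaling without any hidden $d$-dependence, and a union bound over the Theorem~\ref{thm:prob_dep} event and the Phase~2 confidence event completes the proof with probability at least $1-2\delta$.
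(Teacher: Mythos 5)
Your proposal follows essentially the same route as the paper's proof: a two-phase decomposition in which Phase~1 is bounded by combining the wrong-direction count of Theorem~\ref{thm:prob_dep} with a pigeonhole argument (giving $\tau - 1 \leq 2\bar{B}$ and correct identification of $u_*$) and then invoking the general bound of Theorem~\ref{thm:main} on that truncated horizon, while Phase~2 is treated as ROFUL run with $d=1$ under fresh confidence sets, with the two confidence events union-bounded to give $1-2\delta$. Your Phase~2 discussion is somewhat more detailed than the paper's (which simply asserts equivalence to the $d=1$ case and reapplies the general theorem), but the argument is the same and is correct.
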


Corollary \ref{thm:prob_dep_reg} indicates that when $\Delta > 0$ and known, it is possible to achieve $\Octil (\frac{d^2}{b^2 \Delta} + \frac{1}{b}\sqrt{T})$ regret.
When $T$ is large and $\frac{1}{\Delta}$ is $\Oc(1)$, this improves on the general regret bound in Theorem \ref{thm:main} because the second term dominates.
Concretely, as $T$ goes to infinity,  $R_T/\sqrt{T}\log(T)$ goes to $\Oc(\frac{1}{b})$ whereas in the general case (i.e. Theorem \ref{thm:main}), it goes to $\Oc(d \frac{1}{b})$.
\begin{remark}
  This problem-dependent analysis approach yields similar guarantees for existing safe linear bandit algorithms as shown in Appendix \ref{apx:polb}.
\end{remark}

\subsection{Comparison with Existing Algorithms}
\label{sec:comp}

In this section, we discuss the key differences between ROFUL and existing safe linear bandit algorithms.
Compared to ROFUL, which uses an optimistic action set to identify low-regret actions, existing safe linear bandit algorithms often choose actions directly from the pessimistic action set using either linear UCB (\cite{pacchiano2021stochastic,amani2021decentralized}) or linear TS (\cite{moradipari2021safe}) where an expanded confidence set is used in both cases.
In our specific setting, the linear UCB version of the expanded confidence set approach can be written as
\begin{equation}
  \label{eqn:oplb}
  x_t \in \argmax_{x \in \Yc_t^p} \left( \hat{\theta}_t^\top x + \kappa \beta_t \| x \|_{V_t^{-1}} \right),
\end{equation}
with an appropriately chosen parameter $\kappa \geq 1$.
The specific choice of $\kappa$ ensures that optimism holds, i.e. that $\hat{\theta}_t^\top x_t + \kappa \beta_t \| x_t \|_{V_t^{-1}} \geq \theta^\top x_*$, which is critical to ensuring that the algorithm enjoys sublinear regret.
We call this generic algorithm GenOP (as in Generic Optimism-Pessimism) in reference to the concept of optimism-pessimism that is often used in safe linear bandits (e.g. \cite{pacchiano2021stochastic}).
Note that the choice of $\kappa$ used in existing UCB-based algorithms is not appropriate for our setting because such algorithms were developed for slightly different settings (i.e. decentralized \cite{amani2021decentralized}, local constraints \cite{varma2023stochastic}, or constraints in expectation \cite{pacchiano2021stochastic}) so we show in Appendix \ref{sec:gen_oplb} that it is sufficient to choose $\kappa = 1 + \frac{2 S_\theta}{b}$, to get 
\begin{equation}
  \label{eqn:oplb_reg}
  R_T \leq (1 + \kappa) \beta_{T} \sqrt{ 2 d T\log\left(1 + \frac{T}{\lambda d} \right)}.
\end{equation}

Note that because GenOP uses a \emph{fixed} $\kappa$ parameter that must be chosen ahead of time, it is necessarily defined using worst-case quantities (such as $S_\theta$).
Conversely, ROFUL uses the optimistic action set and safe scaling $\gamma_t$ which are updated with empirical quantities in each round and therefore improve as more data is collected.
This suggests that ROFUL is preferable in ``easier" problem instances in which worst-case quantities are loose on the true empirical quantities.

We can gain additional insight into the respective benefits of either algorithm by comparing the regret bounds.
Specifically, it follows from \eqref{eqn:reg_main} and \eqref{eqn:oplb_reg} that ROFUL enjoys a tighter regret bound than GenOP when
\begin{equation}
\label{eqn:comp}
  S_\theta - \| \theta \| > S_a - b.
\end{equation}
The quantity on the left-hand side represents how loose the assumed bound on the reward parameter is on the true value (given that Assumption \ref{ass:bounded} specifies that $\| \theta \| \leq S_\theta$), while the right-hand side represents how loose the assumed bound on the constraint limit ($b$) is (given that Assumption \ref{ass:bounded} specifies that $\nu = b/S_a \leq 1$ and therefore that $b \leq S_a$).
Therefore, \eqref{eqn:comp} suggests that ROFUL is preferred over GenOP when the bound on the reward parameter is loose and the bound on the constraint limit is tight.
Our numerical experiments support this intuition as ROFUL outperforms GenOP on average when $b$ is large (and therefore $S_a$ is tighter on $b$), while the two algorithms perform similarly in the settings when $b$ is small (and therefore $S_a$ is looser on $b$).

\section{Safe Phased Elimination Algorithm}
\label{sec:spe}

In this section, we propose the algorithm \emph{Safe Phased Elimination} (Safe-PE) for the case when the action set is a finite star-convex set.
We provide a high-level description of Safe-PE here and give the full algorithm in Appendix \ref{apx:safepe}.
The assumption that the action set is a finite star-convex set means that it can be represented as
\begin{equation}
  \label{eqn:fin_star}
  \Xc = \bigcup_{i \in [k]} \{\alpha u_i : \alpha \in [0,\alpha_i] \},
\end{equation}
where $u_1, ..., u_k \in \Sb$ are unit vectors and $\alpha_1, ..., \alpha_k \in \Rb_{++}$ are the maximum scalings for each unit vector.
We find that in such a setting, it is possible to reduce the dependence on the problem dimension when $k \ll 2^d$.
The key insight is that a confidence set at a single action applies to all scalings of that action without the need for a union bound over a cover (or related technique).
This insight allows us to leverage the reduced dimension dependence offered by phased elimination algorithms in the safe linear bandit setting.
Nonetheless, it also introduces additional challenges due to the fact that the pessimistic action set varies from phase to phase.

\paragraph{Algorithm Description}
Our Safe-PE algorithm operates in phases $j = 1,2,...$ that grow exponentially in duration, and maintains a set of viable directions $\Ac$ and a pessimistic set of actions $\Yc^p$ that are updated in each round.
In particular, each phase $j$ proceeds as:
\begin{enumerate}
  \item For $2^{j-1}$ rounds, play the action with the largest confidence set width $\| \cdot \|_{V_t^{-1}}$ in each round.
  \item Eliminate directions from $\Ac$ that have low estimated reward.
  \item Update $\Yc^p$ by scaling the directions in $\Ac$ as large as possible while still being verifiably safe.
\end{enumerate}
This algorithm builds on existing phased elimination algorithms, including those from \citet{auer2002using}, \citet{chu2011contextual} and, specifically, \citet{valko2014spectral} and \citet{kocak2020spectral}.
However, Safe-PE differs in that it eliminates directions, instead of distinct actions, and maintains a set of safe actions to ensure constraint satisfaction.
Furthermore, it requires a looser criterion when eliminating directions to ensure that the optimal direction is not eliminated.

\paragraph{Regret Analysis}

As is commonly used for phased elimination algorithms \cite{auer2002using,chu2011contextual,lattimore2020learning}, we assume that the noise terms are independent subgaussian random variables.
\begin{assumption}
  \label{ass:pe_noise}
  The noise sequences $(\epsilon_t)_{t=1}^T$ and $(\eta_t)_{t=1}^T$ are sequences of independent $\rho$-subgaussian random variables.
\end{assumption}
With this, we state the regret guarantees for the Safe-PE algorithm in Theorem \ref{thm:safepe}.
\begin{theorem}
  \label{thm:safepe}
  Let Assumptions \ref{ass:set_bound}, \ref{ass:bounded} and \ref{ass:pe_noise} hold.
  When the action set is a finite star-convex set, the regret of Safe-PE (Algorithm \ref{alg:pe1} in Appendix \ref{apx:safepe}) is $\Octil(\frac{1}{b^2}\sqrt{d T})$.
\end{theorem}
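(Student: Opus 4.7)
The plan is to carry out a phased-elimination style analysis tailored to the two safety-induced complications: (i) along each direction $u_i$ the usable scaling is capped by a round-dependent pessimistic limit, and (ii) a direction must not be discarded merely because its current pessimistic scaling is small. The regret in each phase will be bounded by the current confidence widths along the surviving directions, and the phases will then be combined via a geometric sum to obtain the $\Octil(\sqrt{dT}/b^2)$ rate.

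The first technical ingredient is a per-direction confidence bound. Because $\Xc$ consists of only $k$ directions $u_1,\ldots,u_k$, Assumption~\ref{ass:pe_noise} together with standard Hoeffding-type concentration applied to each direction individually yields, with high probability and for all phases $j$, $|u_i^\top(\hat\theta_j - \theta)| \leq \beta_j \|u_i\|_{V_j^{-1}}$ and the analogous bound for $a$, where $\beta_j$ depends only on $\sqrt{\log(kT/\delta)}$ and carries no $\sqrt{d}$ factor---this is precisely the savings over Theorem~\ref{thm:main}. The second ingredient is a width bound: the greedy rule ``play the surviving direction of largest $\|\cdot\|_{V_t^{-1}}$'' for $n_j = 2^{j-1}$ rounds gives, by an elliptic-potential / Kiefer--Wolfowitz argument as in \citet{valko2014spectral} and \citet{kocak2020spectral}, $\max_{u_i \in \Ac_j}\|u_i\|_{V_j^{-1}} \leq c\sqrt{d/n_j}$ up to log factors at the end of the phase.

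With these in hand, I would analyze per-direction proxy rewards $\hat r_{i,j} := \hat\theta_j^\top(\mu^p_{i,j} u_i)$, where $\mu^p_{i,j}$ is the pessimistic safe scaling along $u_i$ and $\mu^\star_i$ its true maximum. The constraint confidence gives $0 \leq \mu^\star_i - \mu^p_{i,j} \leq O(\beta_j\|u_i\|_{V_j^{-1}}/b)$, which combined with the reward confidence yields an overall proxy-reward error $|\hat r_{i,j} - \theta^\top(\mu^\star_i u_i)| = O(\beta_j\|u_i\|_{V_j^{-1}}/b)$ and contributes one factor of $1/b$. The elimination criterion is loosened by exactly this slack, so that the upper confidence on $\hat r_{\star,j}$ covers $\theta^\top(\mu^\star_\star u_\star)$ and $u_\star$ survives every phase. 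Any surviving $u_i$ then satisfies $\theta^\top(\mu^\star_\star u_\star - \mu^p_{i,j+1} u_i) = O(\beta_j \max_\ell \|u_\ell\|_{V_j^{-1}}/b^2)$, the second $1/b$ arising from the safety coupling when comparing directions whose pessimistic scalings both depend on $1/b$. Summing the $O(\sqrt{d/n_j}/b^2)$ per-round regret over the $n_{j+1}$ exploratory rounds of phase $j+1$ gives $O(\sqrt{d\,n_{j+1}}/b^2)$ per phase, and a geometric sum over $j = 1,\ldots,\log_2 T$ yields the claimed bound.

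The main obstacle is calibrating the elimination slack in the way the paper describes as ``looser'': it must simultaneously (a) be large enough that $u_\star$ survives every phase even though $\mu^p_{\star,j}$ can be substantially below $\mu^\star_\star$ in early phases, so the pessimistic reward at $u_\star$ undershoots its true value, and (b) be small enough that the per-round regret along surviving directions still matches $O(\sqrt{d/n_j}/b^2)$. A related subtlety is that $\Yc^p$ evolves across phases while $V_t$ is only monotone; one must verify that a surviving direction's verifiable pessimistic reward cannot degrade as phases progress, which follows from monotonicity of $V_t$ together with shrinking widths along played directions. Once these are handled, the remainder is a routine combination of design-based width control and geometric phase summation.
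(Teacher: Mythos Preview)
Your plan matches the paper's approach: per-direction confidence with no $\sqrt d$ factor, greedy width control, a loosened elimination criterion to protect $u_\star$, and a geometric sum over phases. The final rate is correct, but three technical points in your sketch do not line up with how the argument actually closes.

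First, the width bound produced by the greedy rule applies to the \emph{scaled} actions $v_{i,j}=\zeta_{i,j}u_i\in\Yc_j^p$, not to the unit vectors $u_i$; the elliptic-potential argument yields $w_j:=\max_{x\in\Yc_j^p}\|x\|_{\bar V_j^{-1}}\le\sqrt{2d\log(\cdot)/n_j}$, and converting to $\|u_i\|_{\bar V_j^{-1}}$ already costs a factor $1/\zeta_{i,j}\le S/b$. Second, the extra $1/b$ you attribute to ``safety coupling when comparing directions whose pessimistic scalings both depend on $1/b$'' is not the mechanism the paper uses: the second factor comes from relating widths \emph{across phases}. The elimination slack at the end of phase $j$ involves $\|v_{i,j}\|_{\bar V_{j-1}^{-1}}$, whereas the phase-$(j{-}1)$ width bound controls only $\|v_{i,j-1}\|_{\bar V_{j-1}^{-1}}$; bridging them uses $\zeta_{i,j}/\zeta_{i,j-1}\le S/b$, which is the true source of the second $1/b$. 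Third, and most importantly, your resolution of the ``pessimistic reward cannot degrade'' subtlety via monotonicity of $V_t$ fails outright: the algorithm resets $V_{t_j}=\lambda I$ at the start of every phase, so $V_t$ is not monotone across phases. The paper instead enforces monotonicity of the scalings directly by setting $\zeta_{i,j+1}=\max(\zeta_{i,j},\mu_{i,j+1})$, and then absorbs the cross-phase width discrepancy via the ratio bound above rather than by any monotonicity of the design matrix.
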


Theorem \ref{thm:safepe} shows that, for the case when the action set is a finite star convex set, the regret of Safe-PE is $\Octil(\sqrt{d T})$ in terms of $d$ and $T$.
Note that the regret only depends on the number of directions ($k$) in log factors and therefore this improves on the regret of ROFUL in terms of $d$ when $k \ll 2^d$.
For example, if the directions are the coordinate directions, i.e. $u_i = e_i$, then $k = 2d$ and therefore the regret bound of Safe-PE is $\Octil(\sqrt{d T})$ since $d$ only appears in log factors.
However, if the directions are the corners of the hypercube, then $k = 2^d$ and the regret bound of Safe-PE is $\Octil(d \sqrt{T})$.
Also, note that the regret bound of Safe-PE depends on $\frac{1}{b^2}$, whereas the regret bound of ROFUL depends on $\frac{1}{b}$.
As such, the regret bound of ROFUL is still tighter than that of Safe-PE in some settings, e.g. when $b$ is small and $d = 1$.

\section{Extension to Linked Convex Constraints}
\label{sec:conv}

\begin{figure}[t]
  \centering{
  \includegraphics[width=0.5\columnwidth]{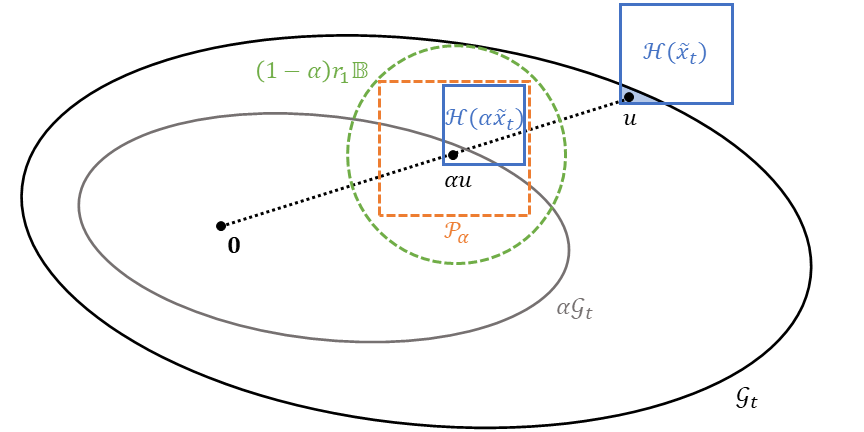}
  }
  \caption{Graphical representation of the approach for lower bounding $\gamma_t$ (in ROFUL) for the setting with linked convex constraints.}
  \label{fig:pf_vis}
\end{figure}

In this section, we generalize the design and analysis of the algorithms ROFUL, Safe-PE and GenOP to a novel setting which we call \emph{linked convex constraints}, where the output of the constraint function is multi-dimensional and must lie in an arbitrary convex set.
The key challenge in this setting is characterizing how far a point in the optimistic action set is from the pessimistic action set.
To address this, we leverage a theoretical tool from the zero-order optimization literature.
We only provide a description of key contributions in this section and leave the details of the algorithms to Appendix~\ref{apx:convex}.

\paragraph{Problem Description}
The problem setting is specified as follows.
At each round $t \in [T]$, the learner observes $z_t = A x_t + \eta_t$, where $A \in \Rb^{n \times d}$ is the unknown constraint matrix and $\eta_t \in \Rb^n$ is a vector noise term.
The learner must ensure that $A x_t$ is in the known convex set $\Gc$ for all $t \in [T]$.
The reward function and feedback mechanism are the same as the original setting described in Section \ref{sec:pset}.
We assume that there exists $r > 0$ such that $r \Bb \subseteq \Gc$.
Lastly, we assume that each element of $\eta_t$ satisfies the assumptions on the noise used for ROFUL (Assumption \ref{ass:noise}) or Safe-PE (Assumption \ref{ass:pe_noise}).

\paragraph{Analysis Techniques}
Although the design of the algorithms trivially extends to this setting, the regret analysis is more challenging.
In particular, it is difficult to characterize the distance from any point in the optimistic action set to the pessimistic action set.
To address this, we use an analysis tool that is popular in the zero-order optimization literature (e.g. \citet{flaxman2005online}).
This tool, given in Fact \ref{prop:shrunk}, allows us to consider a shrunk version of the constraint set in order to bound the scaling that is required to take any point in $\Yc_t^o$ to $\Yc_t^p$, i.e. $\gamma_t$ in ROFUL.
We use a similar approach to bound the scaling required to take any point in $\Yc$ to $\Yc_t^p$ for GenOP and Safe-PE.

\begin{Fact}
  \label{prop:shrunk}
  Let $\Dc$ be a convex set such that $r \Bb \subseteq \Dc$ for some $r \geq 0$.
  Then, for any $\alpha \in [0,1]$ and $x \in \alpha \Dc$, it holds that $x + (1 - \alpha) r \Bb \subseteq \Dc$.
\end{Fact}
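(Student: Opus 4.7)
The plan is to prove Fact \ref{prop:shrunk} directly from the definition of convexity, using the hypothesis that $r\Bb$ is contained in $\Dc$. The key observation is that any point $y$ in the translated ball $x + (1-\alpha) r\Bb$ can be written as a convex combination of a point in $\alpha \Dc$ (scaled back to $\Dc$) and a point in $r \Bb$.

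Concretely, I would start by fixing an arbitrary $y \in x + (1-\alpha) r \Bb$ and writing $y = x + (1-\alpha) v$ for some $v \in r \Bb$. Since $x \in \alpha \Dc$, there exists $z \in \Dc$ with $x = \alpha z$, so that
\begin{equation*}
y \;=\; \alpha z + (1-\alpha) v.
\end{equation*}
By hypothesis, $v \in r \Bb \subseteq \Dc$, and $z \in \Dc$. Since $\alpha \in [0,1]$, this expresses $y$ as a convex combination of two elements of $\Dc$, so convexity of $\Dc$ yields $y \in \Dc$. Because $y$ was arbitrary in $x + (1-\alpha) r \Bb$, we conclude $x + (1-\alpha) r \Bb \subseteq \Dc$.

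There is essentially no obstacle here: the statement is a one-line consequence of convexity once the decomposition $y = \alpha z + (1-\alpha) v$ is identified. The only minor care is handling the boundary cases $\alpha = 0$ (where $x$ must be $0$ and the claim reduces to $r\Bb \subseteq \Dc$) and $\alpha = 1$ (where the claim is trivial), both of which are consistent with the decomposition above. No additional lemmas or advanced tools are required.
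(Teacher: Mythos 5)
Your proof is correct. The paper itself gives no proof of Fact~\ref{prop:shrunk}; it is stated as a known tool from the zero-order optimization literature (citing \citet{flaxman2005online}) and used without justification. Your argument --- writing $y = \alpha z + (1-\alpha)v$ with $z \in \Dc$ and $v \in r\Bb \subseteq \Dc$ and invoking convexity --- is the standard one-line derivation of this fact, and your handling of the boundary cases $\alpha = 0$ and $\alpha = 1$ is consistent with it. Nothing is missing.
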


With this in hand, we can then describe our approach for lower bounding $\gamma_t$, which is illustrated in Figure~\ref{fig:pf_vis}.
Recall the definition of $\xtil_t$ in ROFUL, where it is known that $\xtil_t$ is in $\Yc_t^o$.
Then, the overall objective is to find some positive scaling $\alpha$ such that $\alpha \xtil_t$ is in $\Yc_t^p$ and then it follows that $\gamma_t \geq \alpha$.
To do so, we define the uncertainty set for the constraint function at point $x$ as the box $\Hc (x) := \hat{a}_t^\top x + \beta_t \| x \|_{V_t^{-1}} \Bb_\infty$ and note that $\Yc_t^p$ and $\Yc_t^o$ are precisely the set of $x \in \Xc$ such that $\Hc(x)$ has nonempty intersection with $\Gc$ and the set of $x \in \Xc$ such that $\Hc(x)$ is contained in $\Gc$, respectively.
First, we consider a point $u$ in the intersection of $\Gc$ and $\Hc (\xtil_t)$.
Such a point exists given that $\xtil_t$ is in $\Yc_t^o$.
Next, we scale $u$ by some non-negative scalar $\alpha$.
Note that $\alpha u$ is in $\Hc(\alpha \xtil_t)$ given that $\Hc$ is positive homogeneous, i.e. $\alpha \Hc(x) = \Hc(\alpha x)$ for any $x$.
In order to show that $\alpha \xtil_t$ is in $\Yc_t^p$, we need to show that $\Hc(\alpha \xtil_t)$ is contained in $\Gc$.
To do so, we first consider a set $\Pc_\alpha$ that is centered at $\alpha u$ but has twice the radius of $\Hc(\alpha \xtil_t)$ and therefore contains $\Hc(\alpha \xtil_t)$ (this is illustrated in Figure \ref{fig:pf_vis}).
We then use Fact \ref{prop:shrunk} to reason that, because $u$ is in $\Gc$, the ball $\alpha u + (1 - \alpha) r_1 \Bb$ is contained in $\Gc$.
Therefore, we choose $\alpha$ such that $(1 - \alpha) r_1 \Bb = 2 \alpha \sqrt{n} \beta_t \| \xtil_t \|_{V_t^{-1}}$, where the $\sqrt{n}$ is necessary to bound an infinity-norm ball with a 2-norm ball.
Some simple algebra shows that $\gamma_t \geq 1 - \frac{2 \sqrt{n}}{r} \beta_t \| x_t \|_{V_t^{-1}}$.

\section{Numerical Experiments}
\label{sec:num_lin}

\begin{figure*}[t]
  \begin{center}
  \begin{subfigure}[t]{0.24\textwidth}
    \centering\includegraphics[width=\textwidth]{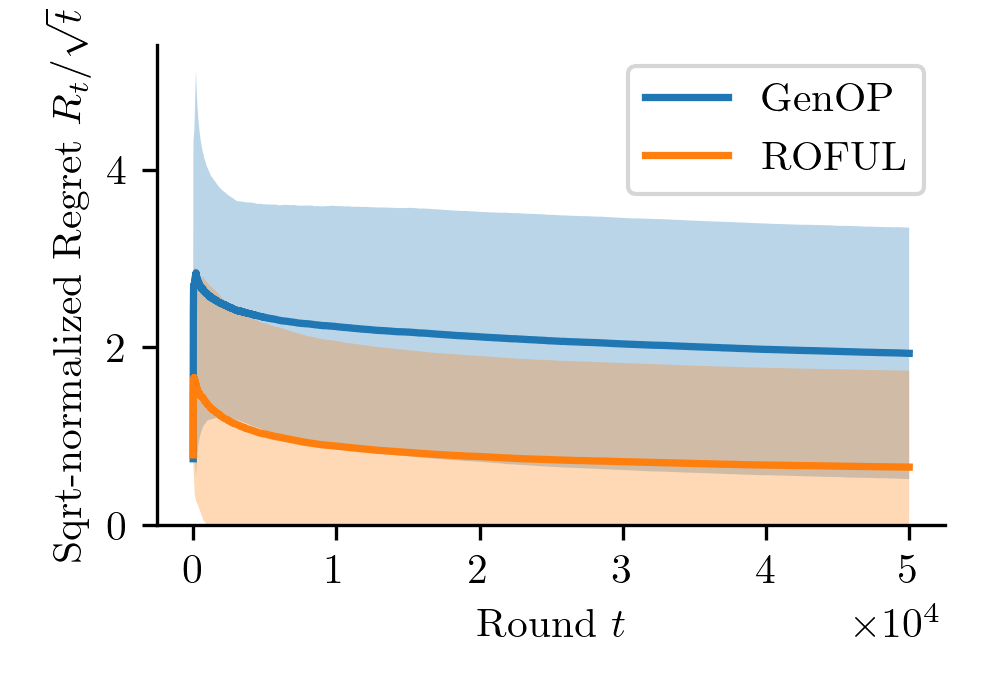}
    \caption{Linear constraints with large $b$.}
    \label{fig:sims:a}
  \end{subfigure}
  \hfill
  \begin{subfigure}[t]{0.24\textwidth}
    \centering\includegraphics[width=\textwidth]{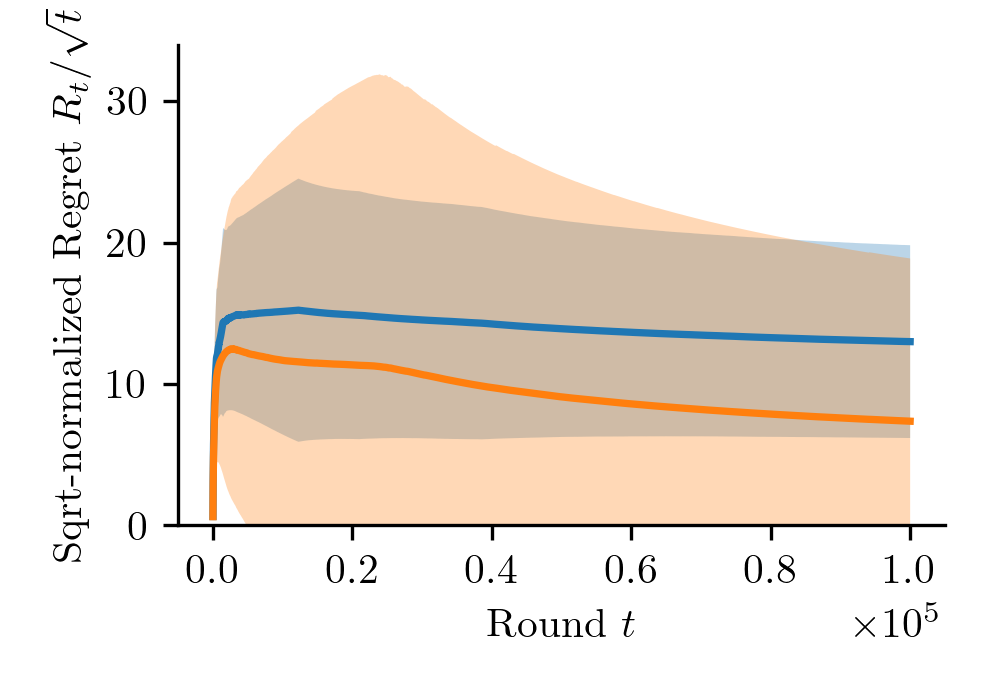}
    \caption{Linear constraints with small $b$.}
    \label{fig:sims:b}
  \end{subfigure}
  \hfill
  \begin{subfigure}[t]{0.24\textwidth}
    \centering\includegraphics[width=\textwidth]{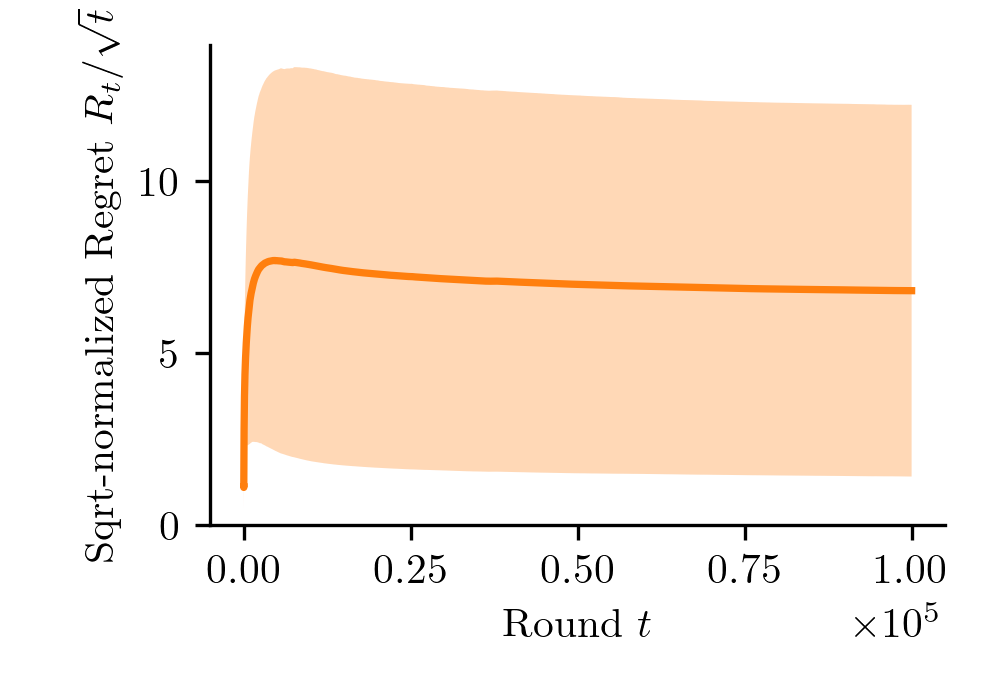}
    \caption{Linked convex constraints.}
    \label{fig:sims:c}
  \end{subfigure}
  \hfill
  \begin{subfigure}[t]{0.24\textwidth}
    \centering\includegraphics[width=\textwidth]{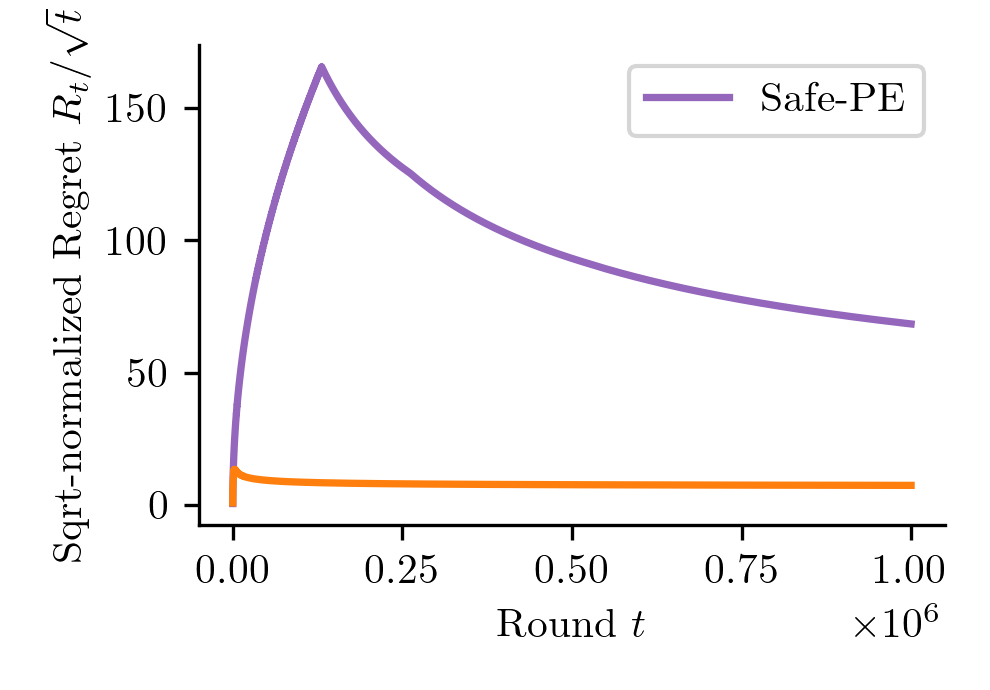}
    \caption{Star-convex multi-armed bandit.}
    \label{fig:sims:d}
  \end{subfigure}
  \end{center}
  \vspace{-0.1in}
  \caption{Simulation results of our proposed algorithms (ROFUL, Safe-PE) and generic expanded confidence set algorithm GenOP (see Section \ref{sec:comp}).}
  \label{fig:sims}
\end{figure*}

In this section, we numerically validate the theoretical guarantees and assess the performance of the proposed algorithms.
Note that we only give a high-level description of the simulations in this section.
The details of the experimental settings and additional results are given in Appendix~\ref{apx:num_exp}.

First, we consider a setting with a linear constraint.
We study the case when $b$ is large (Figure \ref{fig:sims:a}) and when $b$ is small (Figure \ref{fig:sims:b}).\footnote{The code used to generate these results is available at \url{https://github.com/shutch1/Directional-Optimism-for-Safe-Linear-Bandits}.}
We simulate ROFUL and GenOP for $30$ trials for each case, where $b$ is uniformly sampled in the interval $[0.25,1]$ for the first case and in the interval $[0.05,0.25]$ for the second case.
The action set is taken to be a finite star-convex set with $10$ directions that are randomly sampled in each trial.
Furthermore, the reward vector $\theta$, constraint vector $a$, constraint limit $b$ and noise realizations $\epsilon_t, \eta_t$ are also randomly sampled in each trial.
The average and standard deviation of the regret normalized by $\sqrt{t}$ are shown in Figures \ref{fig:sims:a} and \ref{fig:sims:b}.
These plots suggest that, when $b$ is large, ROFUL outperforms GenOP in the aggregate.
When $b$ is small, the average performance of the two algorithms is similar, although GenOP enjoys a smaller standard deviation than ROFUL.

Next, we consider a setting with convex constraints.
In particular, we study the case where the constraint set is a ball, i.e. $\Gc = b\Bb$ for scalar $b$.
We simulated ROFUL for $30$ trials in each setting, where constraint set radius $b$, the reward vector $\theta$, constraint vector $a$ and noise realizations $\epsilon_t, \eta_t$ are all randomly sampled.
The average and standard deviation of the regret normalized by $\sqrt{t}$ is shown in Figure \ref{fig:sims:c}.
In this plot, ROFUL converges to constant $\sqrt{t}$ regret.
We provide additional results for the case when $\Gc$ is an infinity-norm ball in Figure \ref{fig:box} in Appendix \ref{apx:num_exp}.

Lastly, we consider a star-convex multi-armed bandit with results shown in Figure \ref{fig:sims:d}.
In this setting, the action set only includes the coordinate directions.
We simulate both ROFUL and Safe-PE in this setting with $d=10$.
The regret normalized by $\sqrt{t}$ is shown in Figure \ref{fig:sims:d}.
From this plot, it is clear that ROFUL outperforms Safe-PE despite the fact that Safe-PE enjoys a tighter regret bound in terms of the problem dimension.
In fact, it is well known that UCB-based algorithms often empirically outperform elimination-based algorithms despite the orderwise tighter regret bound, as discussed by \citet{valko2014spectral} and \citet{chu2011contextual}.
Simulation results for PD-ROFUL (specified in Corollary \ref{thm:prob_dep_reg}) are given in Appendix \ref{apx:num_exp}.

\section{Conclusion}

In this work, we take a novel approach to the safe linear bandit problem in which we view it as fundamentally a problem of choosing directions rather than actions.
We find that this approach leads to improvements in empirical performance in certain problem instances as well as tighter geometry-dependent regret bounds.
An interesting direction for future work is to investigate if this approach yields similar gains when applied to related safe learning problems, such as constrained MDPs or safe Gaussian process optimization.

\subsubsection*{Acknowledgements}

This work was supported by NSF grant \#1847096.

\bibliographystyle{plainnat}
\bibliography{references}

\onecolumn

\startcontents[appendices]
\printcontents[appendices]{}{1}{\section*{Contents}}

\appendix

\section{Proof of Theorem \ref{thm:main}}

In this section, we prove the general regret bound given in Theorem \ref{thm:main}.
First, we introduce some notation.
Let the event that the confidence sets hold be defined as
\begin{equation}
  \label{eqn:econf}
    \Econf := \{ | x^\top (\hat{\theta}_t - \theta) | \leq \beta_t \| x \|_{V_t^{-1}}, | x^\top (\hat{a}_t - a) | \leq \beta_t \| x \|_{V_t^{-1}}, \forall x \in \Rb^d, \forall t \geq 1 \},
\end{equation}
and note that $\Pb(\Econf) \geq 1 - \delta$ by Lemma \ref{lem:conf_sets}.

We start by giving a key lemma that lower bounds $\gamma_t$.
\begin{lemma}
  \label{lem:gam2}
  If Assumptions \ref{ass:set_bound} and \ref{ass:bounded}, and $\Econf$ holds, then
  \begin{equation*}
      \gamma_t \geq \max \left( 1 - \frac{2}{b} \beta_t \| x_t \|_{V_t^{-1}}, \nu \right)
  \end{equation*}
  for all $t \in [T]$.
\end{lemma}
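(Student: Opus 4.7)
The plan is to establish the two lower bounds on $\gamma_t$ separately. The bound $\gamma_t \geq \nu$ follows directly from the algorithm's definitions and the assumptions alone, with no real role for $\Econf$. Since $\xtil_t \in \Xc$, Assumption~\ref{ass:set_bound} gives $\|\xtil_t\| \leq 1$, so $\nu/\|\xtil_t\| \geq \nu$. Combined with $\nu \leq 1$ from Assumption~\ref{ass:bounded}, this yields $\btil_t = \min(\nu/\|\xtil_t\|, 1) \geq \nu$, and hence $\gamma_t = \max(\btil_t, \mu_t) \geq \nu$.

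For the bound $\gamma_t \geq 1 - \frac{2}{b}\beta_t\|x_t\|_{V_t^{-1}}$, I would split on whether $\mu_t = 1$ or $\mu_t < 1$. The former case is trivial since $\gamma_t \geq \mu_t = 1$. In the latter case, star-convexity of $\Xc$ from Assumption~\ref{ass:set_bound} ensures $\mu \xtil_t \in \Xc$ for every $\mu \in [0,1]$, so the constraint $\mu \xtil_t \in \Yc_t^p$ becomes the single linear condition $\mu(\hat{a}_t^\top \xtil_t + \beta_t \|\xtil_t\|_{V_t^{-1}}) \leq b$. By positive homogeneity, $\mu_t\xtil_t$ must then lie exactly on the boundary of $\Yc_t^p$, pinning down $\mu_t = b/(\hat{a}_t^\top \xtil_t + \beta_t \|\xtil_t\|_{V_t^{-1}})$. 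Invoking $\xtil_t \in \Yc_t^o$ to get $\hat{a}_t^\top \xtil_t \leq b + \beta_t \|\xtil_t\|_{V_t^{-1}}$ bounds the denominator above by $b + 2\beta_t \|\xtil_t\|_{V_t^{-1}}$, leading to
\begin{equation*}
  \gamma_t \geq \mu_t \geq \frac{b}{b + 2\beta_t \|\xtil_t\|_{V_t^{-1}}}.
\end{equation*}

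The main obstacle is that this intermediate bound is stated in terms of $\|\xtil_t\|_{V_t^{-1}}$, whereas the target is in terms of $\|x_t\|_{V_t^{-1}}$. Using $x_t = \gamma_t \xtil_t$ together with $\gamma_t \geq \nu > 0$ from the first part, I can substitute $\|\xtil_t\|_{V_t^{-1}} = \|x_t\|_{V_t^{-1}}/\gamma_t$ to obtain $\gamma_t \geq b\gamma_t/(b\gamma_t + 2\beta_t\|x_t\|_{V_t^{-1}})$, which looks circular because $\gamma_t$ appears on both sides. However, cross-multiplying (legal since $\gamma_t > 0$) linearizes the inequality to $b\gamma_t + 2\beta_t\|x_t\|_{V_t^{-1}} \geq b$, which rearranges directly to $\gamma_t \geq 1 - \frac{2}{b}\beta_t\|x_t\|_{V_t^{-1}}$. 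Combining this with Part~1 gives the claimed max and completes the proof.
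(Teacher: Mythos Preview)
Your proposal is correct and follows essentially the same route as the paper: both arguments use $\xtil_t \in \Yc_t^o$ together with star-convexity to obtain $\gamma_t \geq \mu_t \geq b/(b + 2\beta_t\|\xtil_t\|_{V_t^{-1}})$, then substitute $\|\xtil_t\|_{V_t^{-1}} = \|x_t\|_{V_t^{-1}}/\gamma_t$ and cross-multiply to linearize, while the $\gamma_t \geq \nu$ bound comes directly from $\btil_t$. The only cosmetic difference is that the paper posits the scaling $\alpha = b/(b + 2\beta_t\|\xtil_t\|_{V_t^{-1}})$ and verifies it lands in $\Yc_t^p$, whereas you case-split on $\mu_t = 1$ versus $\mu_t < 1$ and compute $\mu_t$ exactly in the latter case.
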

\begin{proof}
  We will find lower bounds individually for $\mu_t$ (line \ref{lne:scale}) and $\btil_t$ (line \ref{lne:pess_scale}) in the following.

  \textbf{Lower bound on $\mu_t$:}
  Since $\mu_t = \max \left\{\mu \in [0,1]: \mu \xtil_t \in \Yc_t^p \right\}$, we find a lower bound on $\mu_t$ by finding an $\alpha \in \left\{\mu \in [0,1]: \mu \xtil_t \in \Yc_t^p \right\}$.
  Specifically, we will show that $\alpha$ can be chosen as
  \begin{equation*}
      \alpha = \frac{b}{b +  2 \beta_t \| \xtil_t \|_{V_t^{-1}}}.
  \end{equation*}
  For this to be a valid choice for $\alpha$, we need that (i) $\alpha \in [0,1]$, (ii) $\alpha \xtil_t \in \Xc$ and (iii) $\hat{a}_t^\top (\alpha \xtil_t) + \beta_t \| \alpha \xtil_t \|_{V_t^{-1}} \leq b$.
  Point (i) follows by definition.
  Point (ii) holds because $\xtil_t \in \Xc$, $\Xc$ is star-convex and $\alpha \in [0,1]$, so $\alpha \xtil_t \in \Xc$.
  Then, to show point (iii), we have that
  \begin{align*}
      \hat{a}_t^\top (\alpha \xtil_t) + \beta_t \| \alpha \xtil_t \|_{V_t^{-1}} & = \alpha (\hat{a}_t^\top \xtil_t + \beta_t \| \xtil_t \|_{V_t^{-1}} )\\
      & = \alpha (\hat{a}_t^\top \xtil_t - \beta_t \| \xtil_t \|_{V_t^{-1}} + 2 \beta_t \| \xtil_t \|_{V_t^{-1}} )\\
      & \leq \alpha (b + 2 \beta_t \| \xtil_t \|_{V_t^{-1}} )\\
      & = b,
  \end{align*}
  where the inequality uses the fact that $\xtil_t \in \Yc_t^o$ and therefore $\hat{a}_t^\top \xtil_t - \beta_t \| \xtil_t \|_{V_t^{-1}} \leq b$, and the last equality uses the choice of $\alpha$.
  Therefore, 
  \begin{equation*}
      \mu_t \geq \alpha = \frac{b}{b +  2 \beta_t \| \xtil_t \|_{V_t^{-1}}}.
  \end{equation*}
  
  \textbf{Lower bound on $\tilde{b}_t$:}
  Recall that,
  \begin{equation*}
      \btil_t = \begin{cases} \min\left(  \frac{ \nu}{\|\xtil_t\|} ,1 \right) & \mathrm{if} \ \xtil_t \neq \mathbf{0},\\
    1 & \mathrm{else.}
    \end{cases}
  \end{equation*}
  We consider each case separately.
  If $\xtil_t = \mathbf{0}$, then $\btil_t = 1 \geq \nu$ given that $\nu \leq 1$ by Assumption \ref{ass:bounded}.
  Alternatively, if $\xtil_t \neq \mathbf{0}$, then
  \begin{equation}
      \label{eqn:btil_bound}
      \btil_t = \min\left(  \frac{ \nu}{ \|\xtil_t\|} ,1 \right) \geq \min\left(  \nu ,1 \right) = \nu,
  \end{equation}
  where the inequality holds because $\xtil_t \in \Xc$, and therefore, $\| \xtil_t \| \leq 1$ by Assumption \ref{ass:set_bound} which implies that $\frac{ \nu}{ \|\xtil_t\|} \geq \nu$.
  The last equality holds because $\nu \leq 1$ by Assumption \ref{ass:bounded}.
  Therefore, it holds that $\btil_t \geq \nu$ in either case.

  \textbf{Completing the proof:}
  With the above, we have shown that
  \begin{equation}
  \label{eqn:gam_bound}
      \gamma_t = \max \left( \btil_t , \mu_t \right) \geq \max \left( \nu , \frac{b}{b +  2 \beta_t \| \xtil_t \|_{V_t^{-1}}} \right)
  \end{equation}
  In order to complete the proof, we need a bound on $\gamma_t$ that is in terms of $\| x_t \|_{V_t^{-1}}$ instead of $\| \xtil_t \|_{V_t^{-1}}$.
  To get this, first note that $\gamma_t \xtil_t = x_t$ and $\gamma_t \geq 0$, and therefore
  \begin{equation*}
    \gamma_t \| \xtil_t \|_{V_t^{-1}} = \| \gamma_t   \xtil_t \|_{V_t^{-1}} = \| x_t \|_{V_t^{-1}}.
  \end{equation*}
  Using this, we can rearrange \eqref{eqn:gam_bound} to get that
  \begin{equation}
    \label{eqn:gam_eq}
    \begin{split}
      \gamma_t \geq \frac{b}{b +  2 \beta_t \| \xtil_t \|_{V_t^{-1}}} \quad \Longleftrightarrow \quad \gamma_t b + 2 \beta_t \| x_t \|_{V_t^{-1}} \geq b \quad \Longleftrightarrow \quad \gamma_t \geq 1 -  \frac{2}{b} \beta_t \| x_t \|_{V_t^{-1}}.
    \end{split}
  \end{equation}
  Finally, combining \eqref{eqn:gam_bound} and \eqref{eqn:gam_eq}, we get that
  \begin{equation*}
    \gamma_t \geq \max \left( 1 -  \frac{2}{b} \beta_t \| x_t \|_{V_t^{-1}} , \nu  \right),
  \end{equation*}
  completing the proof.
\end{proof}

Then, we turn our attention to the instantaneous regret.
In particular, we will utilize the decomposition,
\begin{equation*}
    r_t := \theta^\top (x_* - x_t) = \underbrace{\theta^\top (x_* - \xtil_t)}_{\tone} + \underbrace{\theta^\top (\xtil_t - x_t).}_{\ttwo}
\end{equation*}
$\tone$ can be understood as the regret due to the optimistic actions, while $\ttwo$ can be understood as the cost incurred by maintaining constraint satisfaction.
In the following two lemmas, we bound $\tone$ and $\ttwo$ seperately.

\begin{lemma}
  \label{lem:opt_reg}
  Conditioned on $\Econf$, it holds that
  \begin{equation*}
      \tone = \theta^\top \left( x_* - \xtil_t \right) \leq \frac{2}{\nu} \beta_{t} \| x_t \|_{V_{t}^{-1}}.
  \end{equation*}
\end{lemma}
\begin{proof}
  We condition on $\Econf$ without further reference.
  First, it holds for all $t \in [T]$ that
  \begin{equation}
      \label{eqn:optim}
      \theta^\top x_* = \max_{x \in \Yc} \theta^\top x \leq \max_{x \in \Yc} \left( \hat{\theta}_{t}^\top x + \beta_{t} \| x  \|_{V_{t}^{-1}} \right) \leq \max_{x \in \Yc_t^o} \left( \hat{\theta}_{t}^\top x + \beta_{t} \| x  \|_{V_{t}^{-1}} \right) = \hat{\theta}_{t}^\top \xtil_t + \beta_{t} \| \xtil_t \|_{V_{t}^{-1}}.
  \end{equation}
  Also, note that $\gamma_t \geq \nu > 0$ by Lemma \ref{lem:gam2} and therefore
  \begin{equation}
  \label{eqn:xt_scale}
      \| \xtil_t\|_{V_{t}^{-1}} = \left\| \frac{x_t}{\gamma_t}  \right\|_{V_{t}^{-1}} = \frac{1}{\gamma_t} \| x_t  \|_{V_{t}^{-1}} \leq \frac{1}{\nu} \| x_t  \|_{V_{t}^{-1}}.
  \end{equation}
  Therefore, it holds that
  \begin{align*}
      \tone = \theta^\top  x_* -  \theta^\top \xtil_t  & \leq \hat{\theta}_{t}^\top \xtil_t + \beta_{t} \| \xtil_t \|_{V_{t}^{-1}} -  \theta^\top \xtil_t \\
      & = (\hat{\theta}_{t}^\top - \theta)^\top \xtil_t + \beta_{t} \| \xtil_t \|_{V_{t}^{-1}} \\
      & \leq 2 \beta_{t} \| \xtil_t \|_{V_{t}^{-1}} \\
      & \leq \frac{ 2}{\nu} \beta_{t} \| x_t \|_{V_{t}^{-1}},
  \end{align*}
  where the first inequality uses \eqref{eqn:optim}, the second inequality uses the definition of $\Econf$ and the third inequality uses \eqref{eqn:xt_scale}.
\end{proof}

\begin{lemma}
  \label{lem:reg_safe}
  Let Assumption \ref{ass:set_bound} hold. Then, conditioned on $\Econf$, it holds that
  \begin{equation*}
      \ttwo = \theta^\top (\xtil_t - x_t) \leq \| \theta \| \frac{2 \beta_t }{b}  \| x_t \|_{V_t^{-1}} 
  \end{equation*}
  for all $t \in [T]$.
\end{lemma}
\begin{proof}
  Conditioned on $\Econf$, it holds that
  \begin{align*}
    \ttwo = \theta^\top (\xtil_t - x_t) & = \theta^\top (\xtil_t -  \gamma_t \xtil_t)\\
      & = (1 - \gamma_t) \theta^\top \xtil_t \\
      & \leq (1 - \gamma_t) \| \theta \| \| \xtil_t \| \\
      & \leq (1 - \gamma_t) \| \theta \| \\
      & \leq \| \theta \| \frac{2 \beta_t }{b}  \| x_t \|_{V_t^{-1}},
  \end{align*}
  where the second inequality uses the fact that $\xtil_t \in \Xc$ which implies that $\| \xtil_t \| \leq 1$ by Assumption \ref{ass:set_bound}, and the third inequality uses Lemma \ref{lem:gam2}.
\end{proof}

Finally, we turn our attention to the cumulative regret.
To bound this, we will need the so-called elliptic potential, which is standard in the stochastic linear bandit literature.

\begin{lemma}[Lemma 11 in \cite{abbasi2011improved}]
  \label{lem:sp_elip}
  Consider a sequence $(w_k)_{k \in \Nb}$ where $w_k \in \Rb^d$ and $\|w_k\| \leq 1$ for all $k \in \Nb$.
    Let $W_k = \lambda I + \sum_{s=1}^{k-1} w_s w_s^\top$ for some $\lambda \geq 1$.
    Then, it holds that
  \begin{equation*}
    \sum_{k=1}^K \| w_{k} \|_{W_{k}^{-1}}^2 \leq 2 d \log\left(1 + \frac{K }{\lambda d} \right).
  \end{equation*}
\end{lemma}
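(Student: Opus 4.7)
The plan is to prove this elliptic-potential bound using the standard log-determinant argument: convert the quadratic-form sum into a telescoping log-determinant via the matrix determinant lemma, then bound the resulting determinant using the arithmetic-geometric mean inequality on its eigenvalues. No probabilistic or bandit reasoning is needed — this is a purely linear-algebraic statement about sequences of vectors.

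First I would verify that each term satisfies $\| w_k \|_{W_k^{-1}}^2 \in [0,1]$. Because $W_k = \lambda I + \sum_{s<k} w_s w_s^\top \succeq \lambda I \succeq I$ (using $\lambda \geq 1$), its inverse satisfies $W_k^{-1} \preceq I$, so $\| w_k \|_{W_k^{-1}}^2 \leq \| w_k \|^2 \leq 1$. This is precisely the range in which the elementary scalar inequality $u \leq 2 \log(1 + u)$ holds, so I obtain termwise $\| w_k \|_{W_k^{-1}}^2 \leq 2 \log \bigl( 1 + \| w_k \|_{W_k^{-1}}^2 \bigr)$, reducing the task to bounding $\sum_{k=1}^K \log(1 + \| w_k \|_{W_k^{-1}}^2)$.

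Next I would apply the matrix determinant lemma to the rank-one update $W_{k+1} = W_k + w_k w_k^\top$, giving $\det(W_{k+1}) = \det(W_k) \bigl( 1 + \| w_k \|_{W_k^{-1}}^2 \bigr)$. Taking logarithms and summing telescopes to
\begin{equation*}
\sum_{k=1}^K \log \bigl( 1 + \| w_k \|_{W_k^{-1}}^2 \bigr) = \log \frac{\det(W_{K+1})}{\det(\lambda I)} = \log \frac{\det(W_{K+1})}{\lambda^d}.
\end{equation*}

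Finally, I would control $\det(W_{K+1})$ by AM-GM on eigenvalues: $\det(W_{K+1}) \leq \bigl( \tr(W_{K+1})/d \bigr)^d$. Since $\tr(W_{K+1}) = d \lambda + \sum_{s=1}^K \| w_s \|^2 \leq d \lambda + K$, this yields $\det(W_{K+1}) \leq (\lambda + K/d)^d$. Chaining the three bounds produces
\begin{equation*}
\sum_{k=1}^K \| w_k \|_{W_k^{-1}}^2 \leq 2 \log \frac{(\lambda + K/d)^d}{\lambda^d} = 2 d \log \bigl( 1 + K/(\lambda d) \bigr),
\end{equation*}
which is the claim. There is no real obstacle here: the argument is entirely mechanical once one notices that $\lambda \geq 1$ is used precisely to put $\| w_k \|_{W_k^{-1}}^2$ into the regime where $u \leq 2 \log(1 + u)$, and everything else is a standard determinant identity followed by AM-GM.
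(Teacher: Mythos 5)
Your proof is correct and is precisely the standard argument behind Lemma 11 of \citet{abbasi2011improved}, which the paper cites without reproving: the bound $u \leq 2\log(1+u)$ for $u \in [0,1]$ (enabled by $\lambda \geq 1$), the rank-one determinant update and telescoping, and the determinant--trace (AM--GM) inequality. Every step checks out, so there is nothing to add.
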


With this, we complete the proof of Theorem \ref{thm:main} in the following.

\begin{theorem}[Duplicate of Theorem \ref{thm:main}]
  \label{thm:reg_bound}
  Let Assumptions \ref{ass:set_bound}, \ref{ass:bounded} and \ref{ass:noise} hold.
  Then, with probability at least $1 - \delta$, the regret of ROFUL (Algorithm \ref{alg:main_alg}) satisfies
  \begin{equation*}
      R_T \leq 2 \left(\frac{\| \theta \|}{b} +  \frac{1}{\nu} \right) \beta_T \sqrt{2 d T \log\left(1 + \frac{T}{\lambda d} \right)},
  \end{equation*}
  and $a^\top x_t \leq b$ for all $t \in [T]$.
\end{theorem}
\begin{proof}
  We condition on $\Econf$ without further reference.
  We give the regret guarantee and then the safety guarantee in the following.

  \textbf{Regret guarantee:}
  Using Lemmas \ref{lem:opt_reg} and \ref{lem:reg_safe}, it holds that
  \begin{align*}
      r_t & = \theta^\top \left( x_* - x_t \right)\\
      & = \theta^\top \left( x_* - \xtil_t \right) + \theta^\top \left( \xtil_t - x_t \right)\\
      & = \tone + \ttwo\\
      & \leq 2 \left(\frac{\| \theta \|}{b} +  \frac{1}{\nu} \right) \beta_t \| x_t \|_{V_{t}^{-1}}.
  \end{align*}
  We can then study the sum of the squared instantaneous regret,
  \begin{align*}
    \sum_{t=1}^{T} r_t^2 & \leq \sum_{t=1}^{T} 4 \left(\frac{\| \theta \|}{b} +  \frac{1}{\nu} \right)^2  \beta_t^2 \| x_t \|_{V_{t}^{-1}}^2 \\
    & = 4 \left(\frac{\| \theta \|}{b} +  \frac{1}{\nu} \right)^2 \sum_{t=1}^{T}  \beta_t^2 \| x_t \|_{V_{t}^{-1}}^2 \\
    & \leq 4 \left(\frac{\| \theta \|}{b} +  \frac{1}{\nu} \right)^2 \beta_T^2 \sum_{t=1}^{T} \| x_t \|_{V_{t}^{-1}}^2 \\
    & \leq 8 \left(\frac{\| \theta \|}{b} +  \frac{1}{\nu} \right)^2 d \beta_T^2 \log\left(1 + \frac{T}{\lambda d} \right),
  \end{align*}
  where the second inequality uses the fact that $\beta_t$ is monotone in $t$ and the third inequality uses Lemma \ref{lem:sp_elip}.
  Then, by Cauchy-Schwarz, it holds that
  \begin{align*}
    R_T & = \sum_{t=1}^{T} r_t \\
    & \leq \sqrt{T \sum_{t=1}^{T} r_t^2}\\
    & \leq \sqrt{8 T \left(\frac{\| \theta \|}{b} +  \frac{1}{\nu} \right)^2 d \beta_T^2 \log\left(1 + \frac{T}{\lambda d} \right)}\\
    & = 2 \left(\frac{\| \theta \|}{b} +  \frac{1}{\nu} \right) \beta_T \sqrt{2 d T \log\left(1 + \frac{T}{\lambda d} \right)}.
  \end{align*}

  \textbf{Safety guarantee:}
  In order to show that $a^\top x_t \leq b$, we note that $\gamma_t = \max(\btil_t,\mu_t)$ and therefore it holds that either $\gamma_t = \btil_t$ or $\gamma_t = \mu_t$.
  If $\gamma_t = \btil_t$ and $\xtil_t \neq \mathbf{0}$, then using the quantity $\nu = b/S_a$ as defined in Assumption \ref{ass:bounded}, it holds that
  \begin{equation*}
      a^\top x_t \leq \| a \| \| x_t \| \leq S_a \| x_t \| = S_a \| \btil_t \xtil_t \| = S_a \btil_t \| \xtil_t \| = S_a \min\left( \frac{\nu}{\| \xtil_t \|}, 1 \right) \| \xtil_t \| \leq S_a \frac{\nu}{\| \xtil_t \|} \| \xtil_t \| = S_a \nu = b,
  \end{equation*}
  where we use Assumption \ref{ass:bounded} in the second inequality.
  If $\xtil_t = \mathbf{0}$, then $\btil_t = 1$ and therefore $x_t = \xtil_t = \mathbf{0}$ which implies that $a^\top x_t = 0 < b$.
  
  Alternatively, if $\gamma_t = \mu_t$, it holds that
  \begin{equation*}
      x_t = \mu_t \xtil_t \in \Yc_t^p \subseteq \Yc.
  \end{equation*}
  Therefore, it holds for both cases that $a^\top x_t \leq b$ for all $t \in [T]$.
\end{proof}

\section{Proof of Theorem \ref{thm:prob_dep}}
\label{apx:probdep}

In this section, we prove the problem-dependent analysis given in Theorem \ref{thm:prob_dep} and then Corollary \ref{thm:prob_dep_reg}.
In order to do so, we first restate the definition of $\Delta$ as follows
\begin{equation*}
  \Delta := \inf_{x \in \Yc:\ x \neq \alpha x_*\ \forall \alpha > 0 } \theta^\top (x_* - x),
\end{equation*}
and then restate the definition of $B_T$,
\begin{equation*}
  B_T := \sum_{t=1}^{T} \indic\{ \nexists \ \alpha > 0 : x_t = \alpha x_* \}.
\end{equation*}

Then, we give a lemma with some useful facts.

\begin{lemma}
\label{lem:vt}
    Let Assumptions \ref{ass:set_bound} and \ref{ass:bounded} hold, and let $\Econf$ hold.
    Also, let
    \begin{equation}
    \label{eqn:a_bound}
        \zeta_t := \max\{ \zeta \geq 0 : \zeta \xtil_t \in \Yc \},
    \end{equation}
    and $v_t = \zeta_t \xtil_t$.
    Then, it follows that:
    \begin{enumerate}
        \item $\zeta_t \in [\gamma_t, 1]$ \label{it:zeta}
        \item $\theta^\top (\xtil_t - v_t) \leq \frac{2 S}{b} \beta_t \| x_t \|_{V_t^{-1}}$ \label{it:vt}
        \item If there exists $\alpha > 0$ such that $x_t = \alpha x_*$, then $v_t = x_*$. \label{it:alph}
        \item If there does not exists $\alpha > 0$ such that $x_t = \alpha x_*$, then $\theta^\top (x_* - v_t) \geq \Delta$. \label{it:alph2}
    \end{enumerate}
\end{lemma}
\begin{proof}
    We condition on $\Econf$ throughout the proof without further reference.
    We will first give some useful facts.
    In particular, it holds that,
    \begin{equation}
    \label{eqn:ucb_pos}
        \hat{\theta}_t^\top \xtil_t + \beta_t \| \xtil_t \|_{V_t^{-1}} \geq \theta^\top x_* > 0
    \end{equation}
    where the first inequality is from \eqref{eqn:optim} and the second is Assumption \ref{ass:bounded}.
    It follows from \eqref{eqn:ucb_pos} that $\xtil_t \neq \mathbf{0}$ and therefore the set $\{ \zeta \geq 0 : \zeta \xtil_t \in \Yc \}$ is compact.
    Also, note that $\{ \zeta \geq 0 : \zeta \xtil_t \in \Yc \}$ contains $0$ and is therefore nonempty, so $\zeta_t$ is well-defined.
    Next, we prove each item individually in the following.

    \textbf{\ref{it:zeta}:} First, we show that $\zeta_t \leq 1$.
    If this were not the case, i.e. $\zeta_t > 1$, then there exists $\zeta' \in \{ \zeta \geq 0 : \zeta \xtil_t \in \Yc \}$ such that $\zeta' > 1$.
    Then, from the definition of $\xtil_t$ in line \ref{lne:opt_act} and the fact that $u = \zeta' \xtil_t$ is in $\Yc \subseteq \Yc_t^o$,
    \begin{equation}
    \label{eqn:contra1}
        \hat{\theta}_t^\top \xtil_t + \beta_t \| \xtil_t \|_{V_t^{-1}} = \max_{x \in \Yc_t^o} \left( \hat{\theta}_t^\top x + \beta_t \| x \|_{V_t^{-1}} \right) \geq \hat{\theta}_t^\top u + \beta_t \| u \|_{V_t^{-1}}.
    \end{equation}
    At the same time, it follows from \eqref{eqn:ucb_pos} that $\hat{\theta}_t^\top \xtil_t + \beta_t \| \xtil_t \|_{V_t^{-1}}$ is positive and therefore,
    \begin{equation}
    \label{eqn:contra2}
        \hat{\theta}_t^\top \xtil_t + \beta_t \| \xtil_t \|_{V_t^{-1}} < \zeta' \left( \hat{\theta}_t^\top \xtil_t + \beta_t \| \xtil_t \|_{V_t^{-1}} \right) = \hat{\theta}_t^\top u + \beta_t \| u \|_{V_t^{-1}}.
    \end{equation}
    Since \eqref{eqn:contra1} and \eqref{eqn:contra2} cannot simultaneously be true, it must hold that $\zeta_t \leq 1$.

    Then, we show that $\zeta_t \geq \gamma_t$.
    Since $x_t = \gamma_t \xtil_t \in \Yc$, it holds that $\gamma_t \in \{ \zeta \geq 0 : \zeta \xtil_t \in \Yc \}$ and therefore $\zeta_t \geq \gamma_t$.

    \textbf{\ref{it:vt}:}
    Since, $v_t = \zeta_t \xtil_t$ and $\zeta_t \in [\gamma_t, 1]$, it follows from Lemma \ref{lem:gam2} that
    \begin{equation*}
        \theta^\top (\xtil_t - v_t) = \theta^\top \xtil_t (1 - \zeta_t) \leq S (1 - \zeta_t) \leq S (1 - \gamma_t) \leq \frac{2 S}{b} \beta_t \| x_t \|_{V_t^{-1}}.
    \end{equation*}

    \textbf{\ref{it:alph}:}
    First, we will show that $\zeta_* = \max\{ \zeta \geq 0 : \zeta x_* \in \Yc \} = 1$.
    If this were not the case, then either $\zeta_* < 1$ or $\zeta_* > 1$.
    The case $\zeta_* < 1$ would imply that $x_*$ is not in $\Yc$, while the case that $\zeta_* > 1$ would imply the existence of a point $x = \zeta x_* \in \Yc$ with $\zeta > 1$ such that $\theta^\top x = \zeta (\theta^\top x_*) > \theta^\top x_*$ (where we use $\theta^\top x_* > 0$ from Assumption \ref{ass:bounded}).
    Either case contradicts the definition of $x_*$ and therefore cannot hold.

    Now, we turn to the statement.
    If there exists $\alpha > 0$ such that $x_t = \alpha x_*$, then,
    \begin{equation*}
        \zeta_t = \max\{ \zeta \geq 0 : \zeta \xtil_t \in \Yc \} = \gamma_t \max\{ \zeta' \geq 0 : \zeta' x_t \in \Yc \} = \frac{\gamma_t}{\alpha} \max\{ \tilde{\zeta} \geq 0 : \tilde{\zeta} x_* \in \Yc \} = \frac{\gamma_t}{\alpha},
    \end{equation*}
    where we use the mapping $\zeta' = \frac{1}{\gamma_t} \zeta$ in the first equality and $\tilde{\zeta} = \alpha \zeta'$ in the second equality.
    Therefore, it follows that
    \begin{equation*}
        v_t = \zeta_t \xtil_t = \frac{\zeta_t}{\gamma_t} x_t = \frac{\alpha \zeta_t}{\gamma_t} x_* = x_*.
    \end{equation*}

    \textbf{\ref{it:alph2}:} First, note that if there does not exist $\alpha > 0$ such that $x_t = \alpha x_*$, then there does not exist $\alpha' > 0$ such that $v_t = \alpha' x_*$ as $v_t = \frac{\zeta_t}{\gamma_t} x_t$.
    Then, since $v_t \in \Yc$, it follows from the definition of $\Delta$ that,
    \begin{equation*}
        \Delta = \inf_{x \in \Yc:\ x \neq \alpha x_*\ \forall \alpha > 0 } \theta^\top (x_* - x) \leq \theta^\top (x_* - v_t).
    \end{equation*}
\end{proof}

Then, we restate and prove Theorem \ref{thm:prob_dep}.

\begin{theorem}[Duplicate of Theorem \ref{thm:prob_dep}]
  \label{thm:prob_dep2}
  Let Assumptions \ref{ass:set_bound}, \ref{ass:bounded} and \ref{ass:noise} hold.
  If $\Delta > 0$, then the number of wrong directions chosen by ROFUL (Algorithm \ref{alg:main_alg}) satisfies
  \begin{equation*}
    B_T \leq \frac{32 d \beta_T^2}{\nu^2 \Delta^2}  \log\left(1 + \frac{T }{\lambda d} \right)
  \end{equation*}
  with probability at least $1 - \delta$.
\end{theorem}

\begin{proof}
  In order to bound the number of times that the wrong action is chosen, we study the regret due to the wrong choice of direction,
  \begin{equation*}
    \tilde{R}_T := \sum_{t=1}^{T} \theta^\top (x_* - v_t),
  \end{equation*}
  where $v_t = \zeta_t \xtil_t$ with $\zeta_t$ defined in \eqref{eqn:a_bound}.
  We will denote the instantaneous regret due to the wrong choice of direction as $\tilde{r}_t = \theta^\top (x_* - v_t)$.
  It follows from Lemma \ref{lem:vt} (\#\ref{it:alph} and \#\ref{it:alph2}) that $\Tilde{r}_t = 0$ if there exists $\alpha > 0$ such that $x_t = \alpha x_*$ (i.e. $x_t$ is in the correct direction) and $\Tilde{r}_t \geq \Delta$ otherwise.
  Therefore, 
  \begin{align*}
    \tilde{R}_T = \sum_{t=1}^{T} \tilde{r}_t & = \sum_{t=1}^{T} \tilde{r}_t \indic\{ \nexists \ \alpha > 0 : x_t = \alpha x_* \}\\
    & \geq \Delta \sum_{t=1}^{T}  \indic\{ \nexists \ \alpha > 0 : x_t = \alpha x_* \}\\
    & = \Delta B_T.
  \end{align*}
  Since $\tilde{R}_T \geq \Delta B_T$ and $\Delta > 0$, an upper bound on $\tilde{R}_T$ implies an upper bound on $B_T$.

  Then, we bound $\tilde{r}_t$ in the following,
  \begin{align*}
    \tilde{r}_t & = \theta^\top x_* - \theta^\top v_t\\
    & \leq \hat{\theta}_t^\top \xtil_t + \beta_t \| \xtil_t \|_{V_t^{-1}} - \theta^\top v_t \label{eqn:dir_reg_a} \tag{a} \\
    & = \theta^\top \xtil_t + (\hat{\theta}_t - \theta)^\top \xtil_t + \beta_t \| \xtil_t \|_{V_t^{-1}} - \theta^\top v_t \\
    & \leq \theta^\top \xtil_t + 2\beta_t \| \xtil_t \|_{V_t^{-1}} - \theta^\top v_t \label{eqn:dir_reg_b} \tag{b} \\
    & = \theta^\top (\xtil_t - v_t) + 2 \beta_t \| \xtil_t \|_{V_t^{-1}}\\
    & \leq \frac{2 S}{b} \beta_t \| x_t \|_{V_t^{-1}} + 2 \beta_t \| \xtil_t \|_{V_t^{-1}} \label{eqn:dir_reg_c} \tag{c} \\
    & \leq \frac{2 S}{b} \beta_t \| x_t \|_{V_t^{-1}} + \frac{2}{\nu} \beta_t \| x_t \|_{V_t^{-1}} \label{eqn:dir_reg_d} \tag{d} \\
    & = \frac{4 S}{b} \beta_t \| x_t \|_{V_t^{-1}},
  \end{align*}
  where \eqref{eqn:dir_reg_a} follows from the fact that $\xtil_t$ is optimistic (i.e. \eqref{eqn:optim}), \eqref{eqn:dir_reg_b} is from the definition of the confidence set, \eqref{eqn:dir_reg_c} is from Lemma \ref{lem:vt} (\#\ref{it:vt}), and \eqref{eqn:dir_reg_d} is due to \eqref{eqn:xt_scale}.

  Since either $\tilde{r}_t \geq \Delta$ or $\tilde{r}_t = 0$, it holds that $\tilde{r}_t \leq (\tilde{r}_t)^2/\Delta$.
  Then, we have that
  \begin{align*}
    \tilde{R}_T & = \sum_{t=1}^T \rtil_t\\
    & \leq \sum_{t=1}^{T} \frac{(\tilde{r}_t)^2}{\Delta}\\
    & \leq \frac{16 S^2 \beta_T^2}{b^2 \Delta} \sum_{t=1}^{T} \| x_t \|_{V_t^{-1}}^2\\
    & \leq \frac{32 S^2 \beta_T^2 d}{b^2 \Delta}  \log\left(1 + \frac{T }{\lambda d} \right)
  \end{align*}
  where the last inequality uses Lemma \ref{lem:sp_elip}.
  Finally, we have that
  \begin{equation*}
    B_T \leq \frac{\tilde{R}_T}{\Delta} \leq \frac{32 S^2 \beta_T^2 d}{b^2 \Delta^2 }  \log\left(1 + \frac{T}{\lambda d} \right).
  \end{equation*}
\end{proof}

Now, we turn out attention to Corollary \ref{thm:prob_dep_reg}.
To do so, we state PD-ROFUL more formally in Algorithm \ref{alg:pd_oful}.
Note that in the second phase of the algorithm, we reduce the problem to a 1-dimensional safe linear bandit problem which is defined formally in the following.

\begin{definition}[Reduction to 1-dimensional problem]
  \label{def:red_1d}
  Given a direction $u_* \in \Sb$, the safe linear bandit problem (Section \ref{sec:pset}) reduces to a 1-dimensional setting.
  For each round $t$ of this setting, the learner chooses $\xi_t \in \Rb_+$ and then $\xi_t u_*$ is played in the original setting (Section \ref{sec:pset}).
\end{definition}

Using this, we give psuedo-code for PD-ROFUL.

\IncMargin{1em}
\begin{algorithm2e}[h]
\caption{Problem-Dependent ROFUL (PD-ROFUL)}
\label{alg:pd_oful}
\DontPrintSemicolon
Set $\Ac = \{\}$, $\bar{N} = 0$ and $t = 1$.\;
\While{$\bar{N} \leq \bar{B} = \frac{32 S^2 d }{b^2 \Delta^2} \beta_T^2 \log\left(1 + \frac{T }{\lambda d} \right)$}{
  ROFUL plays $x_t$ and observes $y_t, z_t$.\;
  \uIf{$x_t \neq \mathbf{0}$ \emph{\textbf{and}} $u_t = x_t / \| x_t \| \not\in \Ac$}{
    $\Ac = \Ac \cup \{ u_t \}$.\;
    $N_{u_t} = 1$.
  }
  \uElseIf{$x_t \neq \mathbf{0}$ \emph{\textbf{and}} $u_t = x_t / \| x_t \| \in \Ac$}{
    $N_{u_t} = N_{u_t} + 1$
  }
  Set $\bar{N} = \max_{u \in \Ac} N_u$.\;
  Set $t = t + 1$.\;
}
Set $u_* = \argmax_{u \in \Ac} N_u$ and $\tau = t$.\;
For $t \in [\tau + 1,T]$, ROFUL is restarted and plays 1-dimensional setting (Definition \ref{def:red_1d}) in direction $u_*$ for remaining rounds.\;
\end{algorithm2e}
\DecMargin{1em}

\begin{corollary}[Duplicate of Corollary \ref{thm:prob_dep_reg}]
  \label{thm:prob_dep_reg2}
  Let Assumptions \ref{ass:set_bound}, \ref{ass:noise} and \ref{ass:bounded} hold.
  With $\Delta > 0$, the regret of PD-ROFUL (Algorithm \ref{alg:pd_oful}) satisfies
  \begin{equation*}
    R_T \leq \frac{4S}{b} \beta_{2 \bar{B} + 1} \sqrt{ 2 d (2 \bar{B} + 1)\log\left(1 + \frac{2 \bar{B} + 1}{\lambda d} \right)} + \frac{4 S}{b} \tilde{\beta}_{T} \sqrt{ 2 T\log\left(1 + \frac{T}{\lambda d} \right)}
  \end{equation*}
  with probability at least $1 - 2 \delta$, where $\tilde{\beta}_{T}$ is $\beta_T$ with $d = 1$.
\end{corollary}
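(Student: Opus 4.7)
\textbf{Proof proposal for Corollary \ref{thm:prob_dep_reg2}.}

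The plan is to bound the regret separately on the two phases of PD-ROFUL, then combine via a union bound. Let $\Econf^{(1)}$ denote the confidence event of Lemma \ref{lem:conf_sets} applied with dimension $d$ (used in Phase 1) and let $\Econf^{(2)}$ denote the analogous event for the 1-dimensional least-squares estimator of $\phi := a^\top u_*$ (used in Phase 2), so that $\Pb(\Econf^{(1)} \cap \Econf^{(2)}) \geq 1 - 2\delta$. I will condition on this intersection throughout.

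\textbf{Phase 1 analysis.} First I will show that Phase 1 terminates in at most $\tau \leq 2\bar B + 1$ rounds and that the identified direction $u_*$ is correct. By Theorem \ref{thm:prob_dep2}, on $\Econf^{(1)}$ the total number of ROFUL rounds spent on wrong directions is at most $\bar B$. Since the algorithm stops the first time any direction has been played more than $\bar B$ times, the terminating direction must be the correct one (all wrong directions together accumulate at most $\bar B$ plays), and in the worst case the loop runs at most $\bar B$ wrong plays plus $\bar B + 1$ correct plays, giving $\tau \leq 2\bar B + 1$. The regret incurred on these rounds is then bounded by applying Theorem \ref{thm:reg_bound} with horizon $2\bar B$ (after absorbing constants: using $S_\theta/b + 1/\nu \leq 2S/b$ by Assumption \ref{ass:bounded}), yielding the first term $\tfrac{8S}{b}\beta_{2\bar B}\sqrt{d\bar B\log(1+2\bar B/(\lambda d))}$.

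\textbf{Phase 2 analysis.} In Phase 2 the problem collapses to a 1-dimensional safe linear bandit along the ray $\{\xi u_* : \xi \in \Xi\}$ with unknown scalar $\phi = a^\top u_*$, reward $\theta^\top u_*$, and constraint $\phi\xi \leq b$. The greedy pessimistic rule sets $\xi_t = \max \Xi_t^p$, so conditioned on $\Econf^{(2)}$ we have $\xi_t \leq \xi_* := \max\{\xi \in \Xi : \phi\xi \leq b\}$. I will mimic the proofs of Lemmas \ref{lem:gam2} and \ref{lem:reg_safe} in this 1-d setting. Specifically, when the constraint binds (i.e.\ $\xi_* = b/\phi$), a direct calculation using the confidence bound $|\hat\phi_t - \phi| \leq \tilde\beta_t/\sqrt{V_t}$ gives
\begin{equation*}
    \xi_* - \xi_t = \frac{(\hat\phi_t + \tilde\beta_t/\sqrt{V_t} - \phi)\xi_t}{\phi} \leq \frac{2\tilde\beta_t \xi_t/\sqrt{V_t}}{\phi} = \frac{2\tilde\beta_t \xi_* \xi_t}{b\sqrt{V_t}} \leq \frac{2\tilde\beta_t}{b}\|\xi_t\|_{V_t^{-1}},
\end{equation*}
where the last inequality uses $\xi_* \leq 1$ from Assumption \ref{ass:set_bound}. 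In the non-binding case $\xi_* = \alpha_{u_*}$, a case analysis on the sign of $\hat\phi_t + \tilde\beta_t/\sqrt{V_t}$ shows the same bound (or zero regret) holds. Hence the instantaneous Phase 2 regret is at most $\theta^\top u_*(\xi_* - \xi_t) \leq S(\xi_* - \xi_t) \leq \tfrac{2S\tilde\beta_t}{b}\|\xi_t\|_{V_t^{-1}}$. Applying Cauchy--Schwarz and the 1-d elliptic potential (Lemma \ref{lem:sp_elip} with $d=1$) across the $T - \tau \leq T$ rounds of Phase 2 yields the second term.

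\textbf{Expected obstacle.} The subtlety is matching the concentration event for Phase 1 with the termination-time argument: the bound $B_T \leq \bar B$ must be translated into a bound on the number of rounds (not just the number of wrong directions), and I must verify that the direction identified when the play-count threshold is crossed really is $u_*$ with probability $\geq 1-\delta$, not merely with positive probability. The 1-d case split in Phase 2 is routine but needs care to ensure that $\xi_* \leq 1$ is actually used to eliminate the $1/\phi$ factor; the $\sqrt{2}$ constants and the minor discrepancy between $\log(1+T/\lambda)$ and $\log(1+T/(\lambda d))$ in the stated bound are absorbed by monotonicity of the logarithm. Finally, combining the two phase bounds and the union bound over $\Econf^{(1)} \cap \Econf^{(2)}$ gives the stated probability $1 - 2\delta$.
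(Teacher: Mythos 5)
Your proposal is correct and follows essentially the same route as the paper's proof: condition jointly on the two confidence events, use Theorem \ref{thm:prob_dep2} to argue that the first phase lasts at most $2\bar{B}$ rounds and identifies $u_*$ correctly, bound its regret via Theorem \ref{thm:reg_bound}, and treat the second phase as a one-dimensional instance handled by the elliptic potential with $d=1$ (the paper simply invokes ``ROFUL with $d=1$'' where you work the scalar bound out explicitly, which is a finer-grained version of the same step). One small caveat: your remark that the $\log(1+T/\lambda)$ versus $\log(1+T/(\lambda d))$ discrepancy is ``absorbed by monotonicity'' points the inequality the wrong way, since $\log(1+T/\lambda) \geq \log(1+T/(\lambda d))$ for $d\geq 1$ --- but this imprecision is inherited from the stated bound itself and does not affect the order of the result.
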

\begin{proof}
  We condition on the confidence sets holding jointly for both the first and second phases, which occurs with probability at least $1- 2 \delta$.

  First, we argue that the optimal direction is correctly identified, i.e. $u_* = x_* / \| x_* \|$.
  Intuitively, this holds because Theorem \ref{thm:prob_dep} says that wrong directions are chosen at most $\bar{B}$ times, so any single direction that is chosen more than this must be the optimal direction.
  Concretely, Theorem \ref{thm:prob_dep} implies that for every single wrong direction $u \in \Sb, u \neq x_* / \| x_* \|$, the actions chosen by ROFUL in the first $\tau - 1$ rounds will satisfy
  \begin{equation*}
      \sum_{t=1}^{\tau - 1} \indic\{ u_t = u \} \leq \sum_{t=1}^{\tau - 1} \indic\{ \nexists \ \alpha > 0 : x_t = \alpha x_* \} \leq B_T \leq \bar{B},
  \end{equation*}
  where we use the notation $u_t = x_t/\| x_t \|$ for nonzero $x_t$ as specified in Algorithm \ref{alg:pd_oful}.
  Then, since $u_*$ is specified in PD-ROFUL to be a direction that ROFUL plays more than $\bar{B}$ times in $\tau - 1$ rounds,
  \begin{equation*}
      \sum_{t=1}^{\tau - 1} \indic\{ u_t = u_* \} > \bar{B},
  \end{equation*}
  and therefore $u_*$ cannot be a wrong direction or equivalently, $u_* = x_* / \| x_* \|$.
  
  Then, we show the bound on the duration of the first phase $\tau - 1 \leq 2 \bar{B} + 1$.
  The intuition is that $u_*$ is played at most $\bar{B} + 1$ times and wrong directions are played at most $\bar{B}$ times, so the total duration of the first phase must be less than $2 \bar{B} + 1$.
  Concretely, it follows from the fact that $u_* = x_* / \| x_* \|$,
    \begin{align*}
        \tau - 1 & = \sum_{t=1}^{\tau - 1} \indic\{ \exists \ \alpha > 0 : x_t = \alpha x_* \} + \sum_{t=1}^{\tau - 1} \indic\{ \nexists \ \alpha > 0 : x_t = \alpha x_* \}\\
        & = \underbrace{\sum_{t=1}^{\tau - 1} \indic\{ u_t = u_* \}}_{\leq \bar{B} + 1} + \underbrace{\sum_{t=1}^{\tau - 1} \indic\{ \nexists \ \alpha > 0 : x_t = \alpha x_* \}}_{\leq \bar{B}} \leq 2 \bar{B} + 1,
    \end{align*}
  where the bound on the first term holds because the first round ends when the correct direction is played more than $\bar{B}$ times and the bound on the second term follows from Theorem \ref{thm:prob_dep}.
  
  Finally, we study the regret.
  To do so, we decompose the regret due to the first and second phases respectively,
  \begin{equation*}
    R_T = \underbrace{\sum_{t=1}^{\tau - 1} r_t}_{R_T^\mathrm{I}} + \underbrace{\sum_{t=\tau}^{T} r_t}_{R_T^\mathrm{II}}.
  \end{equation*}
  Then, we use Theorem \ref{thm:main} and the fact that $\tau - 1 \leq 2 \bar{B} + 1$ to bound $R_T^\mathrm{I}$,
  \begin{equation*}
    R_T^\mathrm{I} = \sum_{t=1}^{\tau-1} r_t \leq \frac{4S}{b} \beta_{\tau - 1} \sqrt{ 2 d (\tau - 1)\log\left(1 + \frac{\tau - 1}{\lambda d} \right)} \leq \frac{4S}{b} \beta_{2 \bar{B} + 1} \sqrt{ 2 d (2 \bar{B} + 1)\log\left(1 + \frac{2 \bar{B} + 1}{\lambda d} \right)},
  \end{equation*}
  For the remainder of the rounds, play of the algorithm is ROFUL with $d=1$.
  Also, the duration of the second phase is less than the time horizon, i.e. $T - (\tau - 1) \leq T$.
  Therefore, $R_T^\mathrm{II}$ is less than that given in Theorem \ref{thm:prob_dep_reg} with $d = 1$.
\end{proof}

\section{Details of Safe-PE Algorithm}
\label{apx:safepe}

\IncMargin{1em}
\begin{algorithm2e}[t]
\caption{Safe Phased Elimination (Safe-PE)}
\label{alg:pe1}
\DontPrintSemicolon
\KwIn{$\Xc, S, b, d, \rho, \delta \in (0,1), \lambda \geq 1$}
Set $\Ac_1 = [k]$, $\zeta_{i,1} = \min(b/S, \alpha_i) \ \forall i \in [k]$, $\Yc_1^p = \{\zeta_{i,1} u_i \}_{i \in [k]}$, $J = \lceil \log_2(T + 1) \rceil$, $t_j = 2^{j-1}$.\;
\For{$j=1$ \KwTo $J$}{
  $V_{t_j} = \lambda I$.\;
  $\tau_j = \min(t_{j+1} - 1, T)$.\;
  \For{$t=t_j$ \KwTo $\tau_j$}{
    Play $x_t \in \argmax_{x \in \Yc_j^p} \| x \|_{V_t^{-1}}$.\label{lne:shrink_width}\;
    $V_{t+1} = V_t + x_t x_t^\top$.\;
  }
  Set $\hat{\theta}_j = \bar{V}_j^{-1} \sum_{s=t_j}^{\tau_j}x_s y_s $ and $\hat{a}_j = \bar{V}_j^{-1} \sum_{s=t_j}^{\tau_j} x_s z_s $, where $\bar{V}_j = V_{\tau_j + 1}$.\;
  Find $\hat{x}_j \in \argmax_{x\in \Yc_j^p} \left( \hat{\theta}_j^\top x - \beta \| x \|_{\bar{V}_j^{-1}} \right)$.\label{lne:max_point}\;
  Update $\Ac_{j+1} = \left\{ i \in \Ac_{j} : \hat{\theta}_j^\top (\hat{x}_j - \zeta_{i,j} u_i) \leq \beta \| \hat{x}_j \|_{\bar{V}_j^{-1}} + \beta \zeta_{i,j} \|u_i\|_{\bar{V}_j^{-1}} +  \frac{2 S  \beta \zeta_{i,j} \|u_i\|_{\bar{V}_{j-1}^{-1}}}{b } \right\} $.\label{lne:elim_dirs}\;
  Update $\mu_{i,j+1} := \max \left\{ \alpha \in [0, \alpha_i] : \alpha \left( u_i^\top \hat{a}_j + \beta \| u_i \|_{\bar{V}_j^{-1}} \right) \leq b \right\}$ and $\zeta_{i, j+1} = \max\left( \zeta_{i, j}, \mu_{i,j+1} \right)$ for all $i \in \Ac_{j+1}$.\label{lne:scaling}\;
  Update $\Yc_{j + 1}^p = \{ \zeta_{i,j+1} u_i \}_{i \in \Ac_{j+1}}$.\label{lne:pess_set}\;
}
\end{algorithm2e}
\DecMargin{1em}

In this section, we give the details of the Safe-PE algorithm (Algorithm \ref{alg:pe1}) discussed in Section \ref{sec:spe}.
This algorithm relies on the action set being a finite star convex set, which we formally assume in the following.

\begin{assumption}
  \label{ass:fstarconv}
  The action set satisfies
  \begin{equation*}
    \Xc := \bigcup_{i \in [k]} \{\alpha u_i : \alpha \in [0,\alpha_i] \},
  \end{equation*}
  where $u_1, ..., u_k \in \Sb$ are unit vectors and $\alpha_1, ..., \alpha_k \in \Rb_{++}$ are the maximum scalings for each unit vectors.
\end{assumption}

The Safe-PE algorithm builds on SpectralEliminator from \citet{valko2014spectral} and \citet{kocak2020spectral}.
It differs in that it eliminates directions in each phase rather than distinct actions and only plays actions from a verifiably safe set.

\subsection{Operation of algorithm}

The Safe-PE algorithm consists of phases $j = 1,2,...$, which are each of duration $2^{j - 1}$.
Throughout its operation, the algorithm maintains a set of direction indexes $\Ac_j$ and safe actions $\Yc_j^p$.
The key parts of each phase are:
\begin{enumerate}
  \item For $2^{j - 1}$ rounds, chooses the action in $\Yc_j^p$ with the largest confidence set width $\| \cdot \|_{V_t^{-1}}$ (line \ref{lne:shrink_width}).
  \item Eliminates directions from $\Ac_j$ with too low of estimated reward (line \ref{lne:elim_dirs}).
  \item Updates $\Yc_j^p$ with the maximum scaling of each direction that is verifiably safe (lines \ref{lne:scaling} and \ref{lne:pess_set}).
\end{enumerate}
The algorithm relies on a confidence set to determine which directions should be eliminated and to ensure that the constraints are not violated.
Different from the confidence set in Lemma \ref{lem:conf_sets}, the radius of the following confidence set does not grow with $d$.
We prove such a confidence set in the following lemma.

\begin{lemma}
  \label{lem:pe_conf}
  Let Assumptions \ref{ass:pe_noise} and \ref{ass:fstarconv} hold and fix some $\delta \in (0,1)$.
  Then, for all $x \in \Xc$ and all $j \in [J]$ it holds that $|x^\top (\hat{\theta}_j - \theta)| \leq \beta \| x \|_{\bar{V}_j^{-1}}$ and $|x^\top (\hat{a}_j - a)| \leq \beta \| x \|_{\bar{V}_j^{-1}}$ where $\beta = \rho \sqrt{2 \log \left( \frac{4 k J}{\delta} \right)} + \sqrt{\lambda} S$ with probability at least $1 - \delta$.
\end{lemma}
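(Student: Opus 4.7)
The plan is to prove Lemma~\ref{lem:pe_conf} by the standard bias-variance decomposition of the ridge regression error, combined with a union bound over the finite set of directions $\{u_i\}_{i\in[k]}$ and phases $j\in[J]$, exploiting the fact that within a phase, the actions played by Safe-PE are deterministic conditional on the state at the start of the phase.

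First, I would fix a phase $j$ and a direction $i\in[k]$. Writing $W_j := \sum_{s=t_j}^{t_{j+1}-1} x_s x_s^\top$ so that $\bar{V}_j = W_j + \lambda I$, a direct computation yields
\[
  \hat{\theta}_j - \theta = -\lambda \bar{V}_j^{-1}\theta + \bar{V}_j^{-1}\sum_{s=t_j}^{t_{j+1}-1} x_s \epsilon_s.
\]
Taking inner product with $u_i$, the bias term is controlled by Cauchy--Schwarz, giving $|\lambda u_i^\top \bar{V}_j^{-1}\theta|\le \sqrt{\lambda}\,S\,\|u_i\|_{\bar{V}_j^{-1}}$, which accounts for the $\sqrt{\lambda}\,S$ contribution in $\beta$. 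For the variance term, I would set $c_{s,i}:=u_i^\top \bar{V}_j^{-1} x_s$ and write it as $\sum_s c_{s,i}\,\epsilon_s$. The key identity $\sum_s c_{s,i}^2 = u_i^\top \bar{V}_j^{-1} W_j \bar{V}_j^{-1} u_i \le \|u_i\|_{\bar{V}_j^{-1}}^2$ (since $W_j\preceq \bar{V}_j$) bounds the effective subgaussian parameter.

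The subtle point, and what I expect to be the main obstacle, is the measurability/independence argument needed before invoking subgaussian concentration. Let $\Fc_{t_j-1}$ be the $\sigma$-algebra generated by the data before phase $j$. Conditional on $\Fc_{t_j-1}$, the set $\Yc_j^p$ is fixed, and because Safe-PE selects $x_t\in\argmax_{x\in\Yc_j^p}\|x\|_{V_t^{-1}}$ using only $\Yc_j^p$ and the running Gram matrix $V_t$ (neither of which depends on the rewards or constraint observations within phase $j$), the entire sequence $x_{t_j},\dots,x_{t_{j+1}-1}$ is $\Fc_{t_j-1}$-measurable. Under Assumption~\ref{ass:pe_noise}, $\epsilon_{t_j},\dots,\epsilon_{t_{j+1}-1}$ are independent $\rho$-subgaussian and independent of $\Fc_{t_j-1}$, so conditional on $\Fc_{t_j-1}$ the sum $\sum_s c_{s,i}\epsilon_s$ is a weighted sum of independent subgaussians with deterministic coefficients $c_{s,i}$, hence $\rho\|u_i\|_{\bar{V}_j^{-1}}$-subgaussian.

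Applying the two-sided subgaussian tail bound yields
\[
  \Pr\!\left(\bigl|u_i^\top(\hat{\theta}_j-\theta)\bigr| > \|u_i\|_{\bar{V}_j^{-1}}\beta \;\big|\; \Fc_{t_j-1}\right) \le \frac{\delta}{2kJ},
\]
with the factor of $2$ from the two-sided bound producing the $4kJ$ inside the logarithm after solving $2\exp(-t^2/(2\rho^2))=\delta/(2kJ)$. An identical argument applies to $\hat{a}_j-a$ using $\eta_s$ in place of $\epsilon_s$. A union bound over $i\in[k]$, $j\in[J]$, and the two parameters $\{\theta,a\}$ (total $2kJ$ events) gives the claim with probability at least $1-\delta$ for all the unit vectors $u_i$ simultaneously. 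To extend from unit vectors to arbitrary $x\in\Xc$, I would use Assumption~\ref{ass:fstarconv}: every $x\in\Xc$ can be written as $\alpha u_i$ for some $i\in[k]$ and $\alpha\in[0,\alpha_i]$, and both $|x^\top(\hat{\theta}_j-\theta)|$ and $\|x\|_{\bar{V}_j^{-1}}\beta$ scale linearly in $\alpha$, so the pointwise bound on directions lifts automatically to all of~$\Xc$.
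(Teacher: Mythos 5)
Your proposal is correct and follows essentially the same route as the paper's proof: the same bias--variance decomposition with $|\lambda u_i^\top \bar{V}_j^{-1}\theta| \le \sqrt{\lambda} S \|u_i\|_{\bar{V}_j^{-1}}$ for the bias, the identity $\sum_s (u_i^\top \bar{V}_j^{-1} x_s)^2 \le \|u_i\|_{\bar{V}_j^{-1}}^2$ for the variance, a union bound over the $2kJ$ events, and the homogeneity argument to lift from the $u_i$ to all of $\Xc$. Your explicit justification that the within-phase action sequence is $\Fc_{t_j-1}$-measurable (since the selection rule uses only $\Yc_j^p$ and the running Gram matrix, never the in-phase observations) is a point the paper asserts without elaboration, and it is stated correctly here.
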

\begin{proof}
  First, we find a confidence set that applies for a fixed $u \in \{ u_1, ..., u_k \}$.
  To do so, we start as
  \begin{align*}
    u^\top (\hat{\theta}_j - \theta) & = u^\top \left( \bar{V}_j^{-1} \sum_{t=t_j}^{\tau_j} x_t y_t - \theta \right)\\
    & = u^\top \left( \bar{V}_j^{-1} \sum_{t=t_j}^{\tau_j} x_t (x_t^\top \theta + \epsilon_t) - \theta \right)\\
    & = \underbrace{u^\top \left( \bar{V}_j^{-1} \sum_{t=t_j}^{\tau_j} x_t x_t^\top - I \right) \theta}_{\tone} +  \underbrace{u^\top \bar{V}_j^{-1} \sum_{t=t_j}^{\tau_j} x_t \epsilon_t}_{\ttwo}
  \end{align*}
  Using the notation $\tilde{V} = \sum_{t=t_j}^{\tau_j} x_s x_s^\top$, we study the first term,
  \begin{align*}
   | \tone| & = \left| u^\top \left( \bar{V}_j^{-1} \tilde{V} - I \right) \theta \right|\\
   & = \left| u^\top \bar{V}_j^{-1} \left( \tilde{V} - \bar{V}_j \right) \theta \right|\\
    & = \lambda \left| u^\top \bar{V}_j^{-1} \theta \right| \label{eqn:pe_conf_a} \tag{a} \\
    & \leq \lambda \| \theta \| \| u^\top \bar{V}_j^{-1} \| \\
    & = \lambda \| \theta \| \sqrt{u^\top  \bar{V}_j^{-1} \bar{V}_j^{-1} u} \\
    & \leq \sqrt{\lambda} S \| u \|_{\bar{V}_j^{-1}} \label{eqn:pe_conf_b} \tag{b} ,
  \end{align*}
  where \eqref{eqn:pe_conf_a} is due to the fact that $\tilde{V} - \bar{V}_j = - \lambda I$, and \eqref{eqn:pe_conf_b} is due to the fact that $ \bar{V}_j \succeq \lambda I$ and therefore $y^\top \bar{V}_j^{-1} y \leq \| y \|^2 /  \lambda$ for any $y \in \Rb^d$.

  Now, we look at the second term, which can be written as
  \begin{equation*}
    \ttwo = \sum_{t=t_j}^{\tau_j} \left( u^\top \bar{V}_j^{-1} x_t \epsilon_t \right).
  \end{equation*}
  Since the noise terms $\epsilon_t$ are independent and the actions $x_t$ within each phase (i.e. $t \in [t_j,\tau_j]$) are deterministic given the history at the beginning of the phase, we can use standard concentration of subgaussian random variables (e.g. as in equation 20.2 of \cite{lattimore2020bandit}) to get that 
  \begin{equation*}
    |\ttwo| \leq \rho \sqrt{2\log\left(\frac{2}{\delta'} \right) \sum_{t=t_j}^{\tau_j} (u^\top \bar{V}_j^{-1} x_s)^2}
  \end{equation*}
  with probability at least $1 - \delta'$ for some $\delta' \in (0,1)$.
  Also, since $\tilde{V} \preceq \bar{V}_j$,
  \begin{equation*}
    \sum_{t=t_j}^{\tau_j} (u^\top \bar{V}_j^{-1} x_t)^2 = u^\top \bar{V}_j^{-1} \left( \sum_{t=t_j}^{\tau_j} x_t x_t^\top \right)\bar{V}_j^{-1} u = u^\top \bar{V}_j^{-1} \tilde{V} \bar{V}_j^{-1} u \leq u^\top \bar{V}_j^{-1} \bar{V}_j \bar{V}_j^{-1} u = \| u \|_{\bar{V}_j^{-1}}^2.
  \end{equation*}
  Therefore, with probability at least $1 - \delta'$, it holds that
  \begin{equation*}
    |u^\top (\hat{\theta}_j - \theta)| \leq | \tone | + | \ttwo | \leq \left( \rho \sqrt{2 \log\left(\frac{2}{\delta'} \right)} + \sqrt{\lambda} S \right) \| u \|_{\bar{V}_j^{-1}}.
  \end{equation*}
  Note that the same applies replacing $\theta$ with $a$.
  Then, by taking
  \begin{equation*}
    \beta = \rho \sqrt{2 \log \left( \frac{4 k J}{\delta} \right)} + \sqrt{\lambda} S,
  \end{equation*}
  it holds that $|u^\top (\hat{\theta}_j - \theta)| \leq \beta \| u \|_{\bar{V}_j^{-1}}$ and $|u^\top (\hat{a}_j - a)| \leq \beta \| u \|_{\bar{V}_j^{-1}}$ for all $u \in \{ u_1, ..., u_k \}$ and all $j \in [J]$ with probability at least $1 - \delta$.
  Since all $x \in \Xc$ can be written as $\alpha u$ for some $\alpha \geq 0$ and $u \in \{ u_1, ..., u_k \}$, it holds under the same conditions that
  \begin{equation*}
    |x^\top (\hat{\theta}_j - \theta)| = \alpha |u^\top (\hat{\theta}_j - \theta)| \leq \alpha \beta \| u \|_{\bar{V}_j^{-1}} = \beta \| x \|_{\bar{V}_j^{-1}}.
 \end{equation*}
\end{proof}

We define the event that this confidence set holds as
\begin{equation*}
  \Econf := \left\{ |x^\top (\hat{\theta}_j - \theta)| \leq \| x \|_{\bar{V}_j^{-1}} \beta, |x^\top (\hat{a}_j - a)| \leq \| x \|_{\bar{V}_j^{-1}} \beta \quad \forall x \in \Xc\ \forall j \in [J] \right\},
\end{equation*}
which occurs with probability at least $1 - \delta$.

\subsection{Regret analysis}

Now, we will prove the regret bound for Safe-PE.
In order to do so, we need some more notation.
The true maximum scaling for each direction is denoted by
\begin{equation}
  \bar{\zeta}_{i} := \max \left\{ \alpha \in [0, \alpha_i] : \alpha u_i^\top a \leq b \right\}.
\end{equation}
Also, let $v_{i,j} := \zeta_{i, j} u_{i}$ and $\bar{v}_i := \bar{\zeta}_i u_i$.
The index of the direction played at round $t$ and the optimal direction are denoted by $i_t$ and $i_*$, respectively.
When $i_t$ or $i_*$ are used in a subscript, the shorthand $t$ and $*$ are used.
With this, we prove a bound on the safe scalings.

\begin{lemma}
  \label{lem:gam_pe}
  Let Assumptions \ref{ass:set_bound}, \ref{ass:bounded} and \ref{ass:fstarconv}  hold and assume that $\Econf$ holds.
  For all $i \in [k]$ and $j \geq 1$, it holds that
  \begin{equation*}
    \frac{\zeta_{i,j}}{\bar{\zeta}_i} \geq 1 - \frac{2}{b} \| v_{i,j} \|_{\bar{V}_{j-1}^{-1}} \beta
  \end{equation*}
  and furthermore that
  \begin{equation*}
    \frac{\zeta_{i,j}}{\bar{\zeta}_i} \geq 1 - \frac{2}{b^2} S \| v_{i,j-1} \|_{\bar{V}_{j-1}^{-1}} \beta.
  \end{equation*}
\end{lemma}
\begin{proof}
  We condition on $\Econf$ throughout the proof without further reference.
  We aim to find a $\gamma \geq 0$ such that $v_{i,j}  = \gamma \bar{v}_i $.
  First, we show that $\mu \bar{v}_i$ is in $\Yc_j^p$, where $\mu = \frac{b}{2 \| \bar{v}_i \|_{\bar{V}_{j-1}^{-1}} \beta + b }$.
  This holds because,
  \begin{align*}
    \hat{a}_{j-1}^\top (\mu \bar{v}_i) + \beta \| \mu \bar{v}_i \|_{\bar{V}_{j-1}^{-1}} & = \mu \left( \hat{a}_{j-1}^\top \bar{v}_i + \beta \| \bar{v}_i \|_{\bar{V}_{j-1}^{-1}} \right)\\
    & = \mu \left( a^\top \bar{v}_i + (\hat{a}_{j-1}^\top - a)^\top \bar{v}_i + \beta \| \bar{v}_i \|_{\bar{V}_{j-1}^{-1}} \right)\\
    & \leq \mu \left( a^\top \bar{v}_i + 2 \beta \| \bar{v}_i \|_{\bar{V}_{j-1}^{-1}} \right)\\
    & \leq \mu \left( b + 2 \beta \| \bar{v}_i \|_{\bar{V}_{j-1}^{-1}} \right) = b
  \end{align*}
  where the first inequality is from the definition of $\Econf$ and the second inequality is due to the fact that $\bar{v}_i$ satisfies the constraints by definition.
  It follows that $\gamma \geq \frac{b}{2 \| \bar{v}_i \|_{\bar{V}_{j-1}^{-1}} \beta + b }$.
  Then, using the fact that $\gamma \| \bar{v}_i \|_{\bar{V}_{j-1}^{-1}} = \| \gamma \bar{v}_i \|_{\bar{V}_{j-1}^{-1}} = \| v_{i,j} \|_{\bar{V}_{j-1}^{-1}}$, we can rearrange this to get that
  \begin{equation*}
    \frac{\zeta_{i,j}}{\bar{\zeta}_i} = \gamma \geq 1 - \frac{2 \| v_{i,j} \|_{\bar{V}_{j-1}^{-1}} \beta}{b}
  \end{equation*}
  This is the first claim.

  Also, we know from the definition of $\zeta_{i,j+1}$ that
  \begin{equation*}
    \alpha_i \geq \zeta_{i,j+1} \geq \zeta_{i,j} \geq \zeta_{i,0} = \min \left( \frac{b}{S}, \alpha_i \right),
  \end{equation*}
  and therefore
  \begin{equation}
    \label{eqn:triv_scale}
    \frac{\zeta_{i,j}}{\zeta_{i,j-1}} \leq \frac{\alpha_i}{\zeta_{i,0}} = \frac{\alpha_i}{\min \left( \frac{b}{S}, \alpha_i \right)} = \frac{1}{\min \left( \frac{b}{S \alpha_i}, 1 \right)} \leq \frac{1}{\min \left( \frac{b}{S}, 1 \right)} = \frac{S}{b},
  \end{equation}
  where we use $\frac{b}{S} \leq 1$ from Assumption \ref{ass:bounded}.
  It follows that
  \begin{equation*}
    \frac{\zeta_{i,j}}{\bar{\zeta}_i} \geq 1 - \frac{2 \| v_{i,j} \|_{\bar{V}_{j-1}^{-1}} \beta}{b} = 1 - \frac{2 (\zeta_{i,j}/\zeta_{i,j-1}) \| v_{i,j-1} \|_{\bar{V}_{j-1}^{-1}} \beta}{b} \geq  1 - \frac{2 S \| v_{i,j-1} \|_{\bar{V}_{j-1}^{-1}} \beta}{b^2},
  \end{equation*}
  giving the second claim.
\end{proof}

Next, we show that the optimal action is never eliminated with high probability.

\begin{lemma}
  \label{lem:opt_dir}
  Let Assumptions \ref{ass:set_bound}, \ref{ass:bounded} and \ref{ass:fstarconv}  hold and suppose that $\Econf$ holds.
  It follows that $i_*$ is in $\Ac_j$ for all $j \in [J]$.
\end{lemma}
\begin{proof}
  First note that, given Lemma \ref{lem:gam_pe}, it holds that
  \begin{equation}
    \label{eqn:opt_scaling}
    \begin{split}
      \theta^\top (x_* - v_{*,j}) & = \theta^\top u_* ( \bar{\zeta}_* - \zeta_{*,j})\\
      & = \bar{\zeta}_* \theta^\top u_* (1 - \zeta_{*,j}/\bar{\zeta}_* )\\
      & \leq \bar{\zeta}_* S (1 - \zeta_{*,j}/\bar{\zeta}_* )\\
      & \leq \frac{2 S \| v_{*,j} \|_{\bar{V}_{j-1}^{-1}} \beta}{b}
    \end{split}
  \end{equation}
  Then, conditioning on $\Econf$, we have for all $j \in [J]$ that
  \begin{equation}
    \label{eqn:opt_ret}
    \begin{split}
    \hat{\theta}_j^\top( \hat{x}_j - v_{*,j}) &= \hat{x}_j^\top ( \hat{\theta}_j - \theta) + \theta^\top \hat{x}_j - \theta^\top v_{*,j} + v_{*,j}^\top ( \theta - \hat{\theta}_j)\\
    & \leq \hat{x}_j^\top ( \hat{\theta}_j - \theta) + \theta^\top (x_* - v_{*,j}) + v_{*,j}^\top ( \theta - \hat{\theta}_j)\\
    & \leq \beta \| \hat{x}_j \|_{\bar{V}_j^{-1}} + \frac{2 S \| v_{*,j} \|_{\bar{V}_{j-1}^{-1}} \beta}{b} + \beta \| v_{*,j} \|_{\bar{V}_j^{-1}},
    \end{split}
  \end{equation}
  where the first inequality comes from the optimality of $x_*$ (i.e. $\theta^\top \hat{x}_j \leq \theta^\top x_*$) and the second inequality applies the confidence set to the first and third terms and \eqref{eqn:opt_scaling} to the second term.
  Note that \eqref{eqn:opt_ret} is exactly the condition for directions to be retained by the algorithm in line \ref{lne:elim_dirs}.
  Therefore, if $i_*$ is in $\Ac_j$ for some $j$, then it will be in $\Ac_{j+1}$.
  Then, since $i_* \in \Ac_1$, it holds that $i_* \in \Ac_j$ for all $j \in [J]$ by induction.
\end{proof}

Next, we relate the actions in $\Yc_j^p$ to the chosen actions.

\begin{lemma}
  \label{lem:max_width}
  For all $j \in [J - 1]$, it holds that
  \begin{equation}
    \label{eqn:wjs}
    w_j := \max_{x \in \Yc_j^p} \| x \|_{\bar{V}_{j}^{-1}} \leq \frac{1}{t_{j+1} - t_j} \sum_{t=t_j}^{t_{j+1} - 1} \| x_t \|_{V_t^{-1}}.
  \end{equation}
\end{lemma}
\begin{proof}
  This proof essentially follows from Lemma 39 in \cite{kocak2020spectral} but we give it for completeness.
  Note that for any $t \in [t_j, \tau_j]$, it holds that $\bar{V}_j = \sum_{s=t + 1}^{\tau_j} x_s x_s^\top + V_t \succeq V_t \succ 0$ and therefore that $\| x \|_{\bar{V}_j^{-1}} \leq \| x \|_{V_{t}^{-1}}$ for any $x$.
  Also, since $j \leq J - 1$, it holds that $\tau_j = t_{j+1} - 1$.
  It follows for any $x \in \Yc_j^p$ that
  \begin{align*}
    (t_{j+1} - t_j) \| x \|_{\bar{V}_j^{-1}} & \leq \sum_{t=t_j}^{t_{j+1} - 1} \| x \|_{V_t^{-1}} \\
    & \leq \sum_{t=t_j}^{t_{j+1} - 1} \max_{x \in \Yc_j^p} \| x \|_{V_t^{-1}}\\
    & = \sum_{t=t_j}^{t_{j+1} - 1} \| x_t \|_{V_t^{-1}}.
  \end{align*}
\end{proof}

Lastly, we put everything together and prove the complete regret bound for the Safe-PE in Theorem~\ref{thm:safepe2}, which shows that the regret is $\Oc(\frac{1}{b^2}\sqrt{d T \log(T) \log(k\log(T))} )$.

\begin{theorem}[Complete version of Theorem \ref{thm:safepe}]
  \label{thm:safepe2}
  Let Assumptions \ref{ass:set_bound}, \ref{ass:bounded}, \ref{ass:pe_noise} and \ref{ass:fstarconv} hold.
  Then, the regret of Safe-PE (Algorithm \ref{alg:pe1}) satisfies
  \begin{equation*}
    R_T  \leq  6 S + 5 \beta \left( \frac{24 S^2   }{b^2} + 10 \right) \sqrt{2 d T \log \left(1 + \frac{T}{\lambda d} \right)}
  \end{equation*}
  with probability at least $1 - \delta$.
\end{theorem}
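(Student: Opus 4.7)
The plan is to bound the regret phase by phase and then sum. Throughout, I would condition on the confidence event of Lemma~\ref{lem:pe_conf} (which holds with probability at least $1-\delta$), under which the preceding lemma ensures $i_* \in \Ac_j$, and hence $v_{*, j} \in \Yc_j^p$, for every $j \geq 1$. The first two phases together consist of three rounds, each contributing at most $2\|\theta\| \leq 2S$ to the regret since $\|x\| \leq 1$ for all $x \in \Xc$; this gives the $6S$ constant in the theorem, and the remainder of the argument focuses on phases $j \geq 3$.

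For each round $t$ in phase $j \geq 3$, the played action is $x_t = v_{i_t, j}$ for some $i_t \in \Ac_j$, and I would decompose the instantaneous regret as
\begin{equation*}
  r_t = \theta^\top(x_* - v_{*, j}) + \theta^\top(v_{*, j} - v_{i_t, j}).
\end{equation*}
The first term is a ``safety conservatism'' cost and is bounded via Lemma~\ref{lem:gam_pe} together with $\|\theta\| \leq S$ by $(2S/b)\beta \|v_{*, j}\|_{\bar{V}_{j-1}^{-1}}$. For the second, since $i_t$ survived the elimination at phase $j-1$, its condition in line~\ref{lne:elim_dirs} combined with the fact that $v_{*, j-1} \in \Yc_{j-1}^p$ (so $\hat{x}_{j-1}$ is at least as good as $v_{*, j-1}$ under the pessimistic objective $\hat\theta_{j-1}^\top x - \beta \|x\|_{\bar{V}_{j-1}^{-1}}$), together with two applications of Lemma~\ref{lem:pe_conf} to pass from $\hat\theta_{j-1}$ to $\theta$, should yield
\begin{equation*}
  \theta^\top(v_{*, j-1} - v_{i_t, j-1}) \leq 2\beta \|v_{*, j-1}\|_{\bar{V}_{j-1}^{-1}} + 2\beta \|v_{i_t, j-1}\|_{\bar{V}_{j-1}^{-1}} + \frac{2S\beta}{b}\|v_{i_t, j-1}\|_{\bar{V}_{j-2}^{-1}}.
\end{equation*}
I would then promote this to a bound on $\theta^\top(v_{*, j} - v_{i_t, j})$ using the scaling inequality \eqref{eqn:triv_scale}, which gives $\zeta_{i, j}/\zeta_{i, j-1} \leq S/b$; the same inequality also lets me replace $\|v_{*, j}\|_{\bar{V}_{j-1}^{-1}}$ by $(S/b)\|v_{*, j-1}\|_{\bar{V}_{j-1}^{-1}}$ in the safety term.

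Next, I would sum the per-round bound across the $2^{j-1}$ rounds of phase $j$. Since the bound depends only on the phase, the sum is simply $2^{j-1}$ times the per-round bound. Then I invoke Lemma~\ref{lem:max_width} at phases $j-1$ and $j-2$ to convert each width $\|v\|_{\bar{V}^{-1}}$ appearing above (where the vector lies in $\Yc_{j-1}^p$ or, after another scaling by $S/b$, in $\Yc_{j-2}^p$) into a sum of widths $\|x_s\|_{V_s^{-1}}$ of actions actually played in earlier phases, at the cost of a constant factor accounting for the mismatch between $2^{j-1}$ and $2^{j-2}$ (or $2^{j-3}$). Summing over all phases yields a bound of the form $6S + C\beta(S^2/b^2 + 1)\sum_{t=1}^T \|x_t\|_{V_t^{-1}}$ for a numerical constant $C$. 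Finally, Cauchy--Schwarz followed by Lemma~\ref{lem:sp_elip} gives $\sum_{t=1}^T \|x_t\|_{V_t^{-1}} \leq \sqrt{2dT \log(1 + T/(\lambda d))}$, producing the claimed inequality.

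The main obstacle is the careful bookkeeping across phases: the elimination criterion at phase $j-1$ references widths under $\bar{V}_{j-2}^{-1}$ of vectors whose scalings $\zeta_{i, j-1}$ may exceed those of the corresponding elements of $\Yc_{j-2}^p$, so multiple applications of \eqref{eqn:triv_scale} are required to reduce every vector to an element of the correct pessimistic set where Lemma~\ref{lem:max_width} directly applies. Each such scaling step contributes a factor of $S/b$, and the $(4S^2/b^2 + 1)$ factor in the theorem reflects the squared scaling arising from the combination of the safety conservatism and elimination chains, plus constant-order terms from the widths at $\bar{V}_{j-1}^{-1}$ that do not require a scaling step.
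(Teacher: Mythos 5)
Your proposal follows essentially the same route as the paper's proof: condition on the confidence event so the optimal direction survives elimination, pay $6S$ for the first two phases, split the per-round regret into a safety-conservatism term (Lemma~\ref{lem:gam_pe}) and a suboptimality term controlled by the elimination criterion, convert maximal widths to played widths via Lemma~\ref{lem:max_width}, and finish with Cauchy--Schwarz and the elliptic potential. The only cosmetic difference is that the paper anchors the decomposition at the phase-$(j-1)$ scalings and absorbs the cross-phase scaling changes through the additive bound $|\theta^\top v_{i,j}-\theta^\top v_{i,j-1}|\le 2S^2\beta w_{j-2}/b^2$ derived from Lemma~\ref{lem:gam_pe}, rather than the multiplicative promotion via \eqref{eqn:triv_scale} you describe, but this is bookkeeping rather than a different argument.
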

\begin{proof}
  Without further reference, we condition on $\Econf$ throughout.
  We decompose the instantaneous regret for $t \in [t_j,\tau_j]$ as
  \begin{equation*}
    r_t = \theta^\top x_* - \theta^\top x_t = \underbrace{\theta^\top x_* - \theta^\top v_{*,j-1}}_{\tone} + \underbrace{\theta^\top v_{*,j-1} - \theta^\top v_{t,j-1}}_{\ttwo} + \underbrace{\theta^\top v_{t,j-1} - \theta^\top x_t.}_{\tthree}
  \end{equation*}
  We study each of the terms individually in the following.

  \textbf{Term I:}
  It follows from Lemma \ref{lem:gam_pe} that
  \begin{equation*}
    \tone = \theta^\top (x_* - v_{*,j-1}) = \bar{\zeta}_* \theta^\top u_* (1 - \zeta_{*,j-1}/\bar{\zeta}_*) \leq \frac{2 S^2 \beta}{b^2}  \| v_{*,j-2} \|_{\bar{V}_{j-2}^{-1}}.
  \end{equation*}

  \textbf{Term II:}
  We further decompose $\ttwo$ as 
  \begin{align*}
    \ttwo & = \theta^\top v_{*,j-1} - \theta^\top v_{t,j-1}\\
    & = \underbrace{\hat{\theta}_{j-1}^\top v_{*,j-1} - \hat{\theta}_{j-1}^\top v_{t,j-1} - \beta \| v_{*,j-1} \|_{\bar{V}_{j-1}^{-1}}}_{\ttwoa} + \underbrace{(\theta - \hat{\theta}_{j-1})^\top v_{*,j-1} + (\hat{\theta}_{j-1} - \theta)^\top v_{t,j-1} + \beta \| v_{*,j-1} \|_{\bar{V}_{j-1}^{-1}}}_{\ttwob}
  \end{align*}
  Then, we have that
  \begin{align*}
    \ttwoa & = \hat{\theta}_{j-1}^\top v_{*,j-1} - \hat{\theta}_{j-1}^\top v_{t,j-1} - \beta \| v_{*,j-1} \|_{\bar{V}_{j-1}^{-1}}\\
    & \leq \hat{\theta}_{j-1}^\top \hat{x}_{j-1} - \hat{\theta}_{j-1}^\top v_{t,j-1} - \beta \| \hat{x}_{j-1} \|_{\bar{V}_{j-1}^{-1}}\\
    & \leq 2 \beta \| v_{t,j - 1} \|_{\bar{V}_{j - 1}^{-1}} +  \frac{2 S  \beta}{b} \| v_{t,j - 1} \|_{\bar{V}_{j - 2}^{-1}} \\
    & \leq 2 \beta \| v_{t,j - 1} \|_{\bar{V}_{j - 1}^{-1}} +  \frac{2 S^2  \beta}{b^2 } \| v_{t,j - 2} \|_{\bar{V}_{j - 2}^{-1}}
  \end{align*}
  where the first inequality follows from the definition of $\hat{x}_j$ in line \ref{lne:max_point} of Algorithm \ref{alg:pe1} given that $v_{*,j-1} \in \Yc_{j-1}^p$, the second inequality comes from the fact that $i_t \in \Ac_j$ and therefore satisfied the criteria in line \ref{lne:elim_dirs}, and the third inequality comes from \eqref{eqn:triv_scale}.

  Also, we have that
  \begin{align*}
    \ttwob & = (\theta - \hat{\theta}_{j-1})^\top v_{*,j-1} + (\theta - \hat{\theta}_{j-1})^\top v_{t,j-1} + \beta \| v_{*,j-1} \|_{\bar{V}_{j-1}^{-1}}\\
    & \leq 2 \beta \| v_{*,j-1} \|_{\bar{V}_{j-1}^{-1}} + \beta \| v_{t,j-1} \|_{\bar{V}_{j-1}^{-1}}
  \end{align*}
  where the inequality is from the definition of $\Econf$.

  Putting everything together, we have that
  \begin{equation*}
    \ttwo \leq 3 \beta \| v_{t,j - 1} \|_{\bar{V}_{j - 1}^{-1}} + 2 \beta \| v_{*,j-1} \|_{\bar{V}_{j-1}^{-1}} + \frac{2 S^2  \beta}{b^2 } \| v_{t,j - 2} \|_{\bar{V}_{j - 2}^{-1}}.
  \end{equation*}

  \textbf{Term III:}
  We have that
  \begin{align*}
    \tthree & = \theta^\top (v_{t,j-1} - x_t)\\
    & = \theta^\top u_t (\zeta_{t,j-1} - \zeta_{t,j})\\
    & \leq S |\zeta_{t,j-1} - \zeta_{t,j} |\\
    & = S (\zeta_{t,j} - \zeta_{t,j-1}) \tag{a} \label{eqn:tth_a}\\
    & \leq S (\bar{\zeta}_t - \zeta_{t,j-1}) \tag{b} \label{eqn:tth_b}\\
    & = S \bar{\zeta}_t (1 - \zeta_{t,j-1} / \bar{\zeta}_t)\\
    & \leq \frac{2 S^2 \beta}{b^2} \|v_{t,j-2}\|_{\bar{V}_{j-2}^{-1}} \tag{c} \label{eqn:tth_c}
  \end{align*}
  where \eqref{eqn:tth_a} is from the fact that $\zeta_{t,j} \geq \zeta_{t,j-1}$ since $\zeta_{i,j}$ is monotone in $j$ by definition (see line \ref{lne:scaling}), \eqref{eqn:tth_b} is due to the fact that $\zeta_{t,j} \leq \bar{\zeta}_t$ give that $\Yc_j^p \subseteq \Yc$ (conditioned on $\Econf$), and \eqref{eqn:tth_c} is from Lemma \ref{lem:gam_pe}.

  \textbf{Completing the proof:}
  Using the bounds established for each of the terms, it holds that
  \begin{align*}
    r_t & = \tone + \ttwo + \tthree\\
    & \leq 3 \beta \| v_{t,j - 1} \|_{\bar{V}_{j - 1}^{-1}} + 2 \beta \| v_{*,j-1} \|_{\bar{V}_{j-1}^{-1}} + \frac{2 S^2 \beta}{b^2}  \| v_{*,j-2} \|_{\bar{V}_{j-2}^{-1}} + \frac{4 S^2 \beta}{b^2} \|v_{t,j-2}\|_{\bar{V}_{j-2}^{-1}}\\
    & \leq 5 \beta w_{j-1} + \frac{6 S^2 \beta}{b^2} w_{j-2},
  \end{align*}
  where we use the fact $i_* \in \Ac_{j-1} \subseteq \Ac_{j-2}$ (due to Lemma \ref{lem:opt_dir}) and $i_t \in \Ac_j \subseteq \Ac_{j-1} \subseteq \Ac_{j-2}$ which implies that $v_{*,j-1}, v_{t,j-1} \in \Yc_{j - 1}^p$ and $v_{*,j-2}, v_{t,j-2} \in \Yc_{j - 2}^p$.
  Therefore $\| v_{t,j - 1} \|_{\bar{V}_{j - 1}^{-1}} \leq w_{j-1}$, $\| v_{*,j - 1} \|_{\bar{V}_{j - 1}^{-1}} \leq w_{j-1}$, $\| v_{t,j - 2} \|_{\bar{V}_{j - 2}^{-1}} \leq w_{j-2}$ and $\| v_{*,j - 2} \|_{\bar{V}_{j - 2}^{-1}} \leq w_{j-2}$ (where $w_j$ is defined in \eqref{eqn:wjs}).
  
  Then, we study the regret within a single phase $j \geq 3$,
  \begin{align*}
    \sum_{t = t_j}^{\tau_j} r_t & \leq (\tau_{j} - t_j + 1) \left( \frac{6 S^2 \beta w_{j-2} }{b^2} + 5 \beta w_{j-1} \right)\\
    & \leq (t_{j+1} - t_j) \left( \frac{6 S^2 \beta w_{j-2} }{b^2} + 5 \beta w_{j-1} \right) \tag{a} \label{eqn:reg_ph_a}\\
    & = \frac{24 S^2 \beta  }{b^2} (t_{j-1} - t_{j-2}) w_{j-2}  + 10 \beta (t_j - t_{j-1}) w_{j-1} \tag{b} \label{eqn:reg_ph_b} \\
    & \leq \frac{24 S^2 \beta  }{b^2} \sum_{t=t_{j-2}}^{t_{j-1} - 1} \| x_t \|_{V_t^{-1}} + 10 \beta \sum_{t=t_{j-1}}^{t_{j} - 1} \| x_t \|_{V_t^{-1}} \tag{c} \label{eqn:reg_ph_c} \\
    & \leq \frac{24 S^2 \beta  }{b^2} \sqrt{t_{j-2} \sum_{t=t_{j-2}}^{t_{j-1} - 1} \| x_t \|_{V_t^{-1}}^2} + 10 \beta \sqrt{t_{j-1} \sum_{t=t_{j-1}}^{t_{j} - 1} \| x_t \|_{V_t^{-1}}^2} \tag{d} \label{eqn:reg_ph_d} \\
    & \leq \frac{24 S^2 \beta  }{b^2} \sqrt{2 d t_{j-2} \log \left(1 + \frac{t_{j-2}}{\lambda d} \right)} + 10 \beta \sqrt{2 d t_{j-1} \log \left( 1 + \frac{t_{j-1}}{\lambda d} \right)} \tag{e} \label{eqn:reg_ph_e},
  \end{align*}
  where \eqref{eqn:reg_ph_a} is due to the fact that $\tau_j \leq t_{j+1} - 1$, \eqref{eqn:reg_ph_b} is due to the fact that $t_{j+1} - t_{j} = 2(t_{j} - t_{j -1})$, \eqref{eqn:reg_ph_c} is due to Lemma \ref{lem:max_width}, \eqref{eqn:reg_ph_d} is Cauchy-Schwarz and \eqref{eqn:reg_ph_e} is from Lemma \ref{lem:sp_elip}.

  Finally, we can apply this to the total regret as
  \begin{align*}
    R_T & = \sum_{j=1}^{J} \sum_{t=t_j}^{\tau_j} \theta^\top (x_* - x_t) \\
    & \leq 6 S + \sum_{j=3}^{J} \sum_{t=t_j}^{\tau_j} \theta^\top (x_* - x_t) \\
    & \leq 6 S + \sum_{j=3}^{J} \left( \frac{24 S^2 \beta  }{b^2} \sqrt{2 d t_{j-2} \log \left(1 + \frac{t_{j-2}}{\lambda d} \right)} + 10 \beta \sqrt{2 d t_{j-1} \log \left( 1 + \frac{t_{j-1}}{\lambda d} \right)} \right)\\
    & \leq 6 S + \beta \left( \frac{24 S^2   }{b^2} + 10 \right) \sqrt{2 d \log \left(1 + \frac{T}{\lambda d} \right)} \sum_{j=3}^{J} \sqrt{t_{j}}\\
    & \leq 6 S + 5 \beta \left( \frac{24 S^2   }{b^2} + 10 \right) \sqrt{2 d T \log \left(1 + \frac{T}{\lambda d} \right)},
  \end{align*}
  where the last inequality uses $\sum_{j=1}^{J} \sqrt{t_j} = \sum_{j=1}^{J} (\sqrt{2})^j \leq 5 \sqrt{T}$.
\end{proof}

\section{Proofs for linked convex constraints}
\label{apx:convex}

In this section, we prove the regret guarantees for the setting with linked convex constraints.
First, we give some notation and specialize the assumptions from the original setting to this setting.
We denote the vector formed from the $i$th row of $A$ as $a_i$ such that $A = [a_1\ ... \ a_n]^\top$, and the $i$th element of $z_t$ as $z_{t,i}$.
\begin{assumption}
\label{ass:bounded2}
    There exists positive reals $S_A$ and $S_\theta$ such that $\| a_i \| \leq S_A$ for all $i \in [n]$ and $\| \theta \| \leq S_\theta$. Let $S := \max(S_A, S_\theta)$.
    Also, there exists positive real $r$ such that $r \Bb \subseteq \Gc$.
    Lastly, it holds that $\nu := \frac{r}{\sqrt{n} S_A} \leq 1$.\footnote{If $\nu > 1$, then for all $x \in \Xc$ it holds that $\| A x \| \leq \| A \|_F \| x \| < \sqrt{n} S \nu = r$ given that $\| x \| \leq 1 < \nu$ for all $x \in \Xc$. }
\end{assumption}
In the following subsections will first study ROFUL in this setting and then GenOP and Safe-PE.

\subsection{ROFUL under linked convex constraints}

We first update the definitions of ROFUL to this setting, then will prove the regret bounds.
We define the estimator of the vector $a_i$ as
\begin{equation*}
  \hat{a}_{t,i} =  V_t^{-1} \sum_{k=1}^{t-1} x_k z_{k,i}
\end{equation*}
and $\hat{A}_t = [\hat{a}_{t,1}\ ... \ \hat{a}_{t,n}]^\top$.
We then state the specific structural assumption on the noise terms.
\begin{assumption}
  \label{ass:noise2}
    For all $t \in [T]$, it holds that $\Eb[\epsilon_t | x_1, \epsilon_1, ..., \epsilon_{t-1}, x_t] = 0$ and $\Eb[\exp(\lambda \epsilon_t) | x_1, \epsilon_1, ..., \epsilon_{t-1}, x_t] \leq \exp(\frac{\lambda^2 \rho^2}{2}), \forall \lambda \in \Rb$.
    The same holds replacing $\epsilon_t$ with $\eta_{t,i}$ for each $i \in [n]$.
\end{assumption}
With this, we give a generalization of the confidence sets originally defined in Lemma \ref{lem:conf_sets}.
\begin{lemma}[Theorem 2 in \citet{abbasi2011improved}]
  \label{lem:conf_sets2}
  Let Assumptions \ref{ass:set_bound}, \ref{ass:bounded2} and \ref{ass:noise2} hold.
  Also, let
  \begin{equation}
    \beta_t := \rho \sqrt{d \log \left( \frac{1 + (t-1) / \lambda}{\delta / (n + 1)} \right) } + \sqrt{\lambda} S.
  \end{equation}
  Then with probability at least $1 - \delta$, it holds that both $| x^\top (\hat{\theta}_t - \theta)| \leq \beta_t \| x \|_{V_t^{-1}}$ and $(\hat{A}_t - A) x \in \beta_t \| x \|_{V_t^{-1}} \Bb_\infty $ for all $x \in \Rb^d$ and all $t \geq 1$.
\end{lemma}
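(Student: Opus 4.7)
The plan is to apply Theorem~2 of \citet{abbasi2011improved} a total of $n+1$ times---once to the reward parameter $\theta$ and once to each of the $n$ rows $a_i$ of the constraint matrix $A$---and then take a union bound. The choice of confidence radius $\beta_t$ with $\delta/(n+1)$ in place of $\delta/2$ (compared to Lemma~\ref{lem:conf_sets}) is precisely what makes this union bound close.

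First I would handle $\theta$: the estimator $\hat{\theta}_t = V_t^{-1}\sum_{k=1}^{t-1} x_k y_k$ is built from observations $y_k = \theta^\top x_k + \epsilon_k$ where $\epsilon_k$ is conditionally $\rho$-subgaussian by Assumption~\ref{ass:noise2}, and $\|\theta\|\le S$ by Assumption~\ref{ass:bounded2}. Applying \citet[Theorem~2]{abbasi2011improved} with confidence parameter $\delta/(n+1)$ yields $\|\hat\theta_t - \theta\|_{V_t} \le \beta_t$ for all $t\ge 1$ with probability at least $1-\delta/(n+1)$, and Cauchy--Schwarz in the $V_t^{-1}$-weighted inner product turns this into $|x^\top(\hat\theta_t-\theta)|\le \beta_t\|x\|_{V_t^{-1}}$ for all $x$.

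Next I would repeat the same argument for each row $a_i$. Observe that $z_{k,i}=a_i^\top x_k + \eta_{k,i}$, the estimator $\hat{a}_{t,i}=V_t^{-1}\sum_{k=1}^{t-1}x_k z_{k,i}$ has exactly the regularized-least-squares form required by the self-normalized tail bound, the noise $\eta_{k,i}$ is conditionally $\rho$-subgaussian by Assumption~\ref{ass:noise2}, and $\|a_i\|\le S$ by Assumption~\ref{ass:bounded2}. Hence a second application of \citet[Theorem~2]{abbasi2011improved} (again at level $\delta/(n+1)$) gives $|x^\top(\hat{a}_{t,i}-a_i)|\le \beta_t\|x\|_{V_t^{-1}}$ for all $x$ and $t$, with probability at least $1-\delta/(n+1)$. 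To convert the $n$ row-wise bounds into the stated box-containment statement, I would just note that by definition of the infinity norm,
\begin{equation*}
  \|(\hat A_t - A)x\|_\infty = \max_{i\in[n]} |x^\top(\hat a_{t,i}-a_i)|,
\end{equation*}
so the simultaneous row-wise bound is equivalent to $(\hat A_t - A)x \in \beta_t\|x\|_{V_t^{-1}}\Bb_\infty$.

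Finally, a union bound over the $n+1$ events (one for $\theta$ and one for each $a_i$) yields total failure probability at most $(n+1)\cdot \delta/(n+1) = \delta$, giving the claim. There is no real obstacle here: the proof is essentially bookkeeping around the union bound, with the one thing to be careful about being that the radius $\beta_t$ stated in the lemma already absorbs the $\log(n+1)$ inflation (through the $\delta/(n+1)$ factor), so a single common $\beta_t$ suffices for both $\theta$ and every row of $A$.
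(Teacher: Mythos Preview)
Your proposal is correct and is precisely the intended argument: the paper does not spell out a proof for this lemma but simply cites \citet[Theorem~2]{abbasi2011improved}, and your union-bound over the $n+1$ applications (with the $\delta/(n+1)$ already baked into $\beta_t$) together with the observation that $\|(\hat A_t-A)x\|_\infty=\max_i |x^\top(\hat a_{t,i}-a_i)|$ is exactly how one fills in the details.
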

We use $\Econf$ to refer to the event that the confidence sets in Lemma \ref{lem:conf_sets2} hold for all rounds.
The optimistic and pessimistic sets then become
\begin{equation}
  \label{eqn:opt_set_conv}
  \Yc_t^o = \{ x \in \Xc : \hat{A}_t x + \beta_t \| x \|_{V_t^{-1}} \Bb_\infty  \cap \Gc \neq \emptyset \}
\end{equation}
and
\begin{equation}
  \label{eqn:pess_set_conv}
  \Yc_t^p = \{ x \in \Xc : \hat{A}_t x + \beta_t \| x \|_{V_t^{-1}} \Bb_\infty  \subseteq \Gc \}.
\end{equation}
The main challenge in this setting is characterizing the scaling required to take any point in $\Yc_t^o$ in to $\Yc_t^p$, which we lower bound in the following lemma.

\begin{lemma}
  \label{lem:gamma}
  Let Assumption \ref{ass:set_bound} hold.
  Also, let $x$ be any point in $\Yc_t^o$ and $\zeta = \max\{ \mu \in [0,1] : \mu x \in \Yc_t^p \}$. 
  Then, for all $t$, it holds that
  \begin{equation*}
    \zeta \geq  \frac{r}{r + 2 \sqrt{n} \beta_t \| x \|_{V_t^{-1}}},
  \end{equation*}
  and, with $\bar{x} = \zeta x$, that
  \begin{equation*}
    \zeta \geq 1 -  \frac{2 \sqrt{n}}{r} \beta_t \| \bar{x} \|_{V_t^{-1}}.
  \end{equation*}
\end{lemma}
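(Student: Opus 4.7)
The plan is to follow the geometric argument sketched around Figure~\ref{fig:pf_vis}, treating the box $\Hc(y) := \hat{A}_t y + \beta_t \| y \|_{V_t^{-1}} \Bb_\infty$ as the uncertainty set for $A y$. Note that $\Hc$ is positive homogeneous, i.e. $\Hc(\alpha y) = \alpha \Hc(y)$ for all $\alpha \geq 0$, and that the definitions give $y \in \Yc_t^o \iff \Hc(y) \cap \Gc \neq \emptyset$ and $y \in \Yc_t^p \iff \Hc(y) \subseteq \Gc$. With this in hand, the goal is to exhibit an explicit $\alpha \in [0,1]$ for which $\alpha x \in \Yc_t^p$; then $\zeta \geq \alpha$ follows by definition.

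First, using $x \in \Yc_t^o$, I would pick a witness $u \in \Gc \cap \Hc(x)$. The key observation is that $\alpha u \in \alpha \Hc(x) = \Hc(\alpha x)$, which means $\alpha u$ is the center around which I want to enclose $\Hc(\alpha x)$ by a Euclidean ball. Since $\Bb_\infty \subseteq \sqrt{n}\, \Bb$, every point of $\Hc(\alpha x)$ lies within Euclidean distance $\alpha \sqrt{n}\, \beta_t \| x \|_{V_t^{-1}}$ of $\alpha \hat{A}_t x$, and $\alpha u$ is within the same distance of $\alpha \hat{A}_t x$; so by the triangle inequality
\begin{equation*}
  \Hc(\alpha x) \subseteq \alpha u + 2 \alpha \sqrt{n}\, \beta_t \| x \|_{V_t^{-1}} \Bb.
\end{equation*}

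Next, I would invoke Fact~\ref{prop:shrunk} with $\Dc = \Gc$: since $u \in \Gc$ and $r \Bb \subseteq \Gc$, the ball $\alpha u + (1-\alpha) r \Bb$ is contained in $\Gc$ for every $\alpha \in [0,1]$. Combining this with the previous inclusion, $\Hc(\alpha x) \subseteq \Gc$ will hold as soon as $2 \alpha \sqrt{n}\, \beta_t \| x \|_{V_t^{-1}} \leq (1-\alpha) r$, which is equivalent to $\alpha \leq r / (r + 2\sqrt{n}\, \beta_t \| x \|_{V_t^{-1}})$. Choosing $\alpha$ equal to this upper bound (and noting $\alpha x \in \Xc$ by star-convexity from Assumption~\ref{ass:set_bound}) establishes the first claim. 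The second claim is then routine algebra: since $\bar x = \zeta x$, we have $\| \bar x \|_{V_t^{-1}} = \zeta \| x \|_{V_t^{-1}}$, and rearranging $\zeta \geq r / (r + 2\sqrt{n}\, \beta_t \| x \|_{V_t^{-1}})$ yields $\zeta \geq 1 - (2\sqrt{n}/r)\, \beta_t \| \bar x \|_{V_t^{-1}}$.

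The main obstacle, relative to the linear-constraint analogue in Lemma~\ref{lem:gam2}, is the absence of a simple scalar inequality like $\alpha \hat{a}_t^\top x \leq b - \alpha \beta_t \| x \|_{V_t^{-1}}$; here the constraint output is multidimensional and $\Gc$ is arbitrary convex. That is precisely why Fact~\ref{prop:shrunk} is needed: it converts the qualitative fact that $u$ lies in $\Gc$ into a quantitative inward margin of $(1-\alpha) r$, against which the shrunken infinity-box uncertainty can be absorbed. The $\sqrt{n}$ factor is unavoidable in this reduction because we bound the $\ell_\infty$-box by an $\ell_2$-ball.
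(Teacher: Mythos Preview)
Your proposal is correct and follows essentially the same approach as the paper's proof: pick a witness $u \in \Gc \cap \Hc(x)$, show $\Hc(\alpha x) \subseteq \alpha u + 2\alpha\sqrt{n}\,\beta_t \|x\|_{V_t^{-1}}\Bb$, apply Fact~\ref{prop:shrunk} to get $\alpha u + (1-\alpha)r\Bb \subseteq \Gc$, and equate radii. The only cosmetic difference is that the paper first uses the set inclusion $\Bb_\infty \subseteq v + 2\Bb_\infty$ (with $v$ the $\ell_\infty$-witness) and then passes to the Euclidean ball, whereas you pass to the Euclidean ball immediately and invoke the triangle inequality; the two routes are equivalent.
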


\begin{proof}
  From the definition of $\Yc_t^o$, we can choose a $v \in \Bb_\infty$ such that
  \begin{equation*}
      u := \hat{A}_t x +  \beta_t \| x \|_{V_t^{-1}} v \in \Gc.
  \end{equation*}
  For $\alpha \in [0,1]$, we know that
  \begin{align*}
      \hat{A}_t \alpha x + \beta_t \| \alpha x \|_{V_t^{-1}} \Bb_{\infty} & \subseteq \hat{A}_t \alpha x + \beta_t \| \alpha x \|_{V_t^{-1}} \left(  2 \Bb_{\infty} + v  \right)\\
      & = \hat{A}_t \alpha x + \beta_t \| \alpha x \|_{V_t^{-1}} v + 2 \beta_t \| \alpha x \|_{V_t^{-1}} \Bb_{\infty} \\
      & = \alpha \left( \hat{A}_t x + \beta_t \| x \|_{V_t^{-1}} v \right) + 2 \beta_t \| \alpha x \|_{V_t^{-1}} \Bb_{\infty} \\
      & = \alpha u + 2 \beta_t \| \alpha x \|_{V_t^{-1}} \Bb_{\infty}.
  \end{align*}
  From Fact \ref{prop:shrunk} and the fact that $u$ is in $\Gc$, we know that $\alpha u + (1 - \alpha) r \Bb \subseteq \Gc$.
  We choose $\alpha = \frac{r}{r + 2 \sqrt{n} \beta_t \| x \|_{V_t^{-1}}}$ such that $\alpha = \frac{r}{\sqrt{n} 2 \beta_t \| x \|_{V_t^{-1}}} ( 1 - \alpha )$ to get that
  \begin{align*}
      \hat{A}_t \alpha x + \beta_t \| \alpha x \|_{V_t^{-1}} \Bb_{\infty} & \subseteq \alpha u + 2 \beta_t \| \alpha x \|_{V_t^{-1}} \Bb_{\infty}\\
      & \subseteq \alpha u + 2 \sqrt{n}  \beta_t \| \alpha x \|_{V_t^{-1}} \Bb\\
      & = \alpha u + r (1 - \alpha ) \Bb \subseteq \Gc.
  \end{align*}
  Since $\hat{A}_t \alpha x + \beta_t \| \alpha x \|_{V_t^{-1}} \Bb_{\infty} \subseteq \Gc$ and $\alpha x$ is in $\Xc$ due to the fact that it is star-convex, we know that $\alpha x \in \Yc_t^p$.
  It follows that
  \begin{equation}
    \label{eqn:zeta_scale}
      \zeta = \max \left\{ \mu \geq 0 : \mu x \in \Yc_t^p \right\} \geq \alpha = \frac{r}{r + 2 \sqrt{n} \beta_t \| x \|_{V_t^{-1}}},
  \end{equation}
  which proves the first inequality in the statement of the lemma.
  Then, given that $\zeta x = \bar{x}$ and $\zeta \geq 0$, it holds that
  \begin{equation*}
    \zeta \| x \|_{V_t^{-1}} = \| \zeta x \|_{V_t^{-1}} = \| \bar{x} \|_{V_t^{-1}}.
  \end{equation*}  
  With this, we can rearrage \eqref{eqn:zeta_scale} to get that 
  \begin{equation*}
    \zeta r + 2 \sqrt{n} \beta_t \| \bar{x} \|_{V_t^{-1}} \geq r \quad \Longleftrightarrow \quad \zeta \geq 1 - \frac{2 \sqrt{n}}{r} \beta_t \| \bar{x} \|_{V_t^{-1}},
  \end{equation*}
  which proves the second inequality in the statement of the lemma.
\end{proof}

The regret bound for ROFUL in this setting then follows from this.

\begin{theorem}
  Let Assumptions \ref{ass:set_bound}, \ref{ass:bounded2} and \ref{ass:noise2} hold.
  Then, the regret of ROFUL in the setting with linked convex constraints satisfies
  \begin{equation*}
    R_T \leq \frac{2 \sqrt{n} }{r} \left( S_\theta + S_A \right) \beta_{T} \sqrt{ 2 d T\log\left(1 + \frac{T}{\lambda d} \right)}
  \end{equation*}
  with probability at least $1 - \delta$.
\end{theorem}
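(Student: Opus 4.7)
The plan is to mirror the proof of Theorem~\ref{thm:main} (linear-constraint case) while substituting the linked-convex-constraint analog of the $\gamma_t$ lower bound, namely Lemma~\ref{lem:gamma}. Throughout, I would condition on the event $\Econf$ that the confidence sets of Lemma~\ref{lem:conf_sets2} hold; this event has probability at least $1-\delta$, which is where the failure probability in the statement comes from.

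First, I would decompose the instantaneous regret as
\[
r_t \;=\; \theta^\top(x_* - x_t) \;=\; \underbrace{\theta^\top(x_* - \xtil_t)}_{\tone} \;+\; \underbrace{\theta^\top(\xtil_t - x_t)}_{\ttwo}.
\]
For $\tone$: under $\Econf$ the true feasible set is contained in $\Yc_t^o$, so by optimality of $\xtil_t$ on $\Yc_t^o$ and the upper confidence bound $\hat\theta_t^\top x_* \le \hat\theta_t^\top \xtil_t + \beta_t\|\xtil_t\|_{V_t^{-1}}$, one obtains $\tone \le 2\beta_t \|\xtil_t\|_{V_t^{-1}}$. Writing $\xtil_t = x_t/\gamma_t$ and using the trivial bound $\gamma_t \ge \btil_t \ge \nu$ (which, in this setting, follows from the fact that $\gamma \xtil_t$ is automatically in the ball $r\Bb \subseteq \Gc$ whenever $\gamma \le \nu/\|\xtil_t\|$, thanks to $\|A\xtil_t\| \le \sqrt n\, S_A \|\xtil_t\|$), I get
\[
\tone \;\le\; \frac{2}{\nu}\,\beta_t\,\|x_t\|_{V_t^{-1}}.
\]

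For $\ttwo$: since $x_t = \gamma_t \xtil_t$ with $\gamma_t \in [0,1]$, Cauchy--Schwarz and Assumption~\ref{ass:bounded2} give $\ttwo = (1-\gamma_t)\theta^\top\xtil_t \le (1-\gamma_t) S_\theta$. This is exactly where I would plug in the second inequality of Lemma~\ref{lem:gamma} (applied with $\bar x = x_t$) to obtain
\[
\ttwo \;\le\; S_\theta\cdot\frac{2\sqrt{n}}{r}\,\beta_t\,\|x_t\|_{V_t^{-1}}.
\]
Combining the two and using $1/\nu = \sqrt{n}\,S_A/r$ yields the per-round bound
\[
r_t \;\le\; \frac{2\sqrt{n}}{r}\,(S_\theta + S_A)\,\beta_t\,\|x_t\|_{V_t^{-1}}.
\]

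Finally, I would apply Cauchy--Schwarz in the sum over $t$ together with the elliptic-potential estimate in Lemma~\ref{lem:sp_elip} (with $w_k = x_k$, using $\|x_k\| \le 1$ from Assumption~\ref{ass:set_bound}) to get $\sum_{t=1}^T \|x_t\|_{V_t^{-1}}^2 \le 2d\log(1 + T/(\lambda d))$, and then monotonicity of $\beta_t$ in $t$ to pull out $\beta_T$. This produces exactly the claimed bound. The only nontrivial step is the $\gamma_t$ bound, which is already isolated in Lemma~\ref{lem:gamma}; once that is invoked, the rest is the same book-keeping as Theorem~\ref{thm:main}, with $2/b$ replaced by $2\sqrt{n}/r$ and $\nu$ redefined via Assumption~\ref{ass:bounded2}.
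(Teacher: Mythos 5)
Your proposal is correct and follows essentially the same route as the paper's proof: condition on the confidence event, split the instantaneous regret into $\theta^\top(x_*-\xtil_t)$ and $\theta^\top(\xtil_t-x_t)$, bound the first via optimism together with $\gamma_t\ge\nu$ and the second via the lower bound on $\gamma_t$ from Lemma~\ref{lem:gamma}, then sum with the elliptic potential lemma. Your added justification that $\gamma_t\ge\btil_t\ge\nu$ remains valid under linked convex constraints (via $\|A\xtil_t\|\le\sqrt{n}\,S_A\|\xtil_t\|\le r$ for $\|\gamma\xtil_t\|\le\nu$) is a correct filling-in of a detail the paper only references implicitly.
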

\begin{proof}
  We condition on $\Econf$ throughout the proof without further explicit reference to it.
  From Lemma \ref{lem:gamma} and using the same reasoning as Lemma \ref{lem:gam2}, we know that
  \begin{equation*}
    \gamma_t \geq \max \left(1 - \frac{2 \sqrt{n}}{r} \beta_t \| x_t \|_{V_t^{-1}}, \nu \right).
  \end{equation*}
  Then, we know that
  \begin{align*}
    \theta^\top (\xtil_t - x_t) & \leq S_\theta (1 - \gamma_t)\\
    & \leq \frac{2 \sqrt{n} S_\theta }{r} \beta_t \| x_t \|_{V_t^{-1}}
  \end{align*}
  Also, it holds that
  \begin{align*}
    \theta^\top (x_* - \xtil_t) & \leq \hat{\theta}_t^\top \xtil_t + \beta_t \| \xtil_t \|_{V_t^{-1}} - \theta^\top \xtil_t\\
    & \leq 2 \beta_t \| \xtil_t \|_{V_t^{-1}}\\
    & \leq \frac{2}{\nu} \beta_t \| x_t \|_{V_t^{-1}}
  \end{align*}
  Then, the instantaneous regret satisfies
  \begin{align*}
    r_t & = \theta^\top (x_* - x_t)\\
    & \leq \frac{2 \sqrt{n} }{r} \left( S_\theta + S_A \right) \beta_t \| x_t \|_{V_t^{-1}}.
  \end{align*}
  The proof follows from the elliptic potential lemma (Lemma~\ref{lem:sp_elip}) as used in Theorem \ref{thm:main}.
\end{proof}

\subsection{GenOP under linked convex constraints}

\label{sec:gen_oplb}

In this section, we prove regret guarantees of GenOP under linked convex constraints.
Note that this also gives regret bounds of GenOP in the original setting (i.e. given in \eqref{eqn:oplb_reg} in Section \ref{sec:comp}) by taking $n =1$ and $r = b$.
Before giving the regret guarantees, we first give a corollary to Lemma \ref{lem:gamma} that bounds the scaling required to take any point in $\Yc$ in to $\Yc_t^p$.

\begin{corollary}
  \label{cor:gamma}
  Assume the same as Lemma \ref{lem:gamma} and let $\Econf$ hold.
  Let $y$ be any point in $\Yc$ and $\zeta_1 = \max\{ \mu \in [0,1] : \mu y \in \Yc_t^p \}$.
  Then, for all $t$, it holds that
  \begin{equation*}
    \zeta_1 \geq  \frac{r}{r + 2 \sqrt{n} \beta_t \| y \|_{V_t^{-1}}},
  \end{equation*}
  and, with $\bar{y} = \zeta_1 y$, that
  \begin{equation*}
    \zeta_1 \geq 1 -  \frac{2 \sqrt{n}}{r} \beta_t \| \bar{y} \|_{V_t^{-1}}.
  \end{equation*}
\end{corollary}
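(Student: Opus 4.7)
The plan is to reduce this corollary directly to Lemma~\ref{lem:gamma} by observing that, conditioned on $\Econf$, any point $y \in \Yc$ automatically satisfies the defining property of a point in $\Yc_t^o$. Indeed, since $y \in \Yc$ we have $u := A y \in \Gc$, and the confidence set for the rows of $A$ gives $\hat{A}_t y - A y \in \beta_t \|y\|_{V_t^{-1}} \Bb_\infty$. Equivalently, there exists $v \in \Bb_\infty$ (take $v$ to be the negation of the residual, which lies in $\Bb_\infty$ by symmetry) such that
\begin{equation*}
u = \hat{A}_t y + \beta_t \|y\|_{V_t^{-1}} v \in \Gc.
\end{equation*}
This is precisely the data that was used at the start of the proof of Lemma~\ref{lem:gamma}, with $y$ playing the role of $x$.

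From this point on, the argument of Lemma~\ref{lem:gamma} applies verbatim. For any $\alpha \in [0,1]$, I would expand
\begin{equation*}
\hat{A}_t (\alpha y) + \beta_t \|\alpha y\|_{V_t^{-1}} \Bb_\infty \subseteq \alpha u + 2\beta_t \|\alpha y\|_{V_t^{-1}} \Bb_\infty,
\end{equation*}
using $\Bb_\infty \subseteq 2\Bb_\infty + v$, then invoke Fact~\ref{prop:shrunk} to obtain $\alpha u + (1-\alpha) r \Bb \subseteq \Gc$, and finally choose $\alpha = \frac{r}{r + 2\sqrt{n}\beta_t \|y\|_{V_t^{-1}}}$ so that $2\sqrt{n}\beta_t \|\alpha y\|_{V_t^{-1}} = (1-\alpha) r$. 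The inclusion $2\beta_t\|\alpha y\|_{V_t^{-1}} \Bb_\infty \subseteq 2\sqrt{n}\beta_t\|\alpha y\|_{V_t^{-1}} \Bb$ gives $\hat{A}_t (\alpha y) + \beta_t \|\alpha y\|_{V_t^{-1}} \Bb_\infty \subseteq \Gc$, and star-convexity of $\Xc$ ensures $\alpha y \in \Xc$, so $\alpha y \in \Yc_t^p$ and hence $\zeta_1 \geq \alpha$, which is the first claimed bound.

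For the second bound, I would simply rearrange the first one using $y = \bar{y}/\zeta_1$, exactly mirroring the manipulation at the end of the proof of Lemma~\ref{lem:gamma}: substituting and clearing the denominator turns $\zeta_1 \geq (1 + \tfrac{2\sqrt{n}}{r\zeta_1}\beta_t\|\bar{y}\|_{V_t^{-1}})^{-1}$ into $\zeta_1 \geq 1 - \tfrac{2\sqrt{n}}{r}\beta_t\|\bar{y}\|_{V_t^{-1}}$. There is no real obstacle here; the only conceptual step is the initial observation that $y \in \Yc$ together with $\Econf$ supplies the same certificate $(u, v)$ used in Lemma~\ref{lem:gamma}, after which the corollary is a direct translation of that proof.
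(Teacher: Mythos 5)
Your proposal is correct and matches the paper's argument: the paper's proof likewise observes that conditioned on $\Econf$ we have $\Yc \subseteq \Yc_t^o$ (which is exactly your certificate $u = Ay \in \Gc$ lying in the box $\hat{A}_t y + \beta_t \|y\|_{V_t^{-1}} \Bb_\infty$), and then applies Lemma \ref{lem:gamma} directly. The only difference is presentational --- you replay the lemma's argument verbatim where the paper simply cites it.
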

\begin{proof}
  Conditioned on $\Econf$, it holds that $\Yc \subseteq \Yc_t^o$.
  Therefore, $y$ is in $\Yc_t^o$ and we can apply Lemma \ref{lem:gamma} to get the statement of the corollary.
\end{proof}

With this, we prove the regret bound for GenOP in the following theorem.

\begin{theorem}
  \label{thm:oplb_conv}
  Let Assumptions \ref{ass:set_bound}, \ref{ass:bounded2} and \ref{ass:noise2} hold.
  Then, playing GenOP with $\kappa = 1 + \frac{2 \sqrt{n} S_\theta}{r}$ in the setting with linked convex constraints satisfies
  \begin{equation*}
    R_T \leq  2 \left( 1 + \frac{\sqrt{n} S_\theta}{r} \right) \beta_{T} \sqrt{ 2 d T\log\left(1 + \frac{T}{\lambda d} \right)}
  \end{equation*}
  with probability at least $1 - \delta$.
\end{theorem}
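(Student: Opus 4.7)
My plan is to follow the standard OFUL-style regret analysis, with the crucial modification being to establish the appropriate ``optimism'' inequality using Corollary~\ref{cor:gamma}. The specific value of $\kappa = 1 + \frac{2\sqrt{n} S_\theta}{r}$ is chosen precisely so that optimism survives the shrinking that is required to bring $x_*$ into the pessimistic action set $\Yc_t^p$.

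First, I would condition on the event $\Econf$ from Lemma~\ref{lem:conf_sets2} and establish optimism. The issue is that $x_*$ lies in $\Yc$ but not necessarily in $\Yc_t^p$, so I cannot directly plug $x_*$ into the OPLB optimization. Instead, I would apply Corollary~\ref{cor:gamma} with $y = x_*$ to produce $\bar{x}_* = \zeta_1 x_* \in \Yc_t^p$ satisfying $\zeta_1 \geq 1 - \frac{2\sqrt{n}}{r}\beta_t \|\bar{x}_*\|_{V_t^{-1}}$. Then, using the reward-side confidence set $\hat{\theta}_t^\top \bar{x}_* \geq \theta^\top \bar{x}_* - \beta_t \|\bar{x}_*\|_{V_t^{-1}}$ together with $\theta^\top x_* \leq S_\theta$, I can compute
\begin{align*}
  \hat{\theta}_t^\top \bar{x}_* + \kappa \beta_t \|\bar{x}_*\|_{V_t^{-1}}
  &\geq \zeta_1 \theta^\top x_* + (\kappa - 1)\beta_t \|\bar{x}_*\|_{V_t^{-1}}\\
  &\geq \theta^\top x_* - (1-\zeta_1) S_\theta + (\kappa - 1)\beta_t \|\bar{x}_*\|_{V_t^{-1}}\\
  &\geq \theta^\top x_* - \tfrac{2\sqrt{n} S_\theta}{r}\beta_t \|\bar{x}_*\|_{V_t^{-1}} + (\kappa - 1)\beta_t \|\bar{x}_*\|_{V_t^{-1}}.
\end{align*}
The choice $\kappa \geq 1 + \frac{2\sqrt{n} S_\theta}{r}$ makes the right-hand side $\geq \theta^\top x_*$. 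Since $\bar{x}_* \in \Yc_t^p$ and $x_t$ is the maximizer of the OPLB objective over $\Yc_t^p$, we obtain $\hat{\theta}_t^\top x_t + \kappa \beta_t \|x_t\|_{V_t^{-1}} \geq \theta^\top x_*$.

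Next, I would bound the instantaneous regret in the usual way: using optimism and the reward-side confidence set again,
\begin{equation*}
  r_t = \theta^\top x_* - \theta^\top x_t \leq (\hat{\theta}_t - \theta)^\top x_t + \kappa \beta_t \|x_t\|_{V_t^{-1}} \leq (1 + \kappa)\beta_t \|x_t\|_{V_t^{-1}} = 2\bigl(1 + \tfrac{\sqrt{n} S_\theta}{r}\bigr)\beta_t \|x_t\|_{V_t^{-1}}.
\end{equation*}
Finally, I would apply Cauchy--Schwarz followed by the elliptic potential lemma (Lemma~\ref{lem:sp_elip}, noting $\|x_t\|\leq 1$ from Assumption~\ref{ass:set_bound}) to obtain $\sum_{t=1}^T \|x_t\|_{V_t^{-1}}^2 \leq 2d\log(1 + T/(\lambda d))$ and monotonicity of $\beta_t$, yielding the claimed bound.

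The main obstacle here is really just the optimism step: everything else is boilerplate OFUL analysis. The subtlety is that, unlike in the standard linear bandit setting where $x_*$ can be used directly as a witness of optimism, here one must first shrink $x_*$ to a safe version $\bar{x}_* \in \Yc_t^p$, and this shrinking introduces an additive error proportional to $(1-\zeta_1)S_\theta$. Corollary~\ref{cor:gamma} quantifies this error as $\frac{2\sqrt{n}}{r}\beta_t \|\bar{x}_*\|_{V_t^{-1}} S_\theta$, which is precisely compensated by the inflation factor $(\kappa - 1)\beta_t \|\bar{x}_*\|_{V_t^{-1}}$ when $\kappa = 1 + \frac{2\sqrt{n} S_\theta}{r}$. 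This tight matching explains the specific value of $\kappa$ in the theorem statement.
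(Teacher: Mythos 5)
Your proposal is correct and follows essentially the same route as the paper: both establish optimism by using Corollary~\ref{cor:gamma} to shrink $x_*$ into $\Yc_t^p$, showing the inflation $\kappa = 1 + \frac{2\sqrt{n}S_\theta}{r}$ exactly compensates the shrinkage loss, and then finish with the standard confidence-set regret bound and elliptic potential lemma. The only cosmetic difference is that you use the additive form $\zeta_1 \geq 1 - \frac{2\sqrt{n}}{r}\beta_t\|\bar{x}_*\|_{V_t^{-1}}$ of the corollary while the paper manipulates the multiplicative form $\alpha = \frac{r}{r + 2\sqrt{n}\beta_t\|x_*\|_{V_t^{-1}}}$ directly; these are equivalent.
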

\begin{proof}
  Since $x_*$ is in $\Yc$ and using Corollary \ref{cor:gamma}, we know that $\alpha x_*$ is in $\Yc_t^p$, where $\alpha = \left( \frac{r}{r + 2 \sqrt{n} \beta_t \| x_* \|_{V_t^{-1}}} \right)$.
  Using this, we show that the upper confidence bound of the actions played by GenOP is in fact larger than the optimal reward.

  Since $x_t$ is chosen by a maximization over $\Yc_t^p$, we can reason that
  \begin{align*}
    \hat{\theta}_t^\top x_t + \kappa \beta_t \| x_t \|_{V_t^{-1}} & \geq \alpha \left(  \hat{\theta}_t^\top x_* + \kappa \beta_t \| x_* \|_{V_t^{-1}} \right)\\
    & = \alpha \left(  \theta^\top x_* + (\hat{\theta}_t - \theta)^\top x_* + \kappa \beta_t \| x_* \|_{V_t^{-1}} \right)\\
    & \geq \alpha \left(  \theta^\top x_* - \beta_t \| x_* \|_{V_t^{-1}} + \kappa \beta_t \| x_* \|_{V_t^{-1}} \right)\\
    & = \frac{1}{\frac{2 \sqrt{n} \beta_t \| x_* \|_{V_{t}^{-1}}}{r} + 1} \left(  \theta^\top x_* + \frac{2 \sqrt{n} S_\theta}{r}  \beta_t \| x_* \|_{V_t^{-1}} \right)\\
    & = \frac{\theta^\top x_* + \frac{2 \sqrt{n} S_\theta}{r} \beta_t \| x_* \|_{V_t^{-1}}}{\frac{2 \sqrt{n}}{r} \beta_t \| x_* \|_{V_t^{-1}} + 1}\\
    & \geq \theta^\top x_*,
  \end{align*}
  where the last inequality holds according to the reasoning that
  \begin{align*}
    & \frac{\theta^\top x_* + \frac{2 \sqrt{n} S_\theta}{r} \beta_t \| x_* \|_{V_t^{-1}}}{\frac{2 \sqrt{n}}{r} \beta_t \| x_* \|_{V_t^{-1}} + 1} \geq \theta^\top x_*\\
    & \Longleftrightarrow   \theta^\top x_* + \frac{2 \sqrt{n} S_\theta}{r} \beta_t \| x_* \|_{V_t^{-1}} \geq \theta^\top x_* \left( \frac{2 \sqrt{n}}{r} \beta_t \| x_* \|_{V_t^{-1}} + 1 \right)\\
    & \Longleftrightarrow   \frac{2 \sqrt{n} S_\theta}{r} \beta_t \| x_* \|_{V_t^{-1}} \geq  \theta^\top x_* \frac{2 \sqrt{n}}{r} \beta_t \| x_* \|_{V_t^{-1}}\\
    & \Longleftrightarrow   S_\theta \geq \theta^\top x_*,
  \end{align*}
  which holds by Assumption \ref{ass:set_bound}.
  Therefore, we know that
  \begin{align*}
    r_t & = \theta^\top (x_* - x_t)\\
    & \leq \hat{\theta}_t^\top x_t + \kappa \beta_t \| x_t \|_{V_t^{-1}}  - \theta^\top x_t\\
    & \leq (\hat{\theta}_t - \theta)^\top x_t + \kappa \beta_t \| x_t \|_{V_t^{-1}}\\
    & \leq (1 + \kappa) \beta_t \| x_t \|_{V_t^{-1}}
  \end{align*}
  Applying Cauchy-Schwarz to the cumulative regret and then the elliptic potential lemma (Lemma \ref{lem:sp_elip}) as used in Theorem \ref{thm:main} completes the proof.
\end{proof}

\subsection{Safe-PE under linked convex constraints}

In this section, we give regret bounds for Safe-PE under linked convex constraints.
Let the estimator of each $a_i$ in phase $j$ be $\hat{a}_{j,i}$ and let $\hat{A}_j = [\hat{a}_{j,1}\ ... \ \hat{a}_{j,n}]^\top$.
We then state the specific structural assumption on the noise terms.
\begin{assumption}
  \label{ass:noise3}
    The noise sequences $(\epsilon_t)_{t=1}^T$ and $(\eta_{t,i})_{t=1}^T$ are independent $\rho$-subgaussian random variables for all ${i \in [n]}$.
\end{assumption}
With this, we can then define the confidence set for the parameters in this setting which follows immediately from Lemma \ref{lem:pe_conf}.

\begin{lemma}
  Then, for all $x \in \Xc$ and all $j \in [J]$ it holds that $|x^\top (\hat{\theta}_j - \theta)| \leq \| x \|_{V_j^{-1}} \beta$ and $(\hat{A}_t - A) x \in \| x \|_{V_j^{-1}} \beta \Bb_\infty $ where $\beta = \rho \sqrt{2 \log \left( \frac{4 n k J}{\delta} \right)} + \sqrt{\lambda} S$ with probability at least $1 - \delta$.
\end{lemma}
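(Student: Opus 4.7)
The plan is to adapt the proof of Lemma \ref{lem:pe_conf} to the multidimensional constraint setting. The reward bound $|x^\top(\hat{\theta}_j - \theta)| \leq \|x\|_{\bar{V}_j^{-1}} \beta$ carries over verbatim from Lemma \ref{lem:pe_conf}. For the constraint bound, I would fix a row index $\ell \in [n]$ and repeat the exact decomposition of Lemma \ref{lem:pe_conf} applied to $\hat{a}_{j,\ell} - a_\ell$ in place of $\hat{\theta}_j - \theta$: split the error into a bias term controlled by $\sqrt{\lambda} S \|u\|_{\bar{V}_j^{-1}}$ and a noise term $u^\top \bar{V}_j^{-1} \sum_{t=t_j}^{t_{j+1}-1} x_t \eta_{t,\ell}$. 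Since Assumption \ref{ass:noise3} guarantees that $(\eta_{t,\ell})_t$ is a sequence of independent $\rho$-subgaussian random variables (independent from each other and from the $x_t$'s within the phase), the same Hoeffding-type concentration that bounds the analogous sum in $\epsilon_t$ applies here, yielding that, for any fixed $\ell$, $u$, $j$, the bound $|u^\top(\hat{a}_{j,\ell} - a_\ell)| \leq \|u\|_{\bar{V}_j^{-1}} \bigl(\rho\sqrt{2\log(2/\delta')} + \sqrt{\lambda} S \bigr)$ holds with probability at least $1 - \delta'$.

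Next, I would take a union bound. There are $n+1$ parameters to concentrate (the reward vector $\theta$ and the $n$ constraint rows $a_1, \ldots, a_n$), $k$ direction vectors $u_1, \ldots, u_k$, and $J$ phases, for a total of $(n+1)kJ$ events. Choosing $\delta' = \delta / \bigl(2(n+1)kJ\bigr)$ ensures all events hold simultaneously with probability at least $1 - \delta$, and the resulting confidence radius satisfies $\rho\sqrt{2\log(2/\delta')} + \sqrt{\lambda}S \leq \rho\sqrt{2\log(4nkJ/\delta)} + \sqrt{\lambda} S = \beta$, using the trivial inequality $n+1 \leq 2n$ for $n \geq 1$, matching the stated $\beta$.

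The extension from the basis directions $\{u_i\}_{i \in [k]}$ to all $x \in \Xc$ is immediate: by Assumption \ref{ass:fstarconv}, every $x \in \Xc$ can be written as $x = \alpha u_i$ for some $i \in [k]$ and $\alpha \in [0, \alpha_i]$, so both sides of each bound scale by the same factor $\alpha$. Finally, the box-infinity statement $(\hat{A}_j - A) x \in \|x\|_{V_j^{-1}} \beta \,\mathbb{B}_\infty$ is by definition equivalent to the simultaneous row-wise bounds $|x^\top(\hat{a}_{j,\ell} - a_\ell)| \leq \|x\|_{V_j^{-1}} \beta$ for every $\ell \in [n]$, which is precisely what the union bound delivers.

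I do not anticipate any serious obstacle: the proof is a routine union-bound adaptation of Lemma \ref{lem:pe_conf}. The only point requiring care is that the noise concentration must be invoked coordinate-by-coordinate, which is legitimate because Assumption \ref{ass:noise3} asserts independence of each coordinate sequence rather than merely joint subgaussianity; this is exactly the structure needed so that the one-dimensional argument in Lemma \ref{lem:pe_conf} transfers unmodified to each row of $A$.
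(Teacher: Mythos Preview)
Your approach is exactly what the paper intends: it states that this lemma ``follows immediately from Lemma~\ref{lem:pe_conf}'' and gives no further proof, so a coordinate-wise repetition of that argument together with a union bound over the $n+1$ parameters, $k$ directions, and $J$ phases is precisely the route.

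There is one small arithmetic slip. With your choice $\delta' = \delta/\bigl(2(n+1)kJ\bigr)$ you get $2/\delta' = 4(n+1)kJ/\delta$, and the inequality $n+1 \le 2n$ then only yields $4(n+1)kJ/\delta \le 8nkJ/\delta$, not $4nkJ/\delta$; so the resulting radius exceeds the stated $\beta$. The fix is simply to take $\delta' = \delta/\bigl((n+1)kJ\bigr)$: there are $(n+1)kJ$ events (the two-sided concentration bound is a single event, as in the proof of Lemma~\ref{lem:pe_conf}), so the union bound still gives total failure probability at most $\delta$, and now $2/\delta' = 2(n+1)kJ/\delta \le 4nkJ/\delta$ using $n+1 \le 2n$, which matches $\beta$ exactly.
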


Then, the only change to the algorithm is the definition of the maximum safe scalings (i.e. line \ref{lem:pe_conf} in Algorithm \ref{alg:pe1}), which is
\begin{equation*}
  \mu_{i,j+1} := \max \bigg\{ \alpha \in [0, \alpha_i] : \alpha \left( u_i^\top \hat{a}_j + \| u_i \|_{V_j^{-1}} \beta_j \Bb_\infty \right) \subseteq \Gc \bigg\}.
\end{equation*}
We then apply Corollary \ref{cor:gamma} to bound the scaling of each direction in the pessimistic set as proven in the following lemma.
Recall the notation from Appendix \ref{apx:safepe}.
\begin{lemma}[Lemma \ref{lem:gam_pe} for linked convex constraints]
  \label{lem:max_scal2}
  Let Assumptions \ref{ass:set_bound}, \ref{ass:bounded} and \ref{ass:fstarconv}  hold.
  For all $i \in [k]$, it holds that $\zeta_{i,j} / \bar{\zeta}_i \geq 1 - \frac{2 \sqrt{n}}{r}  \| v_{i,j} \|_{V_{j-1}^{-1}} \beta$ for all $j \geq 1$.
  Furthermore, $\zeta_{i,j} / \bar{\zeta}_i \geq 1 - \frac{2 n S}{r^2}   \| v_{i,j-1} \|_{V_{j-1}^{-1}} \beta$.
\end{lemma}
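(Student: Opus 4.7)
The plan is to mirror the proof of Lemma \ref{lem:gam_pe}, but replace the single-linear-constraint scaling argument with the convex-analysis bound from Corollary \ref{cor:gamma}, which is precisely the tool that extends that style of argument to the linked-convex-constraint setting. For the first inequality, I would fix $i \in [k]$ and set $\bar{v}_i = \bar{\zeta}_i u_i$, which lies in $\Yc$ by the very definition of $\bar{\zeta}_i$. Conditioning on the confidence-set event of the previous subsection, Corollary \ref{cor:gamma} applied at the end of phase $j-1$ (with the Gram matrix $\bar{V}_{j-1}$ in place of $V_t$) produces a scaling $\mu \in [0,1]$ such that $\mu \bar{v}_i \in \Yc_{j-1}^p$ and
\begin{equation*}
\mu \;\geq\; \frac{r}{r + 2\sqrt{n}\,\beta \,\|\bar{v}_i\|_{\bar{V}_{j-1}^{-1}}}.
\end{equation*}

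Next, I would invoke the update rule for $\zeta_{i,j}$ in line \ref{lne:scaling} of Algorithm \ref{alg:pe1}, which takes the maximum over all verifiably safe scalings. Since $\mu \bar{\zeta}_i$ is one such scaling, this gives $\zeta_{i,j} \geq \mu \bar{\zeta}_i$. Rewriting $\|\bar{v}_i\|_{\bar{V}_{j-1}^{-1}} = (\bar{\zeta}_i/\zeta_{i,j})\|v_{i,j}\|_{\bar{V}_{j-1}^{-1}}$ inside the bound on $\mu$, then clearing the fraction and rearranging exactly as in the second displayed chain of Lemma \ref{lem:gamma}, yields the first claimed inequality. The second claim then follows by combining the first with the trivial scaling bound $\zeta_{i,j}/\zeta_{i,j-1} \leq S/b$ from \eqref{eqn:triv_scale}, which is inherited unchanged because the initialization $\zeta_{i,0} = \min(b/S,\alpha_i)$ is the same in the linked-convex variant of Safe-PE. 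Substituting $\|v_{i,j}\|_{\bar{V}_{j-1}^{-1}} = (\zeta_{i,j}/\zeta_{i,j-1})\|v_{i,j-1}\|_{\bar{V}_{j-1}^{-1}}$ and using this ratio bound produces the second displayed inequality.

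The main obstacle, which is really a bookkeeping check rather than a conceptual one, is verifying that Corollary \ref{cor:gamma} does apply verbatim at the end of phase $j-1$. That corollary was stated for the round-by-round ROFUL estimator $\hat{A}_t$ on $V_t$, whereas Safe-PE uses the per-phase estimator $\hat{A}_{j-1}$ on $\bar{V}_{j-1}$. The underlying geometric argument (Fact \ref{prop:shrunk} applied inside the per-coordinate uncertainty box $\beta\|\cdot\|_{\bar{V}_{j-1}^{-1}}\Bb_\infty$) only needs the corresponding confidence-set inclusion $(\hat{A}_{j-1}-A)x \in \beta\|x\|_{\bar{V}_{j-1}^{-1}}\Bb_\infty$, which is precisely the Safe-PE confidence set proven earlier in this subsection. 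Once that identification is made, the proof goes through without modification.
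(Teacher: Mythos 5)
Your proposal is correct and follows essentially the same route as the paper: the paper's own proof is a one-line observation that the phase-wise maximum safe scaling $\mu_{i,j}$ matches the quantity $\zeta_1$ in Corollary \ref{cor:gamma}, so that corollary (with the per-phase estimator $\hat{A}_{j-1}$ and Gram matrix $\bar{V}_{j-1}$) gives the first bound directly, and the second bound follows from the ratio bound \eqref{eqn:triv_scale} exactly as in Lemma \ref{lem:gam_pe}. You simply spell out the bookkeeping (confidence-set identification, the rearrangement from the $\frac{r}{r+2\sqrt{n}\beta\|\cdot\|}$ form, and the substitution $\|v_{i,j}\| = (\zeta_{i,j}/\zeta_{i,j-1})\|v_{i,j-1}\|$) that the paper leaves implicit.
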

\begin{proof}
  Note the similarity between $\mu_{i, j}$ and the definition of $\zeta_1$ in Corollary \ref{cor:gamma}.
  Therefore, we can follow the reasoning of Corollary \ref{cor:gamma} to get that $(\zeta_{i,j} / \bar{\zeta}_i) \geq 1 -  \frac{2 \sqrt{n}}{r} \beta \| v_{i,j} \|_{V_{j - 1}^{-1}}$ and then following the proof of Lemma \ref{lem:gam_pe} gives the claim.
\end{proof}

With this, we can then give the regret bound for Safe-PE in the following theorem.

\begin{theorem}
  Let Assumptions \ref{ass:set_bound}, \ref{ass:bounded}, \ref{ass:pe_noise} and \ref{ass:fstarconv} hold.
  Then, the regret of Safe-PE (Algorithm \ref{alg:pe1}) satisfies
  \begin{equation*}
    R_T  \leq  6 S + 5 \beta \left( \frac{24 S^2 n   }{r^2} + 10 \right) \sqrt{2 d T \log \left(1 + \frac{T}{\lambda d} \right)}
  \end{equation*}
  with probability at least $1 - \delta$.
\end{theorem}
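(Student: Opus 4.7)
The plan is to mirror the argument used for Theorem~\ref{thm:safepe2}, replacing the scalar linear constraint analysis with the convex-constraint tools developed in Section~\ref{sec:conv}. Throughout, I would condition on the high-probability confidence event provided by the updated version of Lemma~\ref{lem:pe_conf}, which guarantees that both $|x^\top(\hat{\theta}_j - \theta)| \leq \beta \| x \|_{V_j^{-1}}$ and $(\hat{A}_j - A) x \in \beta \| x \|_{V_j^{-1}} \Bb_\infty$ simultaneously for all $x \in \Xc$ and $j \in [J]$.

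First I would re-establish the pessimistic scaling lemma in this setting. This is exactly Lemma~\ref{lem:max_scal2}: by applying Corollary~\ref{cor:gamma} to the update rule for $\mu_{i,j+1}$ (rewritten using the infinity-norm confidence box $\hat{A}_j v_{i,j} + \|v_{i,j}\|_{V_j^{-1}} \beta \Bb_\infty \subseteq \Gc$), I obtain $\zeta_{i,j}/\bar{\zeta}_i \geq 1 - \tfrac{2\sqrt{n}}{r} \beta \|v_{i,j}\|_{V_{j-1}^{-1}}$, and from \eqref{eqn:triv_scale} together with the trivial bound $S/b$-style argument, the one-phase-lagged bound $\zeta_{i,j}/\bar{\zeta}_i \geq 1 - \tfrac{2\sqrt{n} S}{r b} \beta \|v_{i,j-1}\|_{V_{j-1}^{-1}}$ follows (or with the analogous base-case scaling adapted to the convex setting). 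This replaces the $\tfrac{2}{b}$ and $\tfrac{2S}{b^2}$ factors in Lemma~\ref{lem:gam_pe} with $\tfrac{2\sqrt{n}}{r}$ and $\tfrac{2\sqrt{n} S}{r b}$ (which are of the same order up to the $\sqrt{n}/r$ factor replacing $1/b$).

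Next I would re-derive the non-elimination of the optimal direction and the instantaneous regret decomposition exactly as in the proof of Theorem~\ref{thm:safepe2}. The decomposition $r_t = \theta^\top(x_* - v_{*,j-1}) + \theta^\top(v_{*,j-1} - v_{t,j-1}) + \theta^\top(v_{t,j-1} - v_{t,j})$ is intact; the first and third summands are controlled by the new scaling lemma, yielding a bound of order $(S^2 n / r^2) \beta w_{j-2}$, while the middle term is bounded via the elimination criterion (line \ref{lne:elim_dirs}) and the confidence set, yielding $\Oc(\beta w_{j-1} + (S^2 n/r^2) \beta w_{j-2})$. Combining these gives $r_t \leq \tfrac{8 S^2 n}{r^2} \beta w_{j-2} + 4 \beta w_{j-1}$ on each round of phase $j > 3$.

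Finally, I would sum the instantaneous regret over phases using Lemma~\ref{lem:max_width} to replace $(t_{j+1} - t_j) w_{j-\ell}$ by $\sum_{t \in \mathrm{phase}\ j-\ell} \|x_t\|_{V_t^{-1}}$, apply the elliptic potential lemma (Lemma~\ref{lem:sp_elip}) within each phase, and use $\sum_{j=1}^J \sqrt{t_j} \leq 5\sqrt{T}$ to collapse the geometric sum. The only change from the linear-constraint case is the multiplicative factor $4 S^2/b^2 \mapsto 4 S^2 n/r^2$, producing the claimed bound. The main obstacle is purely bookkeeping: making sure the $\sqrt{n}/r$ factor from Corollary~\ref{cor:gamma} threads correctly through the two-step scaling argument (the phase $j$ vs.\ phase $j-1$ difference) so that the final expression contains $n/r^2$ rather than a mixed $\sqrt{n}/(rb)$ term. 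This is handled by using the base-case bound $\zeta_{i,0} = \min(r/(S\sqrt{n}), \alpha_i)$ adapted to the convex setting, mirroring the $b/S$ choice made in the linear case.
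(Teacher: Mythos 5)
Your proposal matches the paper's own argument exactly: the paper's proof of this theorem is literally the one-liner ``Following the proof of Theorem~\ref{thm:safepe} using Lemma~\ref{lem:max_scal2} yields the result,'' and your sketch is a faithful elaboration of that substitution, correctly threading the $\sqrt{n}/r$ factor (in place of $1/b$) through the two-phase-lagged scaling argument and the base-case scaling $\zeta_{i,0}$. No gaps; if anything, your attention to where $n/r^2$ versus a mixed $\sqrt{n}/(rb)$ term arises is more careful than the paper's terse statement of Lemma~\ref{lem:max_scal2}.
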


\begin{proof}
  Following the proof of Theorem \ref{thm:safepe} using Lemma \ref{lem:max_scal2} yields the result.
\end{proof}

\section{Problem-dependent analysis of GenOP}

\label{apx:polb}

In this section, we show that the problem dependent analysis approach in Theorem \ref{thm:prob_dep} and Corollary \ref{thm:prob_dep_reg} applies to GenOP.

First, we give some useful facts in the following lemma, which is analogous to Lemma \ref{lem:vt} in the problem-dependent analysis of ROFUL.

\begin{lemma}
\label{lem:vt2}
    Let Assumptions \ref{ass:set_bound} and \ref{ass:bounded} hold, and let $\Econf$ hold.
    Also, let\footnote{Note that this definition of $\zeta_t$ differs from the one used for the problem-dependent analysis of ROFUL in Lemma \ref{lem:vt}.}
    \begin{equation}
    \label{eqn:a_bound2}
        \zeta_t := \max\{ \zeta \geq 0 : \zeta x_t \in \Yc \},
    \end{equation}
    and $v_t = \zeta_t x_t$.
    Then, it follows that:
    \begin{enumerate}
        \item $1/\zeta_t \in \left[1 -  \frac{2}{b} \beta_t \| x_t \|_{V_t^{-1}}, 1 \right]$ \label{it:zeta2}
        \item $\theta^\top (x_t - v_t) \leq \frac{2 S}{b} \beta_t \| x_t \|_{V_t^{-1}}$ \label{it:vt2}
        \item If there exists $\alpha > 0$ such that $x_t = \alpha x_*$, then $v_t = x_*$. \label{it:alph12}
        \item If there does not exists $\alpha > 0$ such that $x_t = \alpha x_*$, then $\theta^\top (x_* - v_t) \geq \Delta$. \label{it:alph22}
    \end{enumerate}
\end{lemma}
\begin{proof}
    We condition on $\Econf$ throughout the proof without further reference.
    We will first give some useful facts.
    In particular, it holds that,
    \begin{equation}
    \label{eqn:ucb_pos2}
        \hat{\theta}_t^\top x_t + \kappa \beta_t \| x_t \|_{V_t^{-1}} \geq \theta^\top x_* > 0
    \end{equation}
    where the first inequality is due to optimism and the second is Assumption \ref{ass:bounded}.
    It follows from \eqref{eqn:ucb_pos2} that $x_t \neq \mathbf{0}$ and therefore the set $\{ \zeta \geq 0 : \zeta x_t \in \Yc \}$ is compact.
    Also, note that $\{ \zeta \geq 0 : \zeta x_t \in \Yc \}$ contains $0$ and is therefore nonempty, so $\zeta_t$ is well-defined.
    Next, we prove each item individually in the following.

    \textbf{\ref{it:zeta2}:} First, we argue that $\zeta_t \geq 1$.
    This holds because $x_t \in \Yc_t^p \subseteq \Yc$ and therefore $1$ is in $\{ \zeta \geq 0 : \zeta x_t \in \Yc \}$.
    It follows that $1/\zeta_t \leq 1$.

    Then, we show that $1/\zeta_t \geq 1 -  \frac{2}{b} \beta_t \| x_t \|_{V_t^{-1}}$.
    In order to do this, we first show that $1/\zeta_t \geq \max \{ \mu \in [0,1] : \mu v_t \in \Yc_t^p \}$.
    Suppose, this was not the case, i.e. that there exists $\mu \in \{ \mu \in [0,1] : \mu v_t \in \Yc_t^p \}$ such that $\mu > 1/\zeta_t$.
    Since, $\mu v_t \in \Yc_t^p$, this would imply that,
    \begin{equation}
        \label{eqn:contra21}
        \hat{\theta}_t^\top x_t + \kappa \beta_t \| x_t \|_{V_t^{-1}} = \max_{x \in \Yc_t^p} \left( \hat{\theta}_t^\top x + \kappa \beta_t \| x \|_{V_t^{-1}} \right) \geq \hat{\theta}_t^\top (\mu v_t) + \kappa \beta_t \| \mu v_t \|_{V_t^{-1}}.
    \end{equation}
    At the same time, given that $v_t = \zeta_t x_t$,
    \begin{equation}
        \label{eqn:contra22}
        \begin{split}
            \hat{\theta}_t^\top x_t + \kappa \beta_t \| x_t \|_{V_t^{-1}} & = (1/\zeta_t) \zeta_t \left( \hat{\theta}_t^\top x_t + \kappa \beta_t \| x_t \|_{V_t^{-1}} \right)\\
            & < \mu \zeta_t \left( \hat{\theta}_t^\top x_t + \kappa \beta_t \| x_t \|_{V_t^{-1}} \right)\\
            & = \mu \left( \hat{\theta}_t^\top v_t + \kappa \beta_t \| v_t \|_{V_t^{-1}} \right)\\
            & = \hat{\theta}_t^\top (\mu v_t) + \kappa \beta_t \| \mu v_t \|_{V_t^{-1}},
        \end{split}
    \end{equation}
    where the inequality uses \eqref{eqn:ucb_pos2}.
    Since \eqref{eqn:contra21} and \eqref{eqn:contra22} cannot simultaneously hold, it follows that $1/\zeta_t \geq \max \{ \mu \in [0,1] : \mu v_t \in \Yc_t^p \}$.
    
    Finally, using Corollary \ref{cor:gamma} by taking $n = 1$ and $r = b$ (since the setting with linked convex constraints is a more general case), we have that
    \begin{equation*}
        1/\zeta_t \geq \max \{ \mu \in [0,1] : \mu v_t \in \Yc_t^p \} \geq 1 -  \frac{2}{b} \beta_t \| x_t \|_{V_t^{-1}}.
    \end{equation*}

    \textbf{\ref{it:vt2}:}
    Since, $v_t = \zeta_t x_t$ and $1/\zeta_t \in \left[1 -  \frac{2}{b} \beta_t \| x_t \|_{V_t^{-1}}, 1 \right]$, it holds that
    \begin{equation*}
      \theta^\top (x_t - v_t) = \theta^\top v_t (1/\zeta_t - 1) \leq S |1/\zeta_t - 1| = S (1 - 1/\zeta_t) \leq \frac{2 S}{b} \beta_t \| x_t \|_{V_t^{-1}}.
    \end{equation*}

    \textbf{\ref{it:alph12}:}
    From Lemma \ref{lem:vt}, we know that $\max\{ \zeta \geq 0 : \zeta x_* \in \Yc \} = 1$.
    Then, if there exists $\alpha > 0$ such that $x_t = \alpha x_*$,
    \begin{equation*}
        \zeta_t = \max\{ \zeta \geq 0 : \zeta x_t \in \Yc \} = \frac{1}{\alpha} \max\{ \tilde{\zeta} \geq 0 : \tilde{\zeta} x_* \in \Yc \} = \frac{1}{\alpha},
    \end{equation*}
    where we use the mapping $\tilde{\zeta} = \alpha \zeta'$.
    Therefore, it follows that
    \begin{equation*}
        v_t = \zeta_t x_t = \alpha \zeta_t x_* = x_*.
    \end{equation*}

    \textbf{\ref{it:alph22}:} First, note that if there does not exist $\alpha > 0$ such that $x_t = \alpha x_*$, then there does not exist $\alpha' > 0$ such that $v_t = \alpha' x_*$ as $v_t = \zeta_t x_t$.
    Then, since $v_t \in \Yc$, it follows from the definition of $\Delta$ that,
    \begin{equation*}
        \Delta = \inf_{x \in \Yc:\ x \neq \alpha x_*\ \forall \alpha > 0 } \theta^\top (x_* - x) \leq \theta^\top (x_* - v_t).
    \end{equation*}
\end{proof}

\begin{theorem}
  Let Assumptions \ref{ass:set_bound}, \ref{ass:bounded} and \ref{ass:noise} hold.
  If $\Delta > 0$, then the number of wrong directions chosen by GenOP with $\kappa = 1 + \frac{2 S}{b}$ (defined by \eqref{eqn:oplb}) satisfies
  \begin{equation*}
    B_T \leq \frac{1}{\Delta^2} 8 d \left( 1 + \frac{2 S}{b} \right)^2 \beta_T^2 \log\left(1 + \frac{T}{\lambda d} \right)
  \end{equation*}
  with probability at least $1 - \delta$.
\end{theorem}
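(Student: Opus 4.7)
The plan is to mirror the problem-dependent analysis of ROFUL (Theorem \ref{thm:prob_dep2}), but with a cleaner treatment that exploits the fact that OPLB plays $x_t$ directly out of $\Yc_t^p$, rather than scaling an optimistic direction back into the pessimistic set. Throughout I would condition on the confidence event $\Econf$ from Lemma \ref{lem:conf_sets}, which simultaneously guarantees $\Yc_t^p \subseteq \Yc$ (so every played $x_t$ is truly feasible) and gives the OPLB optimism relation $\hat{\theta}_t^\top x_t + \kappa \beta_t \|x_t\|_{V_t^{-1}} \geq \theta^\top x_*$ with $\kappa = 1 + 2 S_\theta/b$, as established in the proof of Theorem \ref{thm:oplb_conv} in Appendix \ref{apx:convex}.

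The key observation is that no scaled-up proxy $v_t$ is needed (unlike in the ROFUL proof): whenever $x_t$ is in a wrong direction (i.e.\ $x_t \neq \alpha x_*$ for any $\alpha > 0$), $x_t$ itself, being in $\Yc_t^p \subseteq \Yc$, already lies in the infimum set of \eqref{eqn:delt}, so $r_t = \theta^\top(x_* - x_t) \geq \Delta$ directly. This gives $\indic\{\text{wrong direction at round } t\} \cdot \Delta \leq r_t$, and combining with $r_t \geq \Delta$ on those rounds one more time yields $r_t \cdot \indic\{\cdot\} \leq r_t^2/\Delta$. Summing over $t$ gives $\Delta^2 B_T \leq \sum_{t=1}^T r_t^2$.

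For the per-round regret I would invoke the intermediate bound from the OPLB analysis, namely $r_t \leq (1+\kappa) \beta_t \|x_t\|_{V_t^{-1}}$, which is derived en route to Theorem \ref{thm:oplb_conv} (specialized to $n=1$). Plugging this into the display, applying the elliptic potential lemma (Lemma \ref{lem:sp_elip}) to bound $\sum_t \|x_t\|_{V_t^{-1}}^2 \leq 2 d \log(1 + T/(\lambda d))$, and using the crude inequality $(1+\kappa)^2 = (2 + 2 S_\theta/b)^2 \leq 4(1 + 2S/b)^2$ (since $S_\theta \leq S$), then yields
\begin{equation*}
  B_T \;\leq\; \frac{2 d (1+\kappa)^2 \beta_T^2 \log(1 + T/(\lambda d))}{\Delta^2} \;\leq\; \frac{8 d (1 + 2 S/b)^2 \beta_T^2 \log(1 + T/(\lambda d))}{\Delta^2},
\end{equation*}
exactly as stated.

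There is no real obstacle here: the OPLB optimism, the per-round $(1+\kappa)\beta_t\|x_t\|_{V_t^{-1}}$ bound, the gap definition, and the elliptic potential lemma are all already in hand earlier in the paper. The only conceptual point worth flagging is that OPLB's direct selection from $\Yc_t^p$ removes the need for the auxiliary quantity $v_t = \alpha_t x_t$ that was introduced in the ROFUL proof to extract a direction-level gap; instead one can argue with $r_t$ itself, and the slight slack incurred in passing from $\kappa = 1 + 2 S_\theta/b$ to the cleaner $(1 + 2S/b)$ factor is absorbed by the constant $8$ in front.
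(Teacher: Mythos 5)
Your proof is correct and reaches the stated constant, but it takes a genuinely cleaner middle route than the paper's. The paper's proof reuses the machinery from Theorem \ref{thm:prob_dep}: it introduces the scaled-up proxy $v_t = \alpha_t x_t$ (the maximal feasible scaling of $x_t$'s direction), defines $\tilde{r}_t = \theta^\top(x_* - v_t)$ so that $\tilde{r}_t = 0$ on correct-direction rounds and $\tilde{r}_t \geq \Delta$ otherwise, and then must pay an extra term $S\|x_t - v_t\|$, which it controls via the ROFUL scaling bound (the $\gamma_t$ lemma applied to points of $\Yc$ relative to $\Yc_t^p$), arriving at $\tilde{r}_t \leq 2(1+\frac{2S}{b})\beta_t\|x_t\|_{V_t^{-1}}$ before applying the $\tilde{r}_t \leq \tilde{r}_t^2/\Delta$ trick and the elliptic potential. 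Your observation that this detour is unnecessary for OPLB is valid: since $x_t \in \Yc_t^p \subseteq \Yc$ on $\Econf$, a wrong-direction $x_t$ is itself a member of the infimum set in \eqref{eqn:delt}, so $r_t \geq \Delta$ directly, and $\Delta^2 B_T \leq \sum_t r_t^2$ follows with the indicator argument you give. You then only need the per-round bound $r_t \leq (1+\kappa)\beta_t\|x_t\|_{V_t^{-1}}$ already derived for Theorem \ref{thm:oplb_conv} with $n=1$, and the relaxation $(2+\frac{2S_\theta}{b})^2 = 4(1+\frac{S_\theta}{b})^2 \leq 4(1+\frac{2S}{b})^2$ recovers exactly the claimed constant. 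What the paper's route buys is uniformity with the ROFUL problem-dependent proof (where the played action is a deliberately shrunk version of the optimistic direction, so the proxy $v_t$ is more natural); what your route buys is a shorter argument that avoids the scaling lemma entirely and in fact yields a slightly sharper per-round constant before the final relaxation. Both are sound.
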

\begin{proof}
  We condition on $\Econf$ defined in \eqref{eqn:econf} without further mention.
  Consider the the instantaneous directional regret,
  \begin{align*}
    \tilde{r}_t & = \theta^\top (x^* - v_t)\\
    & \leq \hat{\theta}_t^\top x_t + \kappa \beta_t \| x_t \|_{V_t^{-1}} - \theta^\top v_t \\
    & = \theta^\top x_t + (\theta - \hat{\theta}_t)^\top x_t + \kappa \beta_t \| x_t \|_{V_t^{-1}} - \theta^\top v_t \\
    & \leq \theta^\top (x_t - v_t) + (\kappa + 1) \beta_t \| x_t \|_{V_t^{-1}}\\
    & \leq \frac{2 S}{b} \beta_t \| x_t \|_{V_t^{-1}} + (\kappa + 1) \beta_t \| x_t \|_{V_t^{-1}}\\
    & \leq 2(1 + \frac{2 S}{b}) \beta_t \| x_t \|_{V_t^{-1}} ,
  \end{align*}
  where the first inequality uses optimism (with $\kappa = 1 + \frac{2 S}{b}$), the second inequality uses the definition of the confidence set, the third inequality uses Lemma \ref{lem:vt2} (\#\ref{it:vt2}).
  Then, from Lemma \ref{lem:vt2} (\#\ref{it:alph12}, \#\ref{it:alph22}), we know that either $\rtil_t = 0$ if there exists $\alpha > 0$ such that $x_t = \alpha x_*$ or $\rtil_t \geq \Delta$ otherwise.
  Then, using the bound $B_T \leq \tilde{R}_T / \Delta$ and the fact that $\rtil_t \leq \rtil_t^2 / \Delta$, we have that
  \begin{equation*}
    B_T \leq \frac{\tilde{R}_T}{\Delta} = \frac{1}{\Delta} \sum_{t=1}^T \tilde{r}_t \leq \frac{1}{\Delta^2} \sum_{t=1}^{T} (\rtil_t)^2 = \frac{4}{\Delta^2} \left(1 + \frac{2 S}{b} \right)^2 \beta_T^2 \sum_{t=1}^{T}  \| x_t \|_{V_t^{-1}}^2 \leq \frac{1}{\Delta^2} 8 d \left(1 + \frac{2 S}{b} \right)^2 \beta_T^2 \log\left(1 + \frac{T}{\lambda d} \right),
  \end{equation*}
  where the last inequality comes from Lemma \ref{lem:sp_elip}.
\end{proof}

It then immediately follows that it is possible to achieve regret that only depends on $d$ in $\Oc(\polylog(T))$ terms with the same reasoning as Corollary \ref{thm:prob_dep_reg}.

\begin{corollary}
  Let Assumptions \ref{ass:set_bound}, \ref{ass:bounded} and \ref{ass:noise} hold.
  If $\Delta > 0$, consider the algorithm:
  \begin{enumerate}
    \item Play GenOP until any single direction has been played more than $\bar{B} := \frac{1}{\Delta^2} 8 d \left( 1 + \frac{2 S}{b} \right)^2 \beta_T^2 \log\left(1 + \frac{T}{\lambda d} \right)$ times. Let this direction be denoted by $u_*$.
    \item For the remaining rounds, play GenOP (after restarting) for the 1-dimensional safe linear bandit problem of choosing $\xi_t \in \Rb_+$ and then playing~$\xi_t u_*$.
  \end{enumerate}
  Then, with probability at least $1 - 2 \delta$,
  \begin{equation*}
      R_T \leq \frac{8 S}{b} \beta_{2 \bar{B}} \sqrt{ d  \bar{B} \log\left(1 + \frac{2 \bar{B}}{\lambda d} \right)} + \frac{4 S}{b} \tilde{\beta}_{T} \sqrt{ 2 T\log\left(1 + \frac{T}{\lambda d} \right)}
  \end{equation*}
  where $\tilde{\beta}_{T}$ is $\beta_T$ with $d = 1$.
\end{corollary}

\section{Details on numerical experiments}
\label{apx:num_exp}

In this section, we give the details of the numerical experiments that were not included in the body of the paper as well as details on the computing setup and additional results.
The computing hardware specifications are given in Section \ref{sec:comp_hard}.
Then, the details on the simulation results for the settings with linear contraints, linked convex constraints, and star convex multi-armed bandit are given in Sections \ref{sec:lin_const}, \ref{sec:link_const}, and \ref{sec:fin_star}, respectively.

\subsection{Computing hardware}

\label{sec:comp_hard}

All simulations were run on a Lenovo ThinkPad T470 with an Intel Core i7 processor and 16 GB of memory.

\subsection{Linear constraints}

\label{sec:lin_const}

In the first setting (results in Figure \ref{fig:sims:a}), we take $d = 2$ and $T = 5 \times 10^4$, and also take the action set to be a finite star convex set $\Xc = \bigcup_{i \in [10]} \{ \alpha  u_k : \alpha \in [0,1] \}$.
For each trial, we sample $\theta \sim \Uc(\Bb_{\infty})$, $u_k \sim \Uc(\Sb)$ for all $k \in [10]$ (where we resample $\{ u_k \}_k$ until it holds that $\theta^\top x_* > 0$), $b \sim \Uc[0.25,1]$, $a \sim \Uc(\Bb_{\infty})$.
The learner is only given the prior information on these parameters that $\| \theta \| \leq \sqrt{2}$ and $\| a \| \leq \sqrt{2}$.
As such, the algorithm can take $S_a = S_\theta = \sqrt{2}$.
The noise terms are sampled i.i.d as $\eta_t \sim \Nc(\sigma)$ and $\epsilon_t \sim \Nc(\sigma)$, where $\sigma = 0.1$.
The learner is given $\sigma$.
For the regularization parameter, we use $\lambda = 1$ for all algorithms tested.

The second setting (results in Figure \ref{fig:sims:b}) is the same except that $T = 1 \times 10^5$ and $b \sim [0.05,0.25]$.

To implement the algorithms in this setting, we enumerate over the directions in order to calculate the algorithms' updates.
To show how this matches the specified update for GenOP, consider the update specified in \eqref{eqn:oplb},
\begin{equation}
\label{eqn:genop_max}
    \argmax_{x \in \Yc_t^p} \left( \hat{\theta}_{t}^\top x + \kappa \beta_{t} \| x \|_{V_{t}^{-1}} \right).
\end{equation}
Since the pessimistic set can be defined as
\begin{equation*}
    \Yc_t^p = \bigcup_{i \in [10]} \left\{ \alpha  u_i : \alpha \in [0,1], \alpha \left( \hat{a}_t^\top u_i + \beta_t \| u_i \|_{V_t^{-1}} \right) \leq b \right\},
\end{equation*}
and the objective is convex, we know that for each of the line segments, the maximum objective value is attained at the origin or the maximum scaling in that direction.
Therefore, we find a point in \eqref{eqn:genop_max} by optimizing over these points, i.e.
\begin{equation*}
    x_t \in \argmax_{x \in \{ \mathbf{0}, \zeta^p_1 u_1, ..., \zeta^p_{10} u_{10} \}} \left( \hat{\theta}_{t}^\top x + \kappa \beta_{t} \| x \|_{V_{t}^{-1}} \right),
\end{equation*}
where
\begin{equation*}
    \zeta^p_i =  \begin{cases}
     \min(b/(\hat{a}_t^\top u_i + \beta_t \| u_i \|_{V_t^{-1}}), 1), & \text{ if } \hat{a}_t^\top u_i + \beta_t \| u_i \|_{V_t^{-1}} > 0\\
    1 & \text{ else. }
    \end{cases}
\end{equation*}

The calculation used for ROFUL uses a similar idea.
In particular, the optimistic action (line \ref{lne:opt_act} of Algorithm \ref{alg:main_alg}) is calculated as
\begin{equation*}
    \xtil_t \in \argmax_{x \in \{ \mathbf{0}, \zeta^o_1 u_1, ..., \zeta^o_{10} u_{10} \}} \left( \hat{\theta}_{t}^\top x + \beta_{t} \| x \|_{V_{t}^{-1}} \right),
\end{equation*}
where
\begin{equation*}
    \zeta^o_i =  \begin{cases}
     \min(b/(\hat{a}_t^\top u_i - \beta_t \| u_i \|_{V_t^{-1}}), 1), & \text{ if } \hat{a}_t^\top u_i - \beta_t \| u_i \|_{V_t^{-1}} > 0\\
    1 & \text{ else. }
    \end{cases}
\end{equation*}
Then, $\mu_t$ from line \ref{lne:scale} is calculated as
\begin{equation}
\label{eqn:zeta_p}
    \mu_t =  \begin{cases}
     \min(b/(\hat{a}_t^\top \xtil_t + \beta_t \| \xtil_t \|_{V_t^{-1}}), 1), & \text{ if } \hat{a}_t^\top \xtil_t + \beta_t \| \xtil_t \|_{V_t^{-1}} > 0\\
     1 & \text{ else. }
    \end{cases}
\end{equation}

\subsection{Linked convex constraints}

\label{sec:link_const}

In this (results Figure \ref{fig:sims:b}), $d=2$, $n=2$, $\Xc = \Bb$, and $T = 10^5$.
We take $\Gc = b \Bb$, where $b \sim \Uc[0.25,1]$ for each trial.
The constraint matrix and reward vector are randomly sampled, where each row of $A$ is sampled as $a_i \sim \Uc(\Bb_{\infty})$ for all $i \in [n]$ and $\theta \sim \Uc(\Bb_{\infty})$.
The learner is only given the prior information on these parameters that $\| \theta \| \leq \sqrt{2}$ and $\| a_i \| \leq \sqrt{2}$ for all $i \in [n]$.
As such, the algorithms can take $S = \sqrt{2}$ and $r_1 = 0.25$.
The noise terms are sampled i.i.d as $\eta_t \sim \Nc(\sigma I)$ and $\epsilon_t \sim \Nc(\sigma)$, where $\sigma = 0.1$.
The learner is given $\sigma$.
For the regularization parameter, we use $\lambda = 1$ for both algorithms.
We simulate this setting for 30 trials, where different realizations of the problem parameters are used for each trial.
In Figure \ref{fig:sims:b}, the mean of the regret at each round $t$ normalized by square-root $t$ is shown along with the plus-or-minus one standard deviation.

For this setting, we relax the optimistic and pessimistic sets such that they use the 2-norm ball instead of the infinity-ball.
In particular, we use the sets
\begin{equation*}
  \Yc_t^o = \{ x \in \Xc : \hat{A}_t x + \sqrt{n} \beta_t \| x \|_{V_t^{-1}} \Bb  \cap \Gc \neq \emptyset \}
\end{equation*}
and
\begin{equation*}
  \Yc_t^p = \{ x \in \Xc : \hat{A}_t x + \sqrt{n} \beta_t \| x \|_{V_t^{-1}} \Bb  \subseteq \Gc \}.
\end{equation*}

\subsubsection{Additional Results}

\begin{figure}[t]
  \centering{
  \includegraphics[width=0.5\columnwidth]{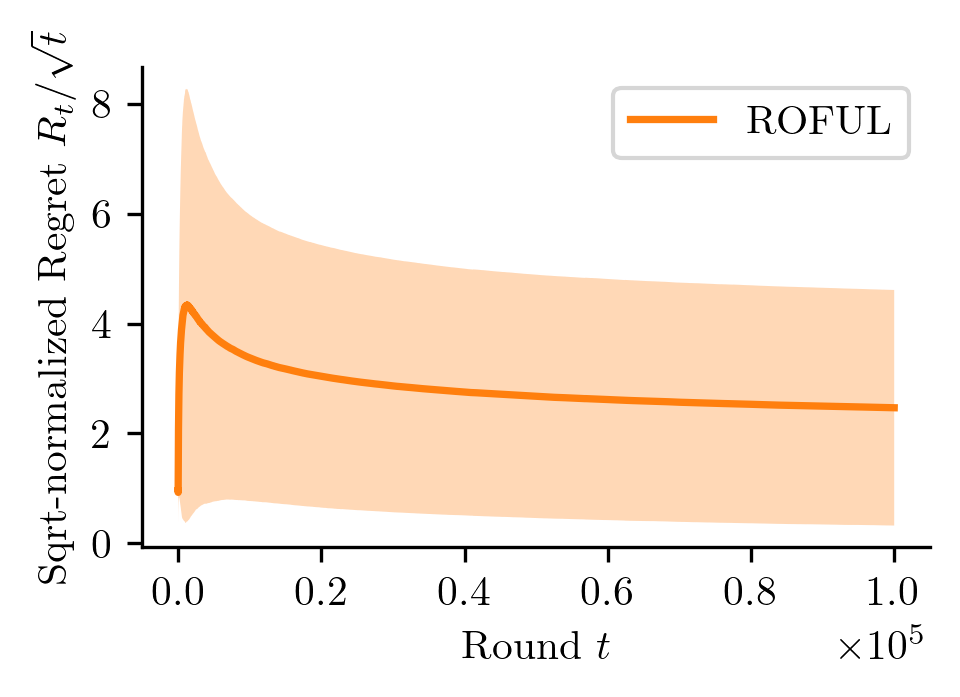}
  }
  \caption{Simulation results for setting with linked convex constraints and box constraints.}
  \label{fig:box}
\end{figure}

We also includes additional results (given in Figure \ref{fig:box}), where everything is the same except that $\Gc = b \Bb_{\infty}$ and $\Xc = \bigcup_{k \in [10]} \{ \alpha  u_k : \alpha \in [0,1] \}$ where $u_k \sim \Uc(\Sb)$ for all $k \in [10]$.
In Figure \ref{fig:box}, the mean of the regret at each round $t$ normalized by square-root $t$ is shown along with plus-or-minus one standard deviation.

\subsection{Star convex multi-armed bandit}

\label{sec:fin_star}

In this setting (results in Figure \ref{fig:sims:d}), the action set only consists of the coordinate directions with scalings between $0$ and $1$.
We set $\theta = a = [1\ 0\ ...\ 0]^\top$ and $b = 0.5$ and use i.i.d. Gaussian noise of standard deviation $0.1$.
In this case, we only give the learner the knowledge that $\| a \|, \| \theta \| \leq S = 2$ because if the learner was given the information that $\| a \| \leq 1$, it would be initially known that the optimal action satisfies the constraint.
In this setting, we simulate ROFUL and Safe-PE for $3$ trials for $d = 10$.

\subsubsection{Additional results}

\begin{figure}[t]
  \centering{
  \includegraphics[width=0.5\columnwidth]{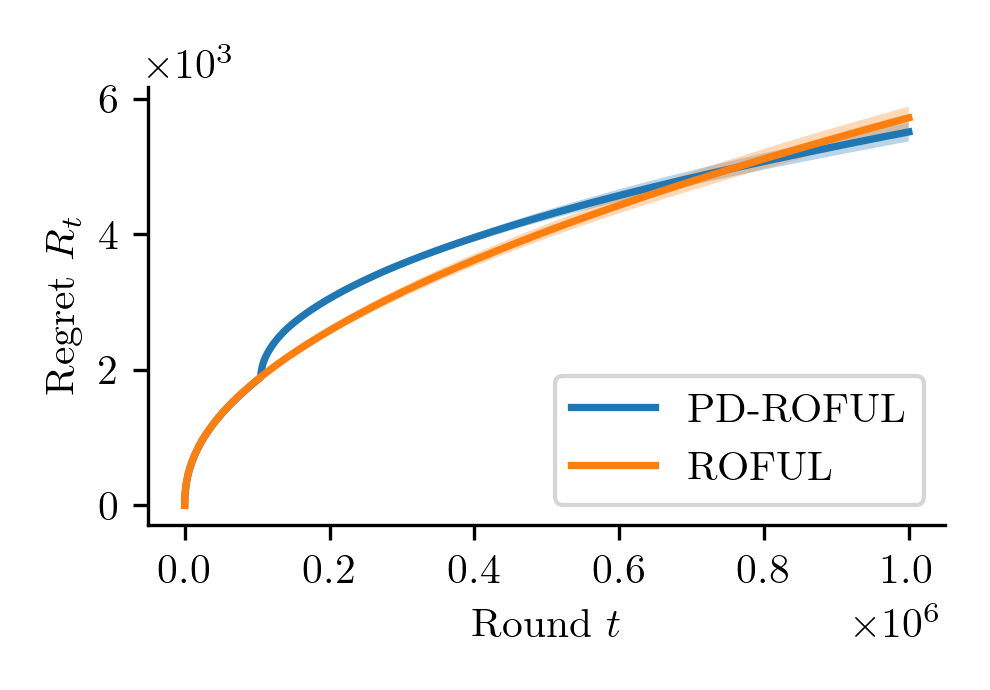}
  }
  \caption{Simulation results for ROFUL and PD-ROFUL in problem with $\Delta > 0$ and known.}
  \label{fig:pd_roful}
\end{figure}

We also simulate PD-ROFUL (Algorithm \ref{alg:pd_oful}) and ROFUL in the same setting with, except with $b = 0.9$ and $S = 1.5$.
We make this modification to make the initial phase of duration less than $T$ so that there is a difference between ROFUL and PD-ROFUL.
We simulate both algorithms for $5$ trials with $d = 10$ and show the mean and standard deviation of the regret in Figure \ref{fig:pd_roful}.

\end{document}